\documentclass[twoside]{article}

\usepackage[accepted]{aistats2023}

\usepackage[utf8]{inputenc} 
\usepackage[T1]{fontenc}    
\usepackage{hyperref}       
\usepackage{url}            
\usepackage{booktabs}       
\usepackage{multirow}
\usepackage{amsfonts}       
\usepackage{nicefrac}       
\usepackage{microtype}      
\usepackage{xcolor}         
\usepackage{graphicx}
\usepackage{subcaption}
\usepackage{appendix}
\usepackage{float}

\usepackage[
  sortcites,
  doi=false,
  url=false,
  isbn=false,
  eprint=true,
  date=year,
  maxbibnames=10,
  mincitenames=1,
  maxcitenames=2,
  uniquename=init,
  giveninits,
  uniquelist=minyear,
  style=authoryear,
  dashed=false
]{biblatex}
\AtEveryBibitem{%
  \clearname{editor}
}
\addbibresource{max-ent-references.bib}

\usepackage{amsmath}
\usepackage{amssymb}
\usepackage{amsthm}
\usepackage{thm-restate}
\usepackage[ruled]{algorithm2e}
\usepackage{tikz}
\usetikzlibrary{positioning, backgrounds, fit}

\declaretheorem[name=Theorem, numberwithin=section]{theorem}

\declaretheorem[name=Definition, sibling=theorem]{definition}

\declaretheorem[name=Assumption, sibling=theorem]{assumption}

\newcommand{\calx}{\mathcal{X}}

\newcommand{\caln}{\mathcal{N}}
\newcommand{\calm}{\mathcal{M}}

\newcommand{\R}{\mathbb{R}}
\newcommand{\xsyn}{X^{Syn}}
\newcommand{\xreal}{X}
\newcommand{\xpred}{X^*}
\newcommand{\sigmadp}{\sigma_{DP}}
\newcommand{\dx}{\mathrm{d}}
\newcommand{\med}{\mathrm{MED}}
\newcommand{\medl}{\med_\theta}
\newcommand{\mul}{\mu(\theta)}
\newcommand{\Sigmal}{\Sigma(\theta)}
\newcommand{\privs}{\tilde{s}}
\newcommand{\privu}{\tilde{u}}

\newcommand{\toydatadelta}{= 2.5\cdot 10^{-7}}
\newcommand{\adultdelta}{\approx 4.7\cdot 10^{-10}}
\newcommand{\uscensusdelta}{\approx 9.7\cdot 10^{-12}}

\makeatletter
\DeclareRobustCommand{\rvdots}{%
  \vbox{
    \baselineskip4\p@\lineskiplimit\z@
    \kern-\p@
    \hbox{.}\hbox{.}\hbox{.}
}}
\makeatother

\title{Noise-Aware Statistical Inference with Differentially Private Synthetic Data}

\makeatletter
\let\inserttitle\@title
\makeatother

\begin{document}

\runningauthor{Ossi Räisä,\, Joonas Jälkö,\, Samuel Kaski,\, Antti Honkela}

\renewcommand{\thefootnote}{\fnsymbol{footnote}}
\twocolumn[

\aistatstitle{Noise-Aware Statistical Inference with Differentially Private Synthetic Data}

\aistatsauthor{ 
  Ossi Räisä \\ University of Helsinki \\ \texttt{ossi.raisa@helsinki.fi}
  \And Joonas Jälkö \\ University of Helsinki\footnotemark[1] \\ \texttt{joonas.jalko@helsinki.fi}
  \AND Samuel Kaski \\ Aalto University, University of Manchester \\ \texttt{samuel.kaski@aalto.fi}
  \And Antti Honkela \\ University of Helsinki \\ \texttt{antti.honkela@helsinki.fi}
}

\aistatsaddress{ 
} 
]

\begin{abstract}
  While generation of synthetic data under differential privacy (DP)
  has received a lot of attention in the data privacy community, 
  analysis of synthetic data has received much less.
  Existing work has 
  shown that simply analysing DP synthetic data as if it were real does not 
  produce valid 
  inferences of population-level quantities. For example, confidence intervals 
  become too narrow, which we demonstrate with a simple experiment.
  We tackle this problem by combining synthetic data analysis 
  techniques from the field of multiple imputation (MI), and
  synthetic data generation using noise-aware (NA) Bayesian modeling into 
  a pipeline NA+MI that allows computing accurate uncertainty estimates for 
  population-level quantities from DP synthetic data.
  To implement NA+MI for discrete data generation using the values of 
  marginal queries, we 
  develop a novel noise-aware synthetic data generation algorithm 
  NAPSU-MQ using the principle of maximum entropy. Our experiments demonstrate 
  that the pipeline is able to produce accurate confidence intervals from DP 
  synthetic data. The intervals become wider with tighter privacy to 
  accurately capture the additional uncertainty stemming from DP noise.
\end{abstract}

\footnotetext[1]{Work done while at Aalto University.}
\renewcommand{\thefootnote}{\arabic{footnote}}

\section{INTRODUCTION}
Availability of data for research is constrained by the dilemma between 
privacy preservation and potential gains obtained from sharing. As a result,
many datasets are kept confidential to mitigate the possibility of 
privacy violations, with access only granted to researchers after a lengthy
approval process, if at all, slowing down research.

One approach to solving the dilemma between free access and confidentiality 
is releasing synthetic data, as proposed by \textcite{rubin1993statistical}.
The idea is that the \emph{data holder} releases a synthetic dataset that 
is based on a real dataset. \emph{Data analysts} can use the synthetic 
dataset instead of the real one for their \emph{downstream analysis}.

The synthetic dataset should maintain population-level statistical 
properties of the original, which are of 
interest to the analysts. Privacy-protection of the synthetic data 
can be guaranteed by employing 
differential privacy (DP)~\autocite{dworkCalibratingNoiseSensitivity2006}, 
which offers provable protection, unlike non-DP synthetic data generation 
methods. 

The analysts of synthetic data should be able to draw valid conclusions 
on the data generating process (DGP) of the real data using the synthetic 
data. An important component of the conclusion in any scientific research 
is estimation of uncertainty, usually in the form of a confidence interval 
or $p$-value. However, as \textcite{wildeFoundationsBayesianLearning2021}
point out, simply using synthetic data as if it were real data only 
allows drawing conclusions about the synthetic data generating process, 
not the real DGP.

\begin{figure*}
  \centering
  \includegraphics[width=\linewidth]{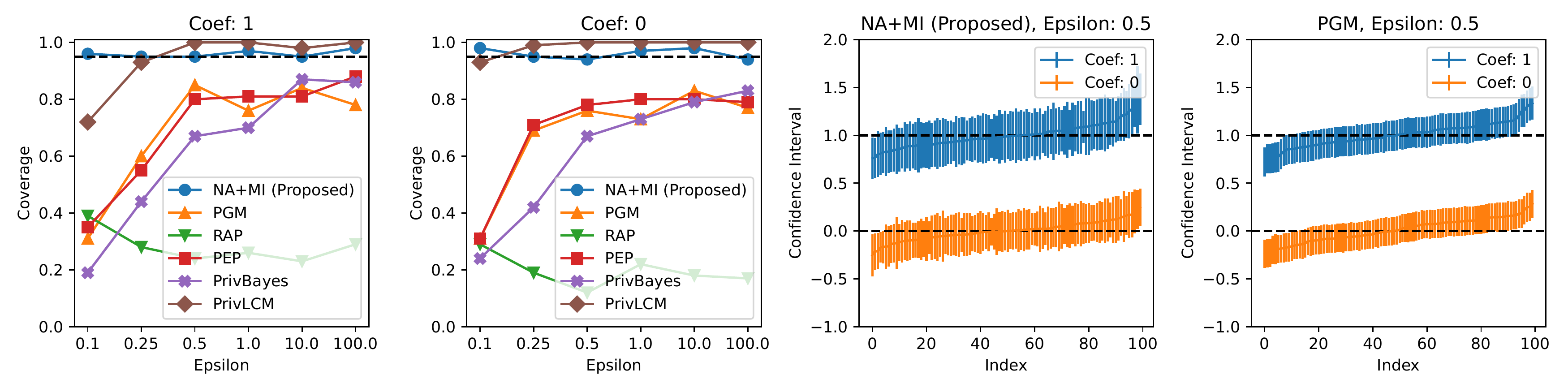}
  \caption{
    Toy data experiment results of logistic regression on 3 binary variables,
    showing that PGM, PEP, RAP and PrivBayes are overconfident, 
    even with almost no privacy ($\epsilon = 100$), PrivLCM is underconfident,
    while our algorithm is well-calibrated.
    The first two panels from the left 
    show the fraction of the 95\% confidence intervals that contain the true 
    parameter value in 100 repeats, with the target confidence level of 95\%
    highlighted by a black line. $\epsilon$ quantifies the strength of the 
    privacy guarantee, ranging from strong ($\epsilon = 0.1$) to meaningless 
    ($\epsilon = 100$). The other privacy parameter is fixed at 
    $\delta \toydatadelta$. The third panel shows the confidence 
    intervals from synthetic data generated by 
    our mechanism for $\epsilon = 0.5$, and the fourth panel shows the confidence 
    intervals from PGM. The third and fourth panels show that the overconfidence 
    stems from intervals that are too narrow, a result of failing to 
    account for all uncertainty.
  }
  \label{fig:toy-data}
\end{figure*}

The issues of using synthetic data in place of real data are especially 
apparent with DP, as DP requires adding noise to the synthetic data generation 
process. We illustrate this with a simple toy data experiment.
We generate 3-dimensional binary data, where one variable is generated 
from logistic regression on the other two, serving as the original dataset. 
Then, we generate synthetic data from the original data and compute confidence 
intervals for the coefficients of the logistic regression from the synthetic 
data. A more detailed description of the setup is given in 
Section~\ref{sec:toy-data-experiment}. As shown by Figure~\ref{fig:toy-data},
the existing algorithms 
PGM~\autocite{mckennaGraphicalmodelBasedEstimation2019},
PEP~\autocite{liuIterativeMethodsPrivate2021}, 
RAP~\autocite{aydoreDifferentiallyPrivateQuery2021}
and PrivBayes~\autocite{zhangPrivBayesPrivateData2017}
do not provide valid confidence intervals, as they treat the synthetic data 
as real data. Only our method NA+MI and 
PrivLCM~\autocite{nixonLatentClassModeling2022} produce valid confidence 
intervals. However, PrivLCM is too conservative, producing much wider intervals,
as shown by Figure~\ref{fig:toy-data-widths}, and does not scale to complex 
datasets.

Our solution to overconfident uncertainty estimates builds on
Rubin's original work on synthetic data generation~\autocite{rubin1993statistical}. 
He proposed generating multiple 
synthetic datasets, running the analysis task on each of them, and combining
the results with simple combining rules called \emph{Rubin's 
rules}~\autocite{raghunathanMultipleImputationStatistical2003,reiter2002satisfying}. 
This workflow is modeled after 
\emph{multiple imputation}~\autocite{rubinMultipleImputationNonresponse1987}, 
where it is used to deal with missing data.
Generating multiple 
synthetic datasets allows the combining rules to account for the additional 
uncertainty that comes from the synthetic data generation process, which includes 
DP noise when the synthetic data is generated using a Bayesian model that 
accounts for the DP noise. We call the combined pipeline of noise-aware (NA) 
private synthetic data generation 
and analysis with multiple imputation (MI) the \emph{NA+MI pipeline}.
We give a more detailed description in Section~\ref{sec:na+mi}.

To implement the NA step, we develop an algorithm called 
\emph{Noise-Aware Private Synthetic data Using Marginal Queries} (NAPSU-MQ),
that generates synthetic data from discrete tabular datasets using the 
noisy values of preselected marginal queries. We describe NAPSU-MQ in 
Section~\ref{sec:generator}. In Section~\ref{sec:experiments} and Supplemental
Section~\ref{app:us-census-experiment},
we evaluate NAPSU-MQ on the UCI Adult and UCI US Census (1990) datasets, 
showing that it can produce accurate confidence intervals. 

\subsection{Related Work}
There is a sizable literature on DP synthetic data generation. Most recent work 
in the area either releases the values of a set of simple queries, such as counting 
queries, under DP and uses them as the basis of synthetic data~\autocite{
  hardtSimplePracticalAlgorithm2012, chenDifferentiallyPrivateHighDimensional2015,
  zhangPrivBayesPrivateData2017, mckennaGraphicalmodelBasedEstimation2019,
  mckennaWinningNISTContest2021a, mckennaAIMAdaptiveIterative2022,
  mckenna2018optimizing, bernsteinDifferentiallyPrivateLearning2017,
  cai2021data, vietriNewOracleEfficientAlgorithms2020, 
  liuIterativeMethodsPrivate2021,
  aydoreDifferentiallyPrivateQuery2021, nixonLatentClassModeling2022,
}, 
or trains some kind of generative model,
often a GAN, using the whole real dataset under DP~\autocite{
  xieDifferentiallyPrivateGenerative2018,
  yoon2018pategan, chenGSWGANGradientsanitizedApproach2020,
  longGPATEScalableDifferentially2021, 
  jalkoPrivacypreservingDataSharing2021,
}.
There are also hybrid approaches that use sophisticated queries that can 
capture all features of the dataset, and train a generative model 
using 
those~\autocite{harderDPMERFDifferentiallyPrivate2021, liewPEARLDataSynthesis2022}.
Of the existing DP synthetic data generation algorithms, NAPSU-MQ is closest 
to the PGM algorithm~\autocite{mckennaGraphicalmodelBasedEstimation2019}, which does 
maximum likelihood estimation with the same data model as NAPSU-MQ instead of 
noise-aware Bayesian inference. We describe this connection in more detail 
in Supplemental Section~\ref{app:napsu-mq-vs-pgm}.

Rubin's rules were originally developed for analyses on missing data, as 
part of an approach called 
multiple imputation~\autocite{rubinMultipleImputationNonresponse1987},
which was later applied to generate and analyse synthetic 
data~\autocite{rubin1993statistical} without DP. The variant of Rubin's rules 
that we use, and describe in Supplemental Section~\ref{app:mi}, was developed 
specifically for synthetic data 
generation~\autocite{raghunathanMultipleImputationStatistical2003, reiter2002satisfying}.
\textcite{raabPracticalDataSynthesis2018} have developed simpler alternatives 
to Rubin's rules under more restrictive assumptions, but these assumptions 
rule out DP data synthesisers.

Rubin's rules have not been widely used with DP synthetic data generation,
and we are only aware of four existing works studying the combination.
\textcite{charestHowCanWe2010} studied Rubin's rules with a very simple 
early synthetic data generation algorithm, and concluded that Rubin's rules 
are not appropriate for that algorithm. 
\textcite{zhengDifferentialPrivacyBayesian2015} found that some 
simple one-dimensional methods developed by the multiple imputation community 
are in fact DP, but not with practically useful privacy bounds.
\textcite{nixonLatentClassModeling2022} propose using Rubin's rules 
with the noise-aware synthetic data generation algorithm PrivLCM, but they only 
consider computing confidence intervals of query values on the real dataset,
not confidence intervals of population parameters of arbitrary downstream analyses.
\textcite{liuModelbasedDifferentiallyPrivate2022} proposes generating multiple 
synthetic datasets like we do, but their pipeline requires splitting the privacy 
budget between each synthetic dataset, severely limiting the number of datasets 
that can be generated with acceptable utility, and their convergence 
theory assumes weak privacy\footnote{The theory applies asymptotically 
when $\epsilon \to \infty$.}.

Noise-aware uncertainty estimates have been 
developed for specific DP analyses. Examples include frequentist linear 
regression~\autocite{evansStatisticallyValidInferences2022} and 
general recipes for DP analyses without synthetic 
data~\autocite{ferrandoParametricBootstrapDifferentially2022,
covingtonUnbiasedStatisticalEstimation2021}.
Bayesian examples include posterior inference for simple exponential family 
models~\autocite{bernsteinDifferentiallyPrivateBayesian2018},
linear regression~\autocite{bernsteinDifferentiallyPrivateBayesian2019}, 
generalised linear models~\autocite{kulkarniDifferentiallyPrivateBayesian2021}, and 
approximate Bayesian computation~\autocite{gongExactInferenceApproximate2022}.
The AIM algorithm for synthetic data 
generation~\autocite{mckennaAIMAdaptiveIterative2022} provides valid confidence 
intervals on the query values of the real dataset, while NAPSU-MQ and the other 
methods mentioned provide confidence intervals on population values.

Several works study techniques for mitigating the effect of DP noise.
\textcite{wildeFoundationsBayesianLearning2021} point out the importance 
of noise-aware synthetic data analysis with DP and use publicly available data to 
augment the analysis and correct for the DP noise in Bayesian inference.
Other examples include bias 
reduction~\autocite{ghalebikesabiBiasMitigatedLearning2021} and averaging 
GANs~\autocite{neunhoefferPrivatePostGANBoosting2021}.

While some of the existing works address uncertainty estimates for specific 
analyses of synthetic data under DP, 
there is no existing method for proper 
uncertainty estimation for general downstream analyses of population-level 
quantities. We fill this gap with the NA+MI pipeline, which we implement for
discrete tabular data with NAPSU-MQ.

\section{THE NA+MI PIPELINE}\label{sec:na+mi}
The early work on synthetic data generation with multiple imputation showed 
that computing accurate uncertainty estimates with synthetic data 
requires accounting for the additional uncertainty that comes from generating the
synthetic 
data~\autocite{rubin1993statistical,raghunathanMultipleImputationStatistical2003}.
\textcite{rubin1993statistical} proposed generating multiple synthetic datasets 
$\xsyn_i$ from the posterior predictive distribution 
$p(\xpred|\xreal)$, where $\xpred$ 
is a prediction of a future dataset, and $\xreal$ is the observed real dataset. 
The downstream analysis is run on each $\xsyn_i$ as if $\xsyn_i$ were the real 
dataset, and the results are combined using specialised combining 
rules~\autocite{raghunathanMultipleImputationStatistical2003}.

The generation of multiple datasets from $p(\xpred|\xreal)$ is necessary to give the 
combining rules a way to estimate the variance of the synthetic data generation 
process, which would not be possible if only a single dataset was generated.
For a parametric model, 
$p(\xpred|\xreal) = \int_\Theta p(\xpred|\theta)p(\theta | \xreal) \dx\theta$, where 
$\theta\in \Theta$ is the parameter, $p(\xpred|\theta)$ is the likelihood, 
and $p(\theta | \xreal)$ is the posterior of the parameter. 
$\xsyn_i$ is then generated in two steps: first $\theta_i\sim p(\theta|\xreal)$ 
is sampled, then $\xsyn_i\sim p(\xpred|\theta_i)$. 

The combining rules require including the posterior distribution 
$p(\theta|X)$~\autocite{raghunathanMultipleImputationStatistical2003} for sampling 
$\xsyn_i$,
so a non-Bayesian model that samples $\xsyn_i\sim p(\xpred|\hat{\theta})$
for some point-estimate $\hat{\theta}$ is not suitable for synthetic 
data generation with Rubin's rules.

Requiring the synthetic data generation to be DP complicates 
the picture, as only a noisy observation $\privs$ of $\xreal$ can be made,
which means that we must use $p(\theta | \privs)$ instead of $p(\theta | \xreal)$.
We call inference algorithms for $p(\theta | \privs)$ that account for the noise 
added for DP \emph{noise-aware}.
The combination of noise-aware inference and multiple imputation is the 
NA+MI pipeline, which we summarise in Figure~\ref{fig:mech-summary}.
First, the data holder runs inference on a noise-aware Bayesian model 
using the private data, which we call the NA step. 
Different implementations of the NA step may set different requirements on 
the form of $\xreal$, the type of DP noise, and may provide different 
privacy guarantees.

\begin{figure*}
  \center
  \begin{tikzpicture}[
    ellipsis/.style={rectangle, align=center, minimum size=1mm},
    ownernode/.style={rectangle, draw=black!40!red, very thick, align=center},
    analystnode/.style={rectangle, draw=black!70, very thick, align=center},
    invisnode/.style={rectangle, align=center},
    node distance=7mm,
    ]
    \node[ownernode] (data) 
    {Data $X$};
    \node[analystnode] (generator) [right=of data] {Noise-aware\\ Generator};

    \node[ellipsis] (syndatadots) [right=1.5cm of generator] {$\rvdots$};
    \node[analystnode] (syndatatop) [above=1mm of syndatadots] {Synthetic\\ Data $\xsyn_1$};
    \node[analystnode] (syndatabot) [below=1mm of syndatadots] {Synthetic\\ Data $\xsyn_m$};

    \node[ellipsis] (analysisdots) [right=2.4cm of syndatadots] {$\rvdots$};
    \node[analystnode, minimum width=23mm] (analysistop) [right=of syndatatop] {Analysis\\ Result $q_1, v_1$};
    \node[analystnode, minimum width=23mm] (analysisbot) [right=of syndatabot] {Analysis\\ Result $q_m, v_m$};

    \node[analystnode] (result) [right=of analysistop, yshift=-9mm] {Combined\\ Result\\ $t_\nu(\bar{q}, T^*)$};

    \node[invisnode] (privbarriertop) [right=of data, xshift=-4mm, yshift=14mm] {};
    \node[invisnode] (privbarrierbot) [right=of data, xshift=-4mm, yshift=-14mm] {Privacy Barrier};
    
    \draw[->, thick] (data) -- (generator);
    \draw[->, thick] (generator) -- (syndatatop);
    \draw[->, thick] (generator) -- (syndatabot);
    \draw[->, thick] (syndatatop) -- (analysistop);
    \draw[->, thick] (syndatabot) -- (analysisbot);
    \draw[->, thick] (analysistop) -- (result);
    \draw[->, thick] (analysisbot) -- (result);

    \draw[dashed, very thick] (privbarriertop.west) -- (privbarrierbot.west);

    \begin{scope}[on background layer]
      \node [
        rounded corners=3mm, fill=blue!160, opacity=0.1, 
        fit=(data) (generator) (syndatatop) (syndatabot), 
        label=above:Data Holder (NA step)
      ] {};
      \node [
        rounded corners=3mm, fill=orange!150, opacity=0.1, 
        fit=(analysistop) (analysisbot) (result) (syndatatop) (syndatabot), 
        label=above:Data Analyst (MI step)
      ] {};
    \end{scope}
  \end{tikzpicture}
  \caption{
    NA+MI pipeline for noise-aware DP synthetic data generation and 
    statistical inference. The nodes shaded in blue are computed by the data holder,
    and the nodes shaded in orange are computed by the data analyst. All nodes 
    except Data (with red border) can be released to the public. The 
    synthetic datasets can be generated by either party because the Generator is 
    also released by the data holder.
  }
  \label{fig:mech-summary}
\end{figure*}
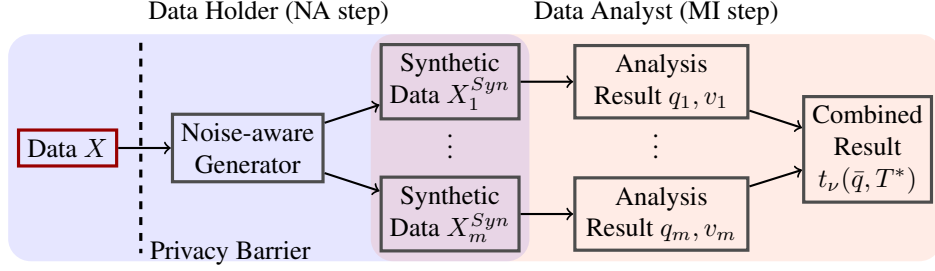

After the inference, the data holder generates multiple synthetic 
datasets.
The data holder can also release the posterior distribution in addition to 
the synthetic datasets, so that the analyst can also generate synthetic 
datasets if needed. Due to the post-processing immunity of DP 
(Theorem~\ref{thm:post-processing-immunity}), releasing multiple synthetic 
datasets, or the posterior distribution, does not compromise privacy.

For each synthetic dataset, the analyst runs their analysis, and combines 
the results using multiple 
imputation~\autocite{
  raghunathanMultipleImputationStatistical2003, reiter2002satisfying,
  reiter2005significance,
}. 
We call this the MI step.
For frequentist downstream analyses, we use Rubin's rules, which
require that each analysis produces a point estimate $q_i$, and a variance 
estimate $v_i$ for the point estimate. The point estimates $q_1, \dotsc, q_m$ and 
the variance estimates $v_1,\dotsc, v_m$ are fed to Rubin's 
rules~\autocite{raghunathanMultipleImputationStatistical2003}, which give 
a $t$-distribution for the estimated quantity that the analyst can use 
to compute confidence intervals or 
hypothesis tests. We describe Rubin's rules in more detail in Supplemental 
Section~\ref{app:mi}.

\section{BACKGROUND FOR NAPSU-MQ}\label{sec:background}
In this section we describe the datasets and queries NAPSU-MQ uses, and 
briefly describe key concepts from differential privacy,
which we will use in Section~\ref{sec:generator}. 

\paragraph{Data and Marginal Queries}
NAPSU-MQ uses tabular datasets of $d$ discrete variables, where the domains of 
the variables, and the number datapoints $n$ are known.
We denote the set of possible datapoints by $\calx$, and the set of 
possible datasets by $\calx^n$. For a set of variables $I$ and $x\in \calx$, 
$x^{(I)}$ denotes the selection of the variables in $I$ from $x$.

\begin{definition}
  A marginal query of variables $I$ and value $v$ is a function 
  $a\colon \calx\to \{0, 1\}$ that takes a datapoint $x$ as input and returns 1 
  if $x^{(I)} = v$ and 0 otherwise.
  For a dataset $X\in \calx^n$, we define $a(X) = \sum_{i=1}^n a(x_i)$,
  where $x_i$ is the $i$:th datapoint in $X$.
\end{definition}
When $I$ has $k$ variables, $a$ is called a $k$-way marginal query.

When evaluating multiple marginal queries $a_1, \dotsc, a_{n_q}$, we 
concatenate their values to a vector-valued function 
$a\colon \calx\to \{0, 1\}^{n_q}$.
We call the concatenation of marginal queries for all possible values of variables 
$I$ the \emph{full set of marginals on $I$}~\footnote{
  Some existing works~\autocite{mckennaGraphicalmodelBasedEstimation2019} 
  use the term marginal query for the full set of marginal queries.
  We chose this terminology because we deal with individual marginal queries 
  in Supplemental Section~\ref{sec:canon-queries}.
}.

As a concrete example, take the 3-dimensional binary data used in 
Figure~\ref{fig:toy-data}. A single 2-way marginal query could, for example, 
look at the first two variables of a datapoint, and check that both are 0.
The full set of marginal queries on the first two variables checks 
which of the 4 possible values the first two variables of a datapoint have, 
and returns a 4-component vector with a single 1 and 3 zeros, indicating 
the answer. As input to NAPSU-MQ, we could use the concatenation of the 
3 full sets of marginal queries for each pair of variables\footnote{
  Our experiments for the toy dataset use the full set of 3-way marginals 
  that includes all 3 variables.
}.
Including these kinds of marginals with more than one variable allows NAPSU-MQ 
to take the dependencies between variables into account.

\paragraph{Differential Privacy}
Differential privacy 
(DP)~\autocite{dworkCalibratingNoiseSensitivity2006, dworkOurDataOurselves2006}
is a definition aiming to quantify the privacy loss resulting from releasing 
the results of some algorithm. DP algorithms are also called \emph{mechanisms}.

\begin{definition}
  A mechanism $\calm$ is $(\epsilon, \delta)$-differentially private if 
  for all neighbouring datasets $X, X'$ and all measurable output sets $S$
  \begin{equation}
    P(\calm(X)\in S) \leq e^\epsilon P(\calm(X')\in S) + \delta.
  \end{equation}
\end{definition}
The neighbourhood relation in the definition is domain 
specific. We use
the \emph{substitute} neighbourhood relation for tabular datasets, 
where datasets are neighbouring if they differ in at most one datapoint.

Together, $\epsilon$ and $\delta$ bound the tradeoff between false positive and 
false negative rates for any hypothesis 
test~\autocite{kairouzCompositionTheoremDifferential2015}. Their choice is a 
matter of policy~\autocite{dworkDifferentialPrivacySurvey2008}.
$\delta \approx \frac{1}{n}$ permits mechanisms that clearly violate 
privacy~\autocite{dworkAlgorithmicFoundationsDifferential2014}, so one should 
choose $\delta \ll \frac{1}{n}$.

The mechanism we use to release marginal query values under DP 
is the \emph{Gaussian mechanism}~\autocite{
  dworkOurDataOurselves2006, balleImprovingGaussianMechanism2018,
}.
\begin{definition}
  The Gaussian mechanism with noise variance $\sigmadp^2$ adds Gaussian
  noise to the value of a function $f\colon \calx^n\to \R^{k}$ for input data $X$:
  $\calm(X) = f(X) + \caln(0, \sigmadp^2 I)$.
\end{definition}

The privacy bounds of the Gaussian mechanism depend on the \emph{sensitivity}
of the function $f$, which is an upper bound on the change in the 
value of $f$ for neighbouring datasets. Larger sensitivities require a larger 
noise variance.
\begin{definition}
  The $L_2$-sensitivity of a function $f$ is 
  $\Delta_2 f = \sup_{X\sim X'} ||f(X) - f(X')||_2$.
  $X\sim X'$ denotes that $X$ and $X'$ are neighbouring.
\end{definition}

The sensitivity of a concatenation of full sets of marginal queries has a 
simple form:
\begin{restatable}{theorem}{theoremmarginalsensitivity}\label{thm:marginal-sensitivity}
  Let $a$ be the concatenation of $n_s$ full sets of marginal queries.
  Then $\Delta_2 a \leq \sqrt{2n_s}$.
\end{restatable}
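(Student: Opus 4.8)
The plan is to unfold the definition of $L_2$-sensitivity and exploit the additive, one-hot structure of full sets of marginals under the substitute neighbourhood relation. First I would fix neighbouring datasets $X\sim X'$; under the substitute relation they agree in all but at most one datapoint, say $X$ contains $x$ where $X'$ contains $x'$. Since $a(X) = \sum_{i=1}^n a(x_i)$ is additive over datapoints, every coordinate coming from the agreeing datapoints cancels, so $a(X) - a(X') = a(x) - a(x')$. This reduces the whole problem to bounding the per-datapoint difference $||a(x) - a(x')||_2$.

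Next I would analyse one full set of marginals on a variable set $I$ in isolation. For a single datapoint $x$, the corresponding block of $a(x)$ is the indicator vector over the possible values of $x^{(I)}$: it has a single $1$ in the coordinate indexed by the realised value $x^{(I)}$ and $0$ everywhere else, i.e.\ it is a one-hot vector. Hence the contribution of this block to $a(x) - a(x')$ is a difference of two one-hot vectors, which is the zero vector when $x^{(I)} = x'^{(I)}$ and otherwise has exactly one $+1$ entry and one $-1$ entry. In both cases the squared $L_2$ norm of this block is at most $2$.

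Finally I would reassemble the blocks. The function $a$ is the concatenation of $n_s$ such full sets of marginals, living on disjoint coordinate ranges, so by the Pythagorean identity $||a(x) - a(x')||_2^2$ equals the sum of the squared norms of the $n_s$ blocks, each of which is at most $2$. Therefore $||a(X) - a(X')||_2^2 \le 2 n_s$, and taking the supremum over all neighbouring pairs and a square root gives $\Delta_2 a \le \sqrt{2 n_s}$.

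I do not expect a genuine obstacle here: the argument is essentially bookkeeping plus Pythagoras. The one point that deserves care is the observation that each datapoint contributes a one-hot vector to each full set of marginals, so that a single substitution perturbs each block by at most $\sqrt 2$ in $L_2$ norm, and that the blocks occupy disjoint coordinates so their squared norms simply add. It is also worth noting in passing that the bound is attained whenever there exist two datapoint values differing in every one of the $n_s$ selected variable sets, so $\sqrt{2 n_s}$ is tight in general.
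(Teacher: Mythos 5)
Your proof is correct and follows essentially the same route as the paper's: decompose $a$ into the $n_s$ full-marginal blocks, note each block is one-hot per datapoint so a single substitution changes its value by at most $\sqrt{2}$ in $L_2$, and sum the squared block norms to get $\sqrt{2n_s}$. Your explicit reduction to a single substituted datapoint and the tightness remark are slightly more detailed than the paper's write-up, but the argument is the same.
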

\begin{proof}
  We defer the proof to Supplemental Section~\ref{app:proofs}.
\end{proof}

\begin{theorem}[\textcite{balleImprovingGaussianMechanism2018}]\label{thm:gauss-mech-dp}
  The Gaussian mechanism for a function $f$ with $L_2$-sensitivity 
  $\Delta_2$ and noise variance $\sigmadp^2$ is $(\epsilon, \delta)$-DP with 
  \begin{equation}
    \delta \geq \Phi\left(\frac{\Delta_2}{2\sigmadp} 
    - \frac{\epsilon \sigmadp}{\Delta_2}\right) 
    - e^\epsilon\Phi\left(\frac{-\Delta_2}{2\sigmadp} 
    - \frac{\epsilon \sigmadp}{\Delta_2}\right)
  \end{equation}
  where $\Phi$ is the cumulative distribution function of the standard Gaussian 
  distribution.
\end{theorem}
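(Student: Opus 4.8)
The plan is to characterize the tightest $\delta$ for a given $\epsilon$ through the privacy loss random variable, reduce the multivariate Gaussian comparison to a one-dimensional one using spherical symmetry, and then evaluate the resulting Gaussian tail probabilities explicitly. First I would recall that $(\epsilon,\delta)$-DP is equivalent to requiring, for every pair of neighbouring $X,X'$, that $\delta \geq \sup_S [P(\calm(X)\in S) - e^\epsilon P(\calm(X')\in S)]$, the hockey-stick divergence at level $e^\epsilon$. A Neyman--Pearson argument shows this supremum is attained by the likelihood-ratio set $S^* = \{y : p_{\calm(X)}(y) > e^\epsilon p_{\calm(X')}(y)\} = \{y : L(y) > \epsilon\}$, where $L(y) = \log(p_{\calm(X)}(y)/p_{\calm(X')}(y))$ is the \emph{privacy loss}, since on $S^*$ the integrand $p_{\calm(X)} - e^\epsilon p_{\calm(X')}$ is exactly its positive part.

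Next I would reduce to one dimension. Writing $\mu = f(X)$ and $\mu' = f(X')$, the two mechanism outputs $\caln(\mu, \sigmadp^2 I)$ and $\caln(\mu', \sigmadp^2 I)$ differ only along the direction $\mu - \mu'$. Rotating coordinates so that this direction is a single axis, the privacy loss depends only on that coordinate, collapsing the problem to comparing $\caln(0, \sigmadp^2)$ and $\caln(\Delta, \sigmadp^2)$ on $\R$ with $\Delta = ||\mu - \mu'||_2$. There the privacy loss is linear, $L(y) = (\Delta^2 - 2y\Delta)/(2\sigmadp^2)$, so the event $L(y) > \epsilon$ is a half-line $\{y < c\}$ with $c = \Delta/2 - \sigmadp^2\epsilon/\Delta$. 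Evaluating the Gaussian probabilities of this half-line gives $P(\calm(X)\in S^*) = \Phi(c/\sigmadp) = \Phi(\Delta/(2\sigmadp) - \epsilon\sigmadp/\Delta)$ and $P(\calm(X')\in S^*) = \Phi((c-\Delta)/\sigmadp) = \Phi(-\Delta/(2\sigmadp) - \epsilon\sigmadp/\Delta)$, and subtracting reproduces the stated expression with $\Delta$ in place of $\Delta_2$.

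The main obstacle is the worst-case step: the bound must hold over all neighbouring pairs, whose distance $\Delta$ ranges in $[0, \Delta_2]$, so I need that $\delta(\epsilon,\Delta) = \Phi(\Delta/(2\sigmadp) - \epsilon\sigmadp/\Delta) - e^\epsilon\Phi(-\Delta/(2\sigmadp) - \epsilon\sigmadp/\Delta)$ is nondecreasing in $\Delta$, so that substituting the maximal value $\Delta = \Delta_2$ upper bounds it for every pair. I would verify this by differentiating in $\Delta$; writing $a,b$ for the two arguments of $\Phi$, the key identity $b^2 - a^2 = 2\epsilon$ yields $e^\epsilon\phi(b) = \phi(a)$ (with $\phi$ the standard Gaussian density), and after this cancellation the derivative collapses to $\phi(a)/\sigmadp > 0$, establishing monotonicity. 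I would also state the Neyman--Pearson optimality of $S^*$ carefully, since it is what makes the computed $\delta$ the tightest admissible value and hence guarantees that the claimed privacy level genuinely holds.
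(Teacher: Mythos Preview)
The paper does not actually prove this theorem; it is stated as a quoted result from \textcite{balleImprovingGaussianMechanism2018} and used as a black box. Your proposal is correct and reproduces essentially the argument of that reference: the hockey-stick characterisation of $(\epsilon,\delta)$-DP, Neyman--Pearson optimality of the likelihood-ratio set, reduction to one dimension via rotational invariance of the spherical Gaussian, the explicit computation of the two half-line probabilities, and the monotonicity of the resulting expression in $\Delta$ via the identity $e^\epsilon\phi(b)=\phi(a)$. There is nothing to add relative to the paper itself.
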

If $\epsilon$, $\delta$ and $\Delta_2$ are given, which is typical, 
$\sigmadp^2$ can be solved from Theorem~\ref{thm:gauss-mech-dp} using 
standard numerical methods.

An important property of DP is post-processing immunity: post-processing 
the result of a DP-algorithm does not weaken the privacy bounds.
\begin{theorem}[\textcite{dworkAlgorithmicFoundationsDifferential2014}]\label{thm:post-processing-immunity}
  Let $\calm$ be an $(\epsilon, \delta)$-DP mechanism, and let 
  $f$ be any algorithm. Then the composition $f\circ \calm$ is 
  $(\epsilon, \delta)$-DP. 
\end{theorem}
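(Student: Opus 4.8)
The plan is to verify the defining inequality of $(\epsilon,\delta)$-DP for $f\circ\calm$ directly, reducing it to the assumed guarantee for $\calm$. That is, I would fix an arbitrary pair of neighbouring datasets $X\sim X'$ and an arbitrary measurable output set $S$ in the range of $f\circ\calm$, and aim to show
\[
  P((f\circ\calm)(X)\in S)\leq e^\epsilon P((f\circ\calm)(X')\in S)+\delta.
\]
I would split into the deterministic and randomised cases for $f$.

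First, suppose $f$ is deterministic and measurable. For the fixed set $S$, I would introduce the preimage $T:=f^{-1}(S)=\{y:f(y)\in S\}$, which is measurable in the output space of $\calm$ precisely because $f$ is measurable. The key observation is that the events $\{(f\circ\calm)(X)\in S\}$ and $\{\calm(X)\in T\}$ are identical, so $P((f\circ\calm)(X)\in S)=P(\calm(X)\in T)$, and similarly for $X'$. Applying the $(\epsilon,\delta)$-DP guarantee of $\calm$ to the measurable set $T$ gives $P(\calm(X)\in T)\leq e^\epsilon P(\calm(X')\in T)+\delta$, which is exactly the target inequality after substituting back. Since $S$ was arbitrary, this settles the deterministic case.

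Next, to handle a general randomised $f$, I would represent it as a deterministic map supplied with independent auxiliary randomness: write $f(y)=g(y,\omega)$, where $\omega$ is drawn from a probability space independent of $\calm$ and $g$ is deterministic and measurable. Conditioning on $\omega=w$ reduces each slice to the deterministic case already handled, yielding for every fixed $w$ that
\[
  P(g(\calm(X),w)\in S)\leq e^\epsilon P(g(\calm(X'),w)\in S)+\delta.
\]
Integrating both sides against the distribution of $\omega$, and using independence so that the internal randomness of $f$ does not interact with the neighbouring-dataset comparison, recovers the inequality for $f\circ\calm$.

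The only real subtlety, and the step I expect to require the most care, is the measure-theoretic bookkeeping in the randomised case: justifying the representation $f(y)=g(y,\omega)$, checking measurability of $g$, and confirming that the DP inequality survives integration against the measure of $\omega$. That last point holds because the inequality has the affine form $a\leq e^\epsilon b+\delta$, which is stable under convex combinations and hence under expectation. Everything else is a direct substitution into the DP definition for $\calm$, so I do not anticipate any genuine obstacle beyond this bookkeeping.
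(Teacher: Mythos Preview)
Your argument is correct and is exactly the standard proof of post-processing immunity: pull the event $S$ back through $f$ in the deterministic case, then handle randomised $f$ by conditioning on its auxiliary randomness and averaging, using that the DP inequality is affine and hence preserved under mixtures. There is nothing to critique in the logic; the only caveat you yourself flag, namely the measure-theoretic representation of a randomised $f$ as $g(y,\omega)$ with independent $\omega$, is routine under the usual standing assumption that all spaces are standard Borel.

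As for the comparison: the paper does not actually prove this theorem. It is stated with a citation to \textcite{dworkAlgorithmicFoundationsDifferential2014} and used as a black box, so there is no in-paper proof to compare against. Your proof is essentially the one given in that reference (Proposition~2.1 there), so in that sense you have reproduced the cited argument rather than diverged from it.
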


Because of post-processing immunity, we can release marginal query values 
with the Gaussian mechanism to obtain privacy bounds, and make arbitrary use of 
the noisy query values without weakening the privacy bounds. This allows 
releasing an arbitrary number of synthetic datasets without compromising on 
privacy.

\section{NOISE-AWARE SYNTHETIC DATA GENERATION}\label{sec:generator}
In order to implement the NA step, the data holder needs 
to generate synthetic data from the posterior of a noise-aware Bayesian model. 
\textcite{bernsteinDifferentiallyPrivateBayesian2018} develop 
noise-aware Bayesian inference under DP for simple exponential family models,
with noise added to sufficient statistics.
However, their algorithm requires both computing unnormalised densities 
for the model, and sampling from the model's conjugate prior. In our setting,
neither of these is trivial.

We implement the NA step by generalising their algorithm to use arbitrary marginal 
queries as the sufficient statistics, using the maximum entropy 
distribution of the marginal queries as our 
exponential family model. We develop a computationally feasible method to 
compute the unnormalised density of this model, and sidestep the conjugate 
prior sampling by using standard posterior sampling methods, which also allows 
us to use a non-conjugate prior.

We start by considering an arbitrary set of marginal queries $a$.
We would like to find an exponential family distribution that, in expectation, 
gives the same answers to $a$ as the real data $\xreal$. We do not want to 
assume anything else about the distribution besides these expected query 
values, so we use the principle of maximum 
entropy~\autocite{jaynesInformationTheoryStatistical1957} to choose the distribution.

The distribution with maximal entropy that satisfies the constraint 
$E_{\xpred\sim P}(a(\xpred)) = a(X)$ is
\begin{equation}
  P(x) = \exp(\theta^T a(x) - \theta_0(\theta))
\end{equation}
for some parameters 
$\theta\in \R^k$~\autocite{wainwrightGraphicalModelsExponential2008}, where 
$k$ is the number of queries.
$\theta_0(\theta)$ is the normalising constant of the distribution, so 
it is given by 
\begin{equation}
  \theta_0(\theta) = \ln\left(\sum_{x\in \calx}\exp(\theta^T a(x))\right).
\end{equation}
We denote this distribution by $\medl$, and use $\medl^n$ to denote 
the distribution of $n$ i.i.d. samples from $\medl$.

$\medl$ is clearly an exponential family distribution, with sufficient 
statistic $a(x)$ and natural parameters $\theta$.
$\medl$ is also a Markov network~\autocite{koller2009probabilistic},
given in log-linear form.

The Bayesian model we consider is derived from the generative process of 
the noisy query values, which are observed. Assuming that 
the data generating process is $\medl$,
and knowing that the Gaussian mechanism adds noise with variance 
$\sigma_{DP}^2$, we get the probabilistic model
\begin{align}
  \theta &\sim \mathrm{Prior}, \quad
  &\xreal &\sim \medl^n, \quad
  \\s &= a(\xreal), \quad
  &\privs &\sim \caln(s, \sigmadp^2I).
\end{align}
In principle, we could now sample from the posterior $p(\theta\mid \privs)$, 
with $s$ marginalised out. In practice, the marginalisation is not feasible, 
as $s$ is a discrete variable with a very large domain. 

However, $s$ is a sum of the query values for individual data\-points, 
so asymptotically $s$ has a normal distribution by the 
central limit theorem. We can substitute the normal approximation for 
$s$ into the model, which allows us to easily marginalise $s$ out, resulting in
\begin{align}
  \theta &\sim \mathrm{Prior}, \quad
  \privs \sim \caln(n\mul, n\Sigmal + \sigmadp^2I),
\end{align}
where $\mul$ and $\Sigmal$ denote the mean and covariance 
of $a(x)$ for a single sample $x\sim \medl$.

To compute $\mul$ and $\Sigmal$, we use both the exponential 
family and Markov network structure of $\medl$. Due to the exponential family 
structure,
\begin{align}
  \mul = \nabla \theta_0(\theta), \quad \quad
  \Sigmal = H_{\theta_0}(\theta),
\end{align}
where $H_{\theta_0}$ is the Hessian of $\theta_0$.
Computing $\theta_0$ naively requires summing over the exponentially large 
domain $\calx$, which is not tractable for complex domains. The Markov network 
structure gives a solution: $\theta_0$ can be computed with the variable 
elimination algorithm~\autocite{koller2009probabilistic}. We can then autodifferentiate 
variable elimination to compute $\mul$ and $\Sigmal$. Alternatively, 
$\mul$ can be computed by belief propagation~\autocite{koller2009probabilistic}, 
and $\Sigmal$ can be 
autodifferentiated from it. 

For the prior, we choose another Gaussian distribution with mean 0 and 
standard deviation 
10, which is a simple and weak prior, but other priors could be used.

To sample the posterior, we use existing posterior inference methods, 
namely the Laplace approximation~\autocite{gelmanBayesianDataAnalysis2014},
which approximates the posterior with a Gaussian distribution centered 
at the posterior mode, and 
the NUTS algorithm~\autocite{hoffmanNoUTurnSamplerAdaptively2014}, which 
is a \emph{Markov chain Monte Carlo} (MCMC) algorithm that samples the 
posterior directly using the gradients of the posterior log-density.

The time complexities of computing $\mul$ and $\Sigmal$ for inference, as well 
as sampling $\medl$ after inference,
depend on the sparsity\footnote{In a sparse graph, the fraction of pairs of nodes with an edge 
between them is small.} of the Markov network graph that the selected set of 
queries induces. Specifically, the time complexities are exponential in the 
\emph{tree width} of the graph~\autocite{koller2009probabilistic}, which 
can be much lower than the dimensionality for sparse graphs, making 
inference and sampling tractable for sparse queries.

If we include all possible marginal queries from the selected variable sets,
the parametrisation of $\medl$ is not identifiable, as there are 
linear dependencies among the queries~\autocite{koller2009probabilistic}. 
Nonidentifiablity causes NUTS 
sampling to be very slow, so we prune the queries to remove linear dependencies 
while preserving the information in the queries 
using the \emph{canonical parametrisation} for 
$\medl$~\autocite{koller2009probabilistic}.
We give a detailed description of the process in 
Supplemental Section~\ref{sec:canon-queries}.

\subsection{NAPSU-MQ Properties}

We summarise NAPSU-MQ in Algorithm~\ref{alg:napsu-mq}. The privacy bounds for 
NAPSU-MQ follow from the material of Section~\ref{sec:background}:
\begin{theorem}
  NAPSU-MQ (Algorithm~\ref{alg:napsu-mq}) is $(\epsilon, \delta)$-DP with 
  regards to the real data $\xreal$.
\end{theorem}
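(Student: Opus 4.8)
The plan is to show that the only step of NAPSU-MQ that touches the real data $\xreal$ is the release of the noisy marginal query vector $\privs$ via the Gaussian mechanism, and that everything after that is post-processing. First I would recall that the algorithm computes $s = a(\xreal)$ where $a$ is the concatenation of $n_s$ full sets of marginal queries, and then releases $\privs \sim \caln(s, \sigmadp^2 I)$. By Theorem~\ref{thm:marginal-sensitivity}, $\Delta_2 a \leq \sqrt{2 n_s}$, so the noise level $\sigmadp$ chosen by the algorithm (solved from Theorem~\ref{thm:gauss-mech-dp} for the target $\epsilon, \delta$ and this sensitivity bound) makes the Gaussian mechanism $X \mapsto \privs$ satisfy $(\epsilon, \delta)$-DP by Theorem~\ref{thm:gauss-mech-dp}. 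A small point to check here: Theorem~\ref{thm:gauss-mech-dp} is stated for sensitivity exactly $\Delta_2$, whereas we only have an upper bound; I would note that using a noise scale calibrated to an upper bound on the sensitivity only helps privacy, so the $(\epsilon,\delta)$ guarantee still holds.

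Next I would argue that the remaining outputs of Algorithm~\ref{alg:napsu-mq} — the posterior samples of $\theta$ obtained by Laplace approximation or NUTS from $p(\theta \mid \privs)$, and the synthetic datasets $\xsyn_i$ drawn via $\theta_i \sim p(\theta \mid \privs)$ then $\xsyn_i \sim \medl^n_{\theta_i}$ — are all computed as (possibly randomized) functions of $\privs$ alone, together with public information (the domain $\calx$, the count $n$, the query structure $a$, the prior, and internal randomness independent of $\xreal$). Hence the full map from $\xreal$ to the algorithm's output factors as $f \circ \calm$ where $\calm$ is the Gaussian mechanism above and $f$ collects all the post-processing. Randomized post-processing is covered by Theorem~\ref{thm:post-processing-immunity} (one can view the internal randomness as an auxiliary independent input, or invoke the standard fact that post-processing immunity extends to randomized maps). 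Therefore the composition is $(\epsilon, \delta)$-DP with respect to $\xreal$, which is the claim.

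The main obstacle, such as it is, is bookkeeping rather than mathematics: one must verify that no step of Algorithm~\ref{alg:napsu-mq} other than the Gaussian mechanism reads $\xreal$ — in particular that the query pruning / canonical parametrisation step depends only on the chosen variable sets and not on the data, and that the choice of $\sigmadp$ depends only on $\epsilon, \delta$ and the sensitivity bound, not on $\xreal$. Once that is confirmed, the result is an immediate consequence of Theorem~\ref{thm:gauss-mech-dp} and Theorem~\ref{thm:post-processing-immunity}.
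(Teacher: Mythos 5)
Your proposal is correct and follows essentially the same argument as the paper's proof: the output depends on $\xreal$ only through $\privs$, whose release is $(\epsilon,\delta)$-DP by the sensitivity bound (Theorem~\ref{thm:marginal-sensitivity}) and the Gaussian mechanism calibration (Theorem~\ref{thm:gauss-mech-dp}), and everything else is post-processing covered by Theorem~\ref{thm:post-processing-immunity}. Your added remarks on calibrating to an upper bound of the sensitivity and on randomized post-processing are correct refinements that the paper leaves implicit.
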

\begin{proof}
  The returned values of Algorithm~\ref{alg:napsu-mq} only depend on the 
  real data $\xreal$ through $\privs$. Releasing $\privs$ is 
  $(\epsilon, \delta)$-DP due to the selection of $\sigmadp^2$ with 
  Theorem~\ref{thm:gauss-mech-dp}. Computing the returned values from 
  $\privs$ is post-processing, so by Theorem~\ref{thm:post-processing-immunity},
  NAPSU-MQ is $(\epsilon, \delta)$-DP.
\end{proof}

In Supplemental Section~\ref{app:mi}, we list the conditions under which 
Rubin's rules are unbiased~\autocite{raghunathanMultipleImputationStatistical2003}.
As is typical with statistical methods, these assumptions are asymptotic in 
nature. Our experiments in 
Section~\ref{sec:experiments} show that NAPSU-MQ is robust to these 
asymptotics and works with large enough samples.

The most important of the assumptions behind Rubin's rules\footnote{
  Assumptions~\ref{assu:rubin-3} and \ref{assu:rubin-4} in 
  Supplemental Section\ref{app:mi}.
}
for synthetic data generation
requires that the synthetic data generation does not bias the downstream 
analysis. This means that the marginal queries input to NAPSU-MQ must contain 
the relevant information for the downstream analysis. For our experiments, we 
include a fixed set of queries that gives enough information for the downstream 
task, and select other queries automatically with an existing query selection 
algorithm~\autocite{mckennaWinningNISTContest2021a}. We leave further work 
on query selection for noise-aware inference to future work.

\begin{algorithm}
  \caption{NAPSU-MQ}\label{alg:napsu-mq}
  \KwIn{Real data $\xreal$, marginal queries $a$, number of synthetic datasets 
  $m$, size of synthetic datasets $n_{Syn}$, privacy bounds $\epsilon, \delta$.}
  \KwOut{Posterior distribution $p(\theta|\privs)$, synthetic datasets 
  $\xsyn_1, \dotsc, \xsyn_m$.}

  $a^* \gets$ Canonical queries for $a$ (Section~\ref{sec:canon-queries})\;
  $s\gets a^*(\xreal)$\; 
  $\Delta_2 \gets$ Sensitivity of $s$ (Theorem~\ref{thm:marginal-sensitivity})\;
  $\sigmadp^2 \gets$ Required noise variance for $(\epsilon, \delta)$-DP with 
  sensitivity $\Delta_2$ (Theorem~\ref{thm:gauss-mech-dp})\;
  Sample $\privs \sim \caln(s, \sigmadp^2)$\;
  Run Bayesian inference algorithm to find $p(\theta | \privs)$ (Section~\ref{sec:generator})\; 
  Sample $\theta_i\sim p(\theta| \privs)$ and 
  $\xsyn_i\sim \med_{\theta_i}^{n_{Syn}}$
  for $1\leq i\leq m$\; 
  \Return{$p(\theta|\privs)$, $\xsyn_1,\dotsc, \xsyn_m$}
\end{algorithm}

\section{EXPERIMENTS}\label{sec:experiments}
In this section, we give detailed descriptions on our two main experiments:
a simple toy data experiment, and our experiment with the UCI 
Adult dataset, which demonstrate that NAPSU-MQ is able to compute accurate 
confidence intervals. In Supplemental Section~\ref{app:us-census-experiment}, 
we describe an additional experiment with the UCI US Census (1990) dataset,
which confirms the results of the other experiments. 
Our code is available under an open source license\footnote{
  A library implementation of NAPSU-MQ is available at 
  \url{https://github.com/DPBayes/twinify}, while 
  code to replicate our experiments is at 
  \url{https://github.com/DPBayes/NAPSU-MQ-experiments}.
}.

\begin{figure*}
  \centering
  \begin{subfigure}{0.48\textwidth}
    \centering
    \includegraphics[width=\linewidth]{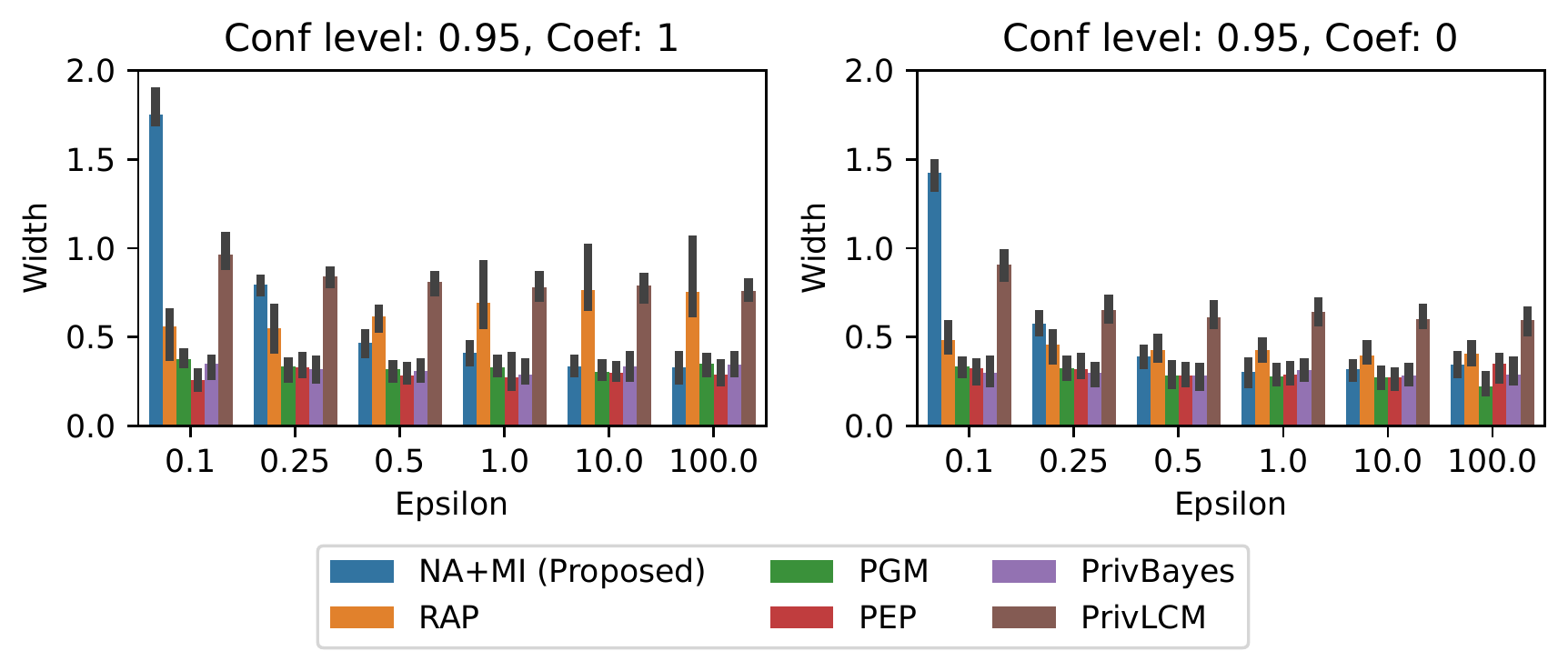}
    \caption{
      }
    \label{fig:toy-data-widths}
  \end{subfigure}\quad
  \begin{subfigure}{0.48\textwidth}
    \centering
    \includegraphics[width=\linewidth]{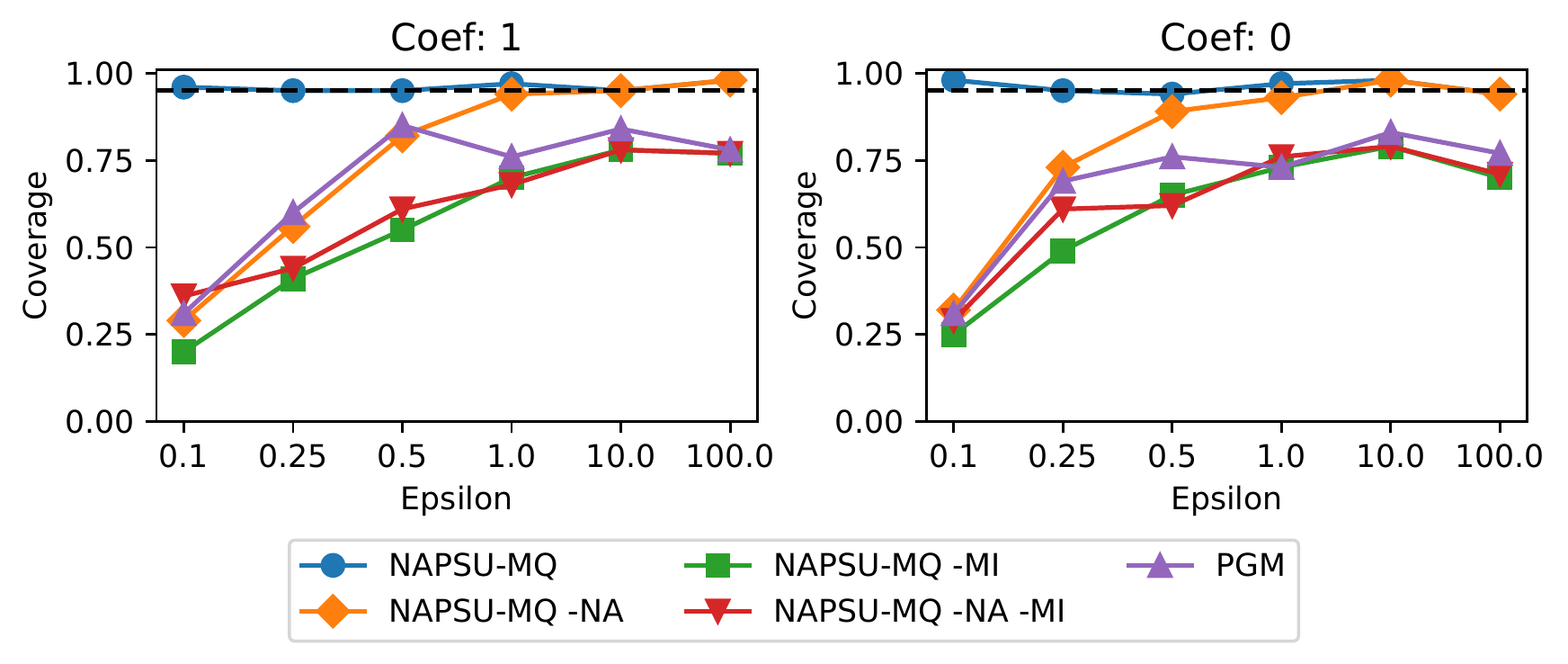}
    \caption{
      }
    \label{fig:ablation}
  \end{subfigure}
  \caption{
      (a) Toy data confidence intervals widths. NA+MI produces slightly wider 
      intervals than PGM or PEP for $\epsilon > 0.1$, which is necessary to 
      account for DP noise. PrivLCM produces much wider intervals.
      (b) Ablation study on the toy data. ``-NA'' refers to removing noise-awareness, 
      and ``-MI'' refers to removing multiple imputation. Unless both are included, 
      NAPSU-MQ is overconfident like PGM except for $\epsilon \geq 1$
      where noise-awareness is not necessary. $\delta \toydatadelta$ in 
      both (a) and (b).
  }
\end{figure*}

\subsection{Toy Data}\label{sec:toy-data-experiment}
To demonstrate the necessity of noise-awareness in synthetic data generation, 
we measure the coverage of confidence intervals computed from DP synthetic 
data on a generated toy dataset where the data generation process is known.
We test the existing algorithms
PGM~\autocite{mckennaGraphicalmodelBasedEstimation2019},
PEP~\autocite{liuIterativeMethodsPrivate2021}, 
RAP~\autocite{aydoreDifferentiallyPrivateQuery2021},
PrivLCM~\autocite{nixonLatentClassModeling2022},
PrivBayes~\autocite{zhangPrivBayesPrivateData2017}
and our pipeline NA+MI, where data generation is implemented with 
NAPSU-MQ. The authors of PrivLCM also propose using multiple 
imputation~\autocite{nixonLatentClassModeling2022}, so we use Rubin's 
rules~\autocite{raghunathanMultipleImputationStatistical2003} with 
the output of PrivLCM. We also ran PGM, PEP, RAP and PrivBayes with Rubin's rules, 
which did not produce useful results, as these algorithms do not meet the 
assumptions of Rubin's rules.

The original data consists of $n = 2000$ datapoints of 3 binary variables. The 
first two are
sampled by independent coin flips. The third is sampled from logistic regression 
on the other two variables with coefficients (1, 0). 

For all algorithms except PrivLCM and PrivBayes, we use the full set of 3-way 
marginal queries released with the Gaussian mechanism. PrivLCM doesn't implement these,
and instead uses all full sets of 
2-way marginals, and a different mechanism, which is 
$(\epsilon, 0)$-DP~\autocite{nixonLatentClassModeling2022} instead of 
$(\epsilon, \delta)$-DP like the other algorithms. PrivBayes requires specifying 
a Bayesian network~\autocite{zhangPrivBayesPrivateData2017}, which we set to 
match the data generating process, and 
takes a single full set of 1-way marginals and a single full set of 
2-way marginals in addition to the full set of 3-way marginals the other 
algorithms take.
We use the Laplace approximation for NAPSU-MQ inference, as it is much faster 
than NUTS and works well for this simple setting.

DP algorithms are typically evaluated under changing privacy bounds by fixing 
$\delta \leq \frac{1}{n}$, and varying 
$\epsilon$, which is the setting used by the authors of 
PGM~\autocite{mckennaGraphicalmodelBasedEstimation2019}, 
PEP~\autocite{liuIterativeMethodsPrivate2021} and
RAP~\autocite{aydoreDifferentiallyPrivateQuery2021}. We follow this setting, 
fixing $\delta = n^{-2} \toydatadelta$.

We generate $m = 100$ synthetic datasets of size 
$n_{Syn} = n$ for all algorithms except RAP, where the synthetic dataset size 
is a function of two hyperparameters. We describe the hyperparameters in detail 
in Supplemental Section~\ref{app:hyperparams}.

The downstream task 
is inferring the logistic regression coefficients from synthetic data. 
We repeated all steps 100 times to measure the probability of 
sampling a dataset giving a confidence interval that includes the true parameter
values. 

Figure~\ref{fig:toy-data} shows the coverages, and 
Figure~\ref{fig:toy-data-widths} shows the widths for the resulting confidence 
intervals. All of the algorithms apart from 
ours and PrivLCM are overconfident, even with very loose privacy bounds. 
Examining the confidence intervals shows the reason:
PGM is unbiased, but it produces too narrow confidence intervals, while 
NAPSU-MQ produces wider confidence intervals. On the other hand, for 
$\epsilon > 0.25$, PrivLCM 
produces much wider and too conservative confidence intervals.

\paragraph{Ablation Study} We also conducted an ablation study on the toy data 
to show that both 
multiple imputation and noise-awareness are necessary for accurate confidence 
intervals. The results are presented in 
Figure~\ref{fig:ablation}. Without both multiple imputation and noise-awareness, 
NA+MI is overconfident like PGM, except for $\epsilon \geq 1$, where 
noise-awareness is not required. 
We show the confidence 
intervals produced by each method for $\epsilon = 0.5$ in
Figure~\ref{fig:ablation-conf-ints} in the Supplement.

\subsection{Adult Dataset}\label{sec:adult-experiment}
Our main experiment evaluates the performance of NAPSU-MQ on the UCI 
Adult
dataset~\autocite{kohaviAdult1996}. We include 10 of the original 15
columns to remove redundant columns and keep runtimes 
manageable, and discretise the continuous columns. After 
dropping rows with missing values, there are $n = 46\,043$ rows. The discretised
domain has $1\,792\,000$ distinct values. We give a detailed description 
of the dataset, query selection and the downstream task in Supplemental 
Section~\ref{app:adult-details}.

\begin{figure*}
  \centering
  \includegraphics[width=\linewidth]{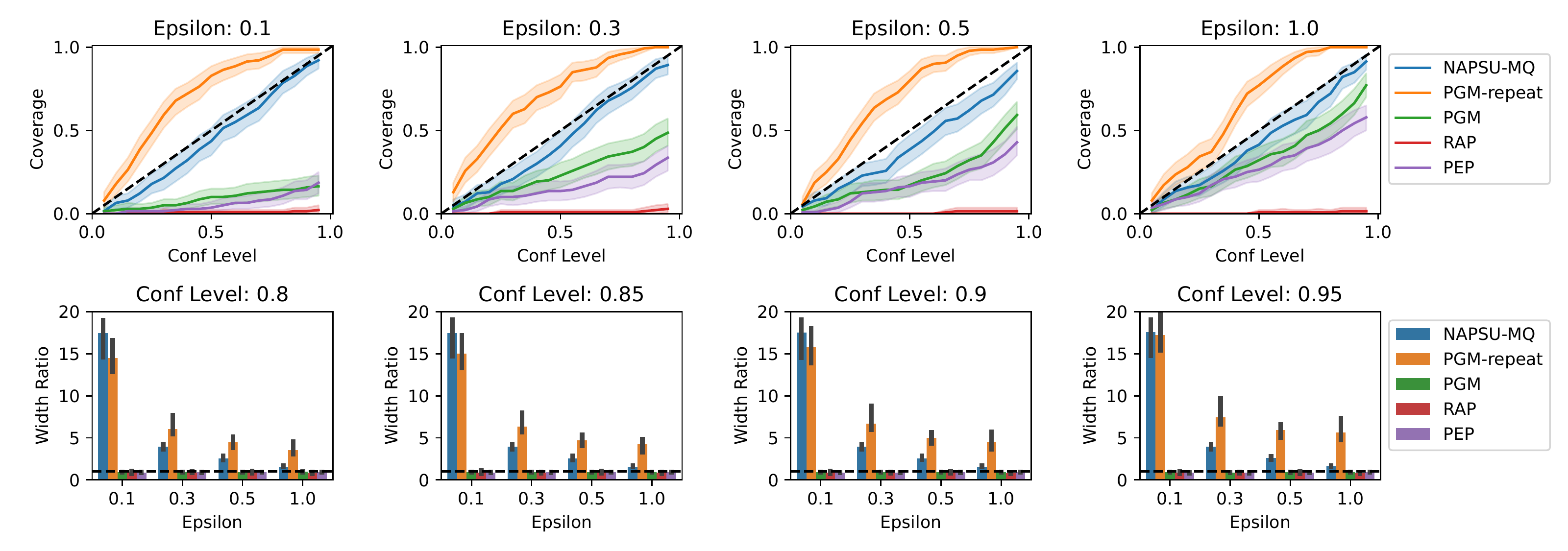}
  \caption{
    Top row: the fraction of downstream coefficients where the synthetic 
    confidence interval contains the real data coefficient, averaged over 
    20 repeated runs on the Adult dataset using regularisation. 
    NAPSU-MQ is the only algorithm that is consistently 
    around the diagonal, showing good calibration. The error bands are
    bootstrap 95\% confidence intervals of the average over the 20 repeats.
    Bottom row: confidence intervals widths divided by real data confidence
    interval widths. Each bar is a median over the different 
    coefficients and repeats, and the black lines are 95\% bootstrapped 
    confidence intervals. The dashed line is at $y = 1$, showing where synthetic 
    confidence intervals have the same width as original confidence intervals.
    $\delta \adultdelta$ in all panels.
  }
  \label{fig:adult-calibration-widths}
\end{figure*}

As our downstream task, we use logistic regression with income as the 
dependent variable and a subset of the columns as the independent variables, 
which 
allows us to include all the relevant marginals for synthetic data generation.
The synthetic dataset was still generated with all 10 columns.

We compare NAPSU-MQ against 
PGM~\autocite{mckennaGraphicalmodelBasedEstimation2019},
RAP~\autocite{aydoreDifferentiallyPrivateQuery2021}
and PEP~\autocite{liuIterativeMethodsPrivate2021}.
We used the published implementations of their authors for all of them, 
with small modifications to ensure compatibility with new library versions and 
our experiments.
The published implementation of PrivLCM only supports binary data, and does 
not scale to datasets of this size, so it was not included in this experiment.
PrivBayes was also excluded, as it doesn't support the set of queries we use
in this experiment.
We also include a naive noise-aware baseline that runs $m$ completely independent 
repeats of PGM, splitting the privacy budget appropriately, and uses 
Rubin's rules with the $m$ generated synthetic datasets. 

NAPSU-MQ and PGM-repeat sometimes generate synthetic datasets with no people 
of some race with high income. Logistic regression will produce an extremely 
wide confidence intervals for the corresponding coefficients. Rubin's rules 
average over estimates, so even a single bad estimate makes the combined 
confidence interval extremely wide. This can be fixed in two ways:
a simple solution is removing estimates with extremely large variances before 
applying Rubin's rules. A more principled way is to add a very small 
regularisation term to the logistic regression, which fixes the extremely 
wide confidence intervals, but will require bootstrapping to get variance 
estimates, which increases the computational cost of the downstream analysis.
We used an $l_2$-regularisation term of $10^{-5}$, and used 50 bootstrap samples.
Because of post-processing immunity, neither of these fixes affects the 
privacy bounds.

As the input queries, we pick 2-way marginals that are relevant for the 
downstream task, and select the rest of the queries with the
MST algorithm~\autocite{mckennaWinningNISTContest2021a}. This selection was kept 
constant throughout the experiment. 
For the privacy budget, we use $\delta = n^{-2} \adultdelta$
for all runs, and vary $\epsilon$. 

\textcite{reiter2002satisfying} discusses the choice of $n_{Syn}$ and 
$n$ for non-DP synthetic data generation in detail. Based on his results, 
choosing $n_{Syn} = n$ is very safe, and we use it for all algorithms except RAP,
as in the toy data experiment (Section~\ref{sec:toy-data-experiment}). 
For NAPSU-MQ and PGM-repeat, we choose the number of generated synthetic datasets
with a preliminary experiment, presented in
Figures~\ref{fig:max-ent-m-comparison} and \ref{fig:pgm-repeats-m-comparison}
in the Supplement. For NAPSU-MQ, the theory of \textcite{reiter2002satisfying}, 
suggests that a larger $m$ is better, but has diminishing returns. Our results in 
Figure~\ref{fig:max-ent-m-comparison} validate this, as all values of $m$
produce similar results. We describe the other hyperparameters 
in detail in Supplemental Section~\ref{app:hyperparams}.

The Laplace approximation for NAPSU-MQ does not work well for this setting
because many of the queries have small values, so we use 
NUTS~\autocite{hoffmanNoUTurnSamplerAdaptively2014} for posterior inference.
To speed up NUTS, we normalise the posterior before running the inference 
using the mean and covariance of the Laplace approximation 

Results from 20 repeats of the experiment are shown in 
Figure~\ref{fig:adult-calibration-widths}.
PGM, RAP and PEP produce overconfident confidence intervals that do not meet 
the given confidence levels. 
With the repeats, PGM becomes overly conservative, and produces confidence 
intervals that are too wide. NAPSU-MQ is the only algorithm that produces 
properly calibrated intervals, although repeated PGM 
is able to produce narrower intervals than NAPSU-MQ with $\epsilon = 0.1$.
With the more realistic $\epsilon = 1$, the confidence intervals 
from NAPSU-MQ are not much wider than non-DP intervals, while PGM-repeat 
produces much wider intervals. 
Figure~\ref{fig:adult_marginal_accuracy}
shows that NAPSU-MQ reproduces 1- and 2-way marginals nearly as accurately as 
PGM.

The results in Figure~\ref{fig:adult-calibration-widths} were obtained using 
the small regularisation term.
Figure~\ref{fig:adult-calibration-widths-nonregularised} shows 
the results with the trick of dropping large variances, which are very close to 
Figure~\ref{fig:adult-calibration-widths}.

Noise-awareness, especially 
with the increased accuracy from NUTS, comes with a steep computational cost,
as PGM ran in 15s, while the Laplace approximation took several minutes, 
and NUTS took up to ten hours. All of the algorithms were run on 4 CPU cores of 
a cluster. The complete set of runtimes for all algorithms and values of 
$\epsilon$ are shown in Table~\ref{tab:runtime} of the Supplement. 

\section{DISCUSSION}\label{sec:limitation-conclusion}

While our general pipeline NA+MI is applicable to all 
kinds of datasets in principle, the data generation algorithm NAPSU-MQ is 
currently only applicable to discrete tabular data due to the reliance on 
perturbing query values,
and only supports sparse marginal queries perturbed with the Gaussian 
mechanism as input due to the techniques we use to make the algorithm practical. 

We aim to generalise NAPSU-MQ to more general query classes, such as linear queries,
in the future. Getting rid of the dependency on queries completely is likely to 
be much more challenging, as it will require developing noise-aware Bayesian 
inference without perturbing sufficient statistics. This may be possible by
adding noise-awareness to other types of DP Bayesian inference methods, like 
DP variational inference~\autocite{jalkoDifferentiallyPrivateVariational2017} or 
DP MCMC~\autocite{
  heikkilaDifferentiallyPrivateMarkov2019,
  yildirimExactMCMCDifferentially2019,
}.

Only handling discrete data is not a major limitation, as the combination of 
discretisation~\autocite{zhangPrivTreeDifferentiallyPrivate2016} and synthetic 
data generation with marginal-based algorithms like 
PGM~\autocite{mckennaGraphicalmodelBasedEstimation2019}
have been shown to perform very well on tabular synthetic data generation 
tasks~\autocite{taoBenchmarkingDifferentiallyPrivate2021}.

The Gaussian mechanism adds noise uniformly to all queries, making small 
queries relatively more noisy.
This may reduce the accuracy of 
query-based algorithms like NAPSU-MQ and the others we examined for 
groups with rare combinations of data values, such as minorities. 

Although we left query selection outside the scope of this paper, selecting the 
right queries to support downstream analysis is important, as NA+MI 
cannot guarantee confidence interval coverage if the selected queries do not 
contain enough information for the downstream task. We plan to study whether 
existing methods giving confidence bounds on query 
accuracy~\autocite{mckennaAIMAdaptiveIterative2022, nixonLatentClassModeling2022} 
can be adapted to give confidence intervals for arbitrary downstream analyses.

The runtime is a significant limitation
in the current implementation of NAPSU-MQ when using NUTS.
As NAPSU-MQ is compatible with any non-DP posterior sampling method, 
recent~\autocite{hoffmanAdaptiveMCMCSchemeSetting2021} and future advances 
in MCMC and other sampling techniques are likely able to cut down on the 
runtime.

\paragraph{Conclusion} The analysis of DP synthetic data has not received much 
attention in existing research. Our work patches a major hole in the 
current generation and analysis methods by developing the NA+MI pipeline that allows 
computing accurate confidence intervals and $p$-values from DP synthetic data.
We develop the NAPSU-MQ algorithm in order to implement NA+MI on 
nontrivial discrete datasets. NA+MI only depends on noise-aware 
posterior inference, not NAPSU-MQ specifically, and can thus be extended 
to other settings in the future.
With the noise-aware inference algorithm, NA+MI allows conducting valid 
statistical analyses that include uncertainty estimates 
with DP synthetic data, potentially unlocking existing privacy-sensitive datasets 
for widespread analysis.

\subsection*{Acknowledgments}
This work was supported by the Academy of Finland 
(Flagship programme: Finnish Center for Artificial Intelligence, 
FCAI; and grants 325572, 325573), the Strategic Research Council 
at the Academy of Finland (Grant 336032), the UKRI Turing 
AI World-Leading Researcher Fellowship, EP/W002973/1,
as well as the European Union (Project
101070617). Views and opinions expressed are however
those of the author(s) only and do not necessarily reflect
those of the European Union or the European Commission. Neither the European 
Union nor the granting authority can be held responsible for them.
The authors wish to thank the Finnish Computing Competence 
Infrastructure (FCCI) for supporting this project with 
computational and data storage resources.

\defbibheading{bibliography}[References]{\subsubsection*{#1}}
\printbibliography

@article{abbeelLearningFactorGraphs2006,
  title = {Learning Factor Graphs in Polynomial Time and Sample Complexity},
  author = {Abbeel, Pieter and Koller, Daphne and Ng, Andrew Y.},
  year = {2006},
  journal = {Journal of Machine Learning Research},
  volume = {7},
  pages = {1743--1788},
  bibsource = {dblp computer science bibliography, https://dblp.org},
  biburl = {https://dblp.org/rec/journals/jmlr/AbbeelKN06.bib},
  timestamp = {Wed, 10 Jul 2019 15:28:12 +0200},
  file = {/Users/oraisa/Zotero/storage/B9AASE7S/Abbeel et al. - Learning Factor Graphs in Polynomial Time and Samp.pdf}
}

@inproceedings{aydoreDifferentiallyPrivateQuery2021,
  title = {Differentially {{Private Query Release Through Adaptive Projection}}},
  booktitle = {Proceedings of the 38th {{International Conference}} on {{Machine Learning}}},
  author = {Aydore, Sergul and Brown, William and Kearns, Michael and Kenthapadi, Krishnaram and Melis, Luca and Roth, Aaron and Siva, Ankit A},
  editor = {Meila, Marina and Zhang, Tong},
  year = {2021},
  month = jul,
  series = {Proceedings of {{Machine Learning Research}}},
  volume = {139},
  pages = {457--467},
  publisher = {{PMLR}},
  abstract = {We propose, implement, and evaluate a new algo-rithm for releasing answers to very large numbersof statistical queries likek-way marginals, sub-ject to differential privacy. Our algorithm makesadaptive use of a continuous relaxation of thePro-jection Mechanism, which answers queries on theprivate dataset using simple perturbation, and thenattempts to find the synthetic dataset that mostclosely matches the noisy answers. We use a con-tinuous relaxation of the synthetic dataset domainwhich makes the projection loss differentiable,and allows us to use efficient ML optimizationtechniques and tooling. Rather than answering allqueries up front, we make judicious use of ourprivacy budget by iteratively finding queries forwhich our (relaxed) synthetic data has high error,and then repeating the projection. Randomizedrounding allows us to obtain synthetic data in theoriginal schema. We perform experimental evalu-ations across a range of parameters and datasets,and find that our method outperforms existingalgorithms on large query classes.},
  keywords = {Differential Privacy,Synthetic Data},
  file = {/Users/oraisa/Zotero/storage/SITH9VGM/Aydore21-supp.pdf;/Users/oraisa/Zotero/storage/XP4V97GP/Aydore21.pdf}
}

@inproceedings{balleImprovingGaussianMechanism2018,
  title = {Improving the {{Gaussian Mechanism}} for {{Differential Privacy}}: {{Analytical Calibration}} and {{Optimal Denoising}}},
  booktitle = {Proceedings of the 35th {{International Conference}} on {{Machine Learning}}},
  author = {Balle, Borja and Wang, Yu-Xiang},
  editor = {Dy, Jennifer and Krause, Andreas},
  year = {2018},
  month = jul,
  series = {Proceedings of {{Machine Learning Research}}},
  volume = {80},
  pages = {394--403},
  publisher = {{PMLR}},
  abstract = {The Gaussian mechanism is an essential building block used in multitude of differentially private data analysis algorithms. In this paper we revisit the Gaussian mechanism and show that the original analysis has several important limitations. Our analysis reveals that the variance formula for the original mechanism is far from tight in the high privacy regime (\$\textbackslash varepsilon \textbackslash to 0\$) and it cannot be extended to the low privacy regime (\$\textbackslash varepsilon \textbackslash to \i nfty\$). We address these limitations by developing an optimal Gaussian mechanism whose variance is calibrated directly using the Gaussian cumulative density function instead of a tail bound approximation. We also propose to equip the Gaussian mechanism with a post-processing step based on adaptive estimation techniques by leveraging that the distribution of the perturbation is known. Our experiments show that analytical calibration removes at least a third of the variance of the noise compared to the classical Gaussian mechanism, and that denoising dramatically improves the accuracy of the Gaussian mechanism in the high-dimensional regime.},
  keywords = {Denoising,Differential Privacy,Gaussian Mechanism},
  file = {/Users/oraisa/Zotero/storage/GNW3BPM6/Balle18.pdf}
}

@inproceedings{bernsteinDifferentiallyPrivateBayesian2018,
  title = {Differentially {{Private Bayesian Inference}} for {{Exponential Families}}},
  booktitle = {Advances in {{Neural Information Processing Systems}}},
  author = {Bernstein, Garrett and Sheldon, Daniel},
  editor = {Bengio, Samy and Wallach, Hanna M. and Larochelle, Hugo and Grauman, Kristen and {Cesa-Bianchi}, Nicol{\`o} and Garnett, Roman},
  year = {2018},
  volume = {31},
  pages = {2924--2934},
  keywords = {Bayesian Inference,Differential Privacy,Exponential Family},
  file = {/Users/oraisa/Zotero/storage/9H6AMLTA/BernsteinS18.pdf}
}

@inproceedings{bernsteinDifferentiallyPrivateBayesian2019,
  title = {Differentially {{Private Bayesian Linear Regression}}},
  booktitle = {Advances in {{Neural Information Processing Systems}}},
  author = {Bernstein, Garrett and Sheldon, Daniel},
  editor = {Wallach, Hanna M. and Larochelle, Hugo and Beygelzimer, Alina and {d'Alch{\'e}-Buc}, Florence and Fox, Emily B. and Garnett, Roman},
  year = {2019},
  volume = {32},
  pages = {523--533},
  keywords = {Bayesian Inference,Differential Privacy},
  file = {/Users/oraisa/Zotero/storage/EEXU3CTR/BernsteinS19.pdf}
}

@inproceedings{bernsteinDifferentiallyPrivateLearning2017,
  title = {Differentially {{Private Learning}} of {{Undirected Graphical Models Using Collective Graphical Models}}},
  booktitle = {Proceedings of the 34th {{International Conference}} on {{Machine Learning}}},
  author = {Bernstein, Garrett and McKenna, Ryan and Sun, Tao and Sheldon, Daniel and Hay, Michael and Miklau, Gerome},
  editor = {Precup, Doina and Teh, Yee Whye},
  year = {2017},
  series = {Proceedings of {{Machine Learning Research}}},
  volume = {70},
  pages = {478--487},
  publisher = {{PMLR}},
  keywords = {Differential Privacy,Graphical Model,Synthetic Data},
  file = {/Users/oraisa/Zotero/storage/2929WSR3/Bernstein et al. - 2017 - Differentially Private Learning of Undirected Grap.pdf}
}

@article{cai2021data,
  title = {Data Synthesis via Differentially Private Markov Random Fields},
  author = {Cai, Kuntai and Lei, Xiaoyu and Wei, Jianxin and Xiao, Xiaokui},
  year = {2021},
  journal = {Proceedings of the VLDB Endowment},
  volume = {14},
  number = {11},
  pages = {2190--2202},
  publisher = {{VLDB Endowment}},
  keywords = {Differential Privacy,Probabilistic Graphical Models,Synthetic Data},
  file = {/Users/oraisa/Zotero/storage/XBY9V29P/Cai et al. - 2021 - Data synthesis via differentially private markov r.pdf}
}

@article{charestHowCanWe2010,
  title = {How {{Can We Analyze Differentially-Private Synthetic Datasets}}?},
  author = {Charest, Anne-Sophie},
  year = {2010},
  journal = {Journal of Privacy and Confidentiality},
  volume = {2},
  number = {2},
  doi = {10.29012/jpc.v2i2.589},
  keywords = {Differential Privacy,Multiple Imputation,Statistical Inference,Synthetic Data},
  file = {/Users/oraisa/Zotero/storage/GD4WJE5U/Charest11.pdf}
}

@inproceedings{chenDifferentiallyPrivateHighDimensional2015,
  title = {Differentially {{Private High-Dimensional Data Publication}} via {{Sampling-Based Inference}}},
  booktitle = {Proceedings of the 21th {{ACM SIGKDD International Conference}} on {{Knowledge Discovery}} and {{Data Mining}}},
  author = {Chen, Rui and Xiao, Qian and Zhang, Yu and Xu, Jianliang},
  editor = {Cao, Longbing and Zhang, Chengqi and Joachims, Thorsten and Webb, Geoffrey I. and Margineantu, Dragos D. and Williams, Graham},
  year = {2015},
  pages = {129--138},
  publisher = {{ACM}},
  doi = {10.1145/2783258.2783379},
  keywords = {Differential Privacy,Graphical Model,Synthetic Data},
  file = {/Users/oraisa/Zotero/storage/DJLQRTU6/Chen et al. - 2015 - Differentially Private High-Dimensional Data Publi.pdf}
}

@inproceedings{chenGSWGANGradientsanitizedApproach2020,
  title = {{{GS-WGAN}}: {{A}} Gradient-Sanitized Approach for Learning Differentially Private Generators},
  booktitle = {Advances in {{Neural Information Processing Systems}}},
  author = {Chen, Dingfan and Orekondy, Tribhuvanesh and Fritz, Mario},
  editor = {Larochelle, H. and Ranzato, M. and Hadsell, R. and Balcan, M.F. and Lin, H.},
  year = {2020},
  volume = {33},
  pages = {12673--12684},
  keywords = {Differential Privacy,Generative Adversarial Network,Synthetic Data},
  file = {/Users/oraisa/Zotero/storage/XIZ3VIAZ/Chen et al. - 2020 - GS-WGAN A gradient-sanitized approach for learnin.pdf}
}

@misc{covingtonUnbiasedStatisticalEstimation2021,
  type = {{{arXiv}} Preprint},
  title = {Unbiased {{Statistical Estimation}} and {{Valid Confidence Intervals Under Differential Privacy}}},
  author = {Covington, Christian and He, Xi and Honaker, James and Kamath, Gautam},
  year = {2021},
  month = oct,
  number = {arXiv:2110.14465},
  eprint = {2110.14465},
  eprinttype = {arxiv},
  publisher = {{arXiv}},
  abstract = {We present a method for producing unbiased parameter estimates and valid confidence intervals under the constraints of differential privacy, a formal framework for limiting individual information leakage from sensitive data. Prior work in this area is limited in that it is tailored to calculating confidence intervals for specific statistical procedures, such as mean estimation or simple linear regression. While other recent work can produce confidence intervals for more general sets of procedures, they either yield only approximately unbiased estimates, are designed for one-dimensional outputs, or assume significant user knowledge about the data-generating distribution. Our method induces distributions of mean and covariance estimates via the bag of little bootstraps (BLB) and uses them to privately estimate the parameters' sampling distribution via a generalized version of the CoinPress estimation algorithm. If the user can bound the parameters of the BLB-induced parameters and provide heavier-tailed families, the algorithm produces unbiased parameter estimates and valid confidence intervals which hold with arbitrarily high probability. These results hold in high dimensions and for any estimation procedure which behaves nicely under the bootstrap.},
  archiveprefix = {arXiv},
  keywords = {Computer Science - Cryptography and Security,Differential Privacy,Mathematics - Statistics Theory,Statistical Estimation,Statistical Inference,Statistics - Methodology},
  file = {/Users/oraisa/Zotero/storage/JLSNKUWR/Covington et al. - 2021 - Unbiased Statistical Estimation and Valid Confiden.pdf;/Users/oraisa/Zotero/storage/B2KQWWI4/2110.html}
}

@article{dworkAlgorithmicFoundationsDifferential2014,
  title = {The {{Algorithmic Foundations}} of {{Differential Privacy}}},
  author = {Dwork, Cynthia and Roth, Aaron},
  year = {2014},
  journal = {Foundations and Trends in Theoretical Computer Science},
  volume = {9},
  number = {3-4},
  pages = {211--407},
  doi = {10.1561/0400000042},
  keywords = {Differential Privacy,Tutorial},
  file = {/Users/oraisa/Zotero/storage/T6M5NJ9L/DwR14.pdf}
}

@inproceedings{dworkCalibratingNoiseSensitivity2006,
  title = {Calibrating {{Noise}} to {{Sensitivity}} in {{Private Data Analysis}}},
  booktitle = {Third {{Theory}} of {{Cryptography Conference}}},
  author = {Dwork, Cynthia and McSherry, Frank and Nissim, Kobbi and Smith, Adam D.},
  editor = {Halevi, Shai and Rabin, Tal},
  year = {2006},
  series = {Lecture {{Notes}} in {{Computer Science}}},
  volume = {3876},
  pages = {265--284},
  publisher = {{Springer}},
  doi = {10.1007/11681878_14},
  keywords = {Differential Privacy,Differential Privacy Interpretation},
  file = {/Users/oraisa/Zotero/storage/3CGM9F5W/DMN06.pdf}
}

@inproceedings{dworkDifferentialPrivacySurvey2008,
  title = {Differential Privacy: {{A}} Survey of Results},
  booktitle = {International Conference on Theory and Applications of Models of Computation},
  author = {Dwork, Cynthia},
  year = {2008},
  pages = {1--19},
  publisher = {{Springer}},
  keywords = {Differential Privacy},
  file = {/Users/oraisa/Zotero/storage/PNS3B6IH/Dwork - 2008 - Differential privacy A survey of results.pdf}
}

@inproceedings{dworkOurDataOurselves2006,
  title = {Our {{Data}}, {{Ourselves}}: {{Privacy Via Distributed Noise Generation}}},
  booktitle = {Advances in {{Cryptology}} - {{EUROCRYPT}}},
  author = {Dwork, Cynthia and Kenthapadi, Krishnaram and McSherry, Frank and Mironov, Ilya and Naor, Moni},
  editor = {Vaudenay, Serge},
  year = {2006},
  series = {Lecture {{Notes}} in {{Computer Science}}},
  volume = {4004},
  pages = {486--503},
  publisher = {{Springer}},
  doi = {10.1007/11761679_29},
  keywords = {Differential Privacy,Gaussian Mechanism},
  file = {/Users/oraisa/Zotero/storage/6BF4LT6G/DKM06.pdf}
}

@article{evansStatisticallyValidInferences2022,
  title = {Statistically {{Valid Inferences}} from {{Differentially Private Data Releases}}, with {{Application}} to the {{Facebook URLs Dataset}}},
  author = {Evans, Georgina and King, Gary},
  year = {2022},
  month = apr,
  journal = {Political Analysis},
  pages = {1--21},
  issn = {1047-1987, 1476-4989},
  doi = {10.1017/pan.2022.1},
  abstract = {Abstract                            We offer methods to analyze the ``differentially private''               Facebook URLs Dataset               which, at over 40 trillion cell values, is one of the largest social science research datasets ever constructed. The version of differential privacy used in the URLs dataset has specially calibrated random noise added, which provides mathematical guarantees for the privacy of individual research subjects while still making it possible to learn about aggregate patterns of interest to social scientists. Unfortunately, random noise creates measurement error which induces statistical bias\textemdash including attenuation, exaggeration, switched signs, or incorrect uncertainty estimates. We adapt methods developed to correct for naturally occurring measurement error, with special attention to computational efficiency for large datasets. The result is statistically valid linear regression estimates and descriptive statistics that can be interpreted as ordinary analyses of nonconfidential data but with appropriately larger standard errors.},
  langid = {english},
  file = {/Users/oraisa/Zotero/storage/NK7QR2VM/Evans and King - 2022 - Statistically Valid Inferences from Differentially.pdf}
}

@inproceedings{ferrandoParametricBootstrapDifferentially2022,
  title = {Parametric Bootstrap for Differentially Private Confidence Intervals},
  booktitle = {Proceedings of {{The}} 25th {{International Conference}} on {{Artificial Intelligence}} and {{Statistics}}},
  author = {Ferrando, Cecilia and Wang, Shufan and Sheldon, Daniel},
  editor = {{Camps-Valls}, Gustau and Ruiz, Francisco J. R. and Valera, Isabel},
  year = {2022},
  month = mar,
  series = {Proceedings of Machine Learning Research},
  volume = {151},
  pages = {1598--1618},
  publisher = {{PMLR}},
  abstract = {The goal of this paper is to develop a practical and general-purpose approach to construct confidence intervals for differentially private parametric estimation. We find that the parametric bootstrap is a simple and effective solution. It cleanly reasons about variability of both the data sample and the randomized privacy mechanism and applies "out of the box" to a wide class of private estimation routines. It can also help correct bias caused by clipping data to limit sensitivity. We prove that the parametric bootstrap gives consistent confidence intervals in two broadly relevant settings, including a novel adaptation to linear regression that avoids accessing the covariate data multiple times. We demonstrate its effectiveness for a variety of estimators, and find empirically that it provides confidence intervals with good coverage even at modest sample sizes and performs better than alternative approaches.},
  keywords = {Differential Privacy,Statistical Inference},
  file = {/Users/oraisa/Zotero/storage/HL3DA2RA/Ferrando et al. - 2022 - Parametric bootstrap for differentially private co.pdf}
}

@book{gelmanBayesianDataAnalysis2014,
  title = {Bayesian Data Analysis},
  author = {Gelman, Andrew and Carlin, John B and Stern, Hal S and Dunson, David B and Vehtari, Aki and Rubin, Donald B},
  year = {2014},
  journal = {Bayesian data analysis},
  series = {Chapman \& {{Hall}}/{{CRC}} Texts in Statistical Science Series},
  edition = {Third edition},
  publisher = {{CRC Press}},
  address = {{Boca Raton}},
  abstract = {"Preface This book is intended to have three roles and to serve three associated audiences: an introductory text on Bayesian inference starting from first principles, a graduate text on effective current approaches to Bayesian modeling and computation in statistics and related fields, and a handbook of Bayesian methods in applied statistics for general users of and researchers in applied statistics. Although introductory in its early sections, the book is definitely not elementary in the sense of a first text in statistics. The mathematics used in our book is basic probability and statistics, elementary calculus, and linear algebra. A review of probability notation is given in Chapter 1 along with a more detailed list of topics assumed to have been studied. The practical orientation of the book means that the reader's previous experience in probability, statistics, and linear algebra should ideally have included strong computational components. To write an introductory text alone would leave many readers with only a taste of the conceptual elements but no guidance for venturing into genuine practical applications, beyond those where Bayesian methods agree essentially with standard non-Bayesian analyses. On the other hand, we feel it would be a mistake to present the advanced methods without first introducing the basic concepts from our data-analytic perspective. Furthermore, due to the nature of applied statistics, a text on current Bayesian methodology would be incomplete without a variety of worked examples drawn from real applications. To avoid cluttering the main narrative, there are bibliographic notes at the end of each chapter and references at the end of the book"\textendash{} Provided by publisher.},
  isbn = {978-1-4398-9820-8},
  langid = {english},
  keywords = {Bayesian statistical decision theory}
}

@misc{ghalebikesabiBiasMitigatedLearning2021,
  type = {{{arXiv}} Preprint},
  title = {Bias {{Mitigated Learning}} from {{Differentially Private Synthetic Data}}: {{A Cautionary Tale}}},
  author = {Ghalebikesabi, Sahra and Wilde, Harrison and Jewson, Jack and Doucet, Arnaud and Vollmer, Sebastian J. and Holmes, Chris},
  year = {2021},
  number = {arXiv: 2108.10934},
  eprint = {2108.10934},
  eprinttype = {arxiv},
  publisher = {{arXiv}},
  archiveprefix = {arXiv},
  keywords = {Differential Privacy,Synthetic Data},
  file = {/Users/oraisa/Zotero/storage/YAUZ5B54/Ghalebikesabi21.pdf}
}

@article{gongExactInferenceApproximate2022,
  title = {Exact {{Inference}} with {{Approximate Computation}} for {{Differentially Private Data}} via {{Perturbations}}},
  author = {Gong, Ruobin},
  year = {2022},
  month = nov,
  journal = {Journal of Privacy and Confidentiality},
  volume = {12},
  number = {2},
  issn = {2575-8527},
  doi = {10.29012/jpc.797},
  abstract = {This paper discusses how two classes of approximate computation algorithms can be adapted, in a modular fashion, to achieve exact statistical inference from differentially private data products. Considered are approximate Bayesian computation for Bayesian inference, and Monte Carlo Expectation-Maximization for likelihood inference. Up to Monte Carlo error, inference from these algorithms is exact with respect to the joint specification of both the analyst's original data model, and the curator's differential privacy mechanism. Highlighted is a duality between approximate computation on exact data, and exact computation on approximate data, which can be leveraged by a well-designed computational procedure for statistical inference.},
  copyright = {Copyright (c) 2022 Ruobin Gong},
  langid = {english},
  keywords = {Approximate Bayesian Computation,Bayesian Inference,Differential Privacy},
  file = {/Users/oraisa/Zotero/storage/HG5B8LCP/Gong - 2022 - Exact Inference with Approximate Computation for D.pdf}
}

@inproceedings{harderDPMERFDifferentiallyPrivate2021,
  title = {{{DP-MERF}}: {{Differentially Private Mean Embeddings}} with {{RandomFeatures}} for {{Practical Privacy-preserving Data Generation}}},
  booktitle = {Proceedings of {{The}} 24th {{International Conference}} on {{Artificial Intelligence}} and {{Statistics}}},
  author = {Harder, Frederik and Adamczewski, Kamil and Park, Mijung},
  editor = {Banerjee, Arindam and Fukumizu, Kenji},
  year = {2021},
  month = apr,
  series = {Proceedings of {{Machine Learning Research}}},
  volume = {130},
  pages = {1819--1827},
  publisher = {{PMLR}},
  abstract = {We propose a differentially private data generation paradigm using random feature representations of kernel mean embeddings when comparing the distribution of true data with that of synthetic data. We exploit the random feature representations for two important benefits. First, we require a minimal privacy cost for training deep generative models. This is because unlike kernel-based distance metrics that require computing the kernel matrix on all pairs of true and synthetic data points, we can detach the data-dependent term from the term solely dependent on synthetic data. Hence, we need to perturb the data-dependent term once and for all and then use it repeatedly during the generator training. Second, we can obtain an analytic sensitivity of the kernel mean embedding as the random features are norm bounded by construction. This removes the necessity of hyper-parameter search for a clipping norm to handle the unknown sensitivity of a generator network. We provide several variants of our algorithm, differentially-private mean embeddings with random features (DP-MERF) to jointly generate labels and input features for datasets such as heterogeneous tabular data and image data. Our algorithm achieves drastically better privacy-utility trade-offs than existing methods when tested on several datasets.},
  keywords = {Differential Privacy,Kernel Mean Embedding,Random Fourier Features,Synthetic Data},
  file = {/Users/oraisa/Zotero/storage/7QV72HZU/harder21a-supp.pdf;/Users/oraisa/Zotero/storage/8FPINNC6/Harder21a.pdf}
}

@inproceedings{hardtSimplePracticalAlgorithm2012,
  title = {A {{Simple}} and {{Practical Algorithm}} for {{Differentially Private Data Release}}},
  booktitle = {Advances in {{Neural Information Processing Systems}}},
  author = {Hardt, Moritz and Ligett, Katrina and McSherry, Frank},
  editor = {Bartlett, Peter L. and Pereira, Fernando C. N. and Burges, Christopher J. C. and Bottou, L{\'e}on and Weinberger, Kilian Q.},
  year = {2012},
  volume = {25},
  pages = {2348--2356},
  keywords = {Differential Privacy,Synthetic Data},
  file = {/Users/oraisa/Zotero/storage/EU8425RQ/Hardt et al. - 2012 - A Simple and Practical Algorithm for Differentiall.pdf}
}

@inproceedings{heikkilaDifferentiallyPrivateMarkov2019,
  title = {Differentially {{Private Markov Chain Monte Carlo}}},
  booktitle = {Advances in {{Neural Information Processing Systems}}},
  author = {Heikkil{\"a}, Mikko A. and J{\"a}lk{\"o}, Joonas and Dikmen, Onur and Honkela, Antti},
  year = {2019},
  volume = {32},
  pages = {4115--4125},
  keywords = {Bayesian Inference,Differential Privacy,MCMC},
  file = {/Users/oraisa/Zotero/storage/4XYUIC5V/HJD19.pdf;/Users/oraisa/Zotero/storage/KE4PAIBS/HeikkilaJDH19.pdf}
}

@inproceedings{hoffmanAdaptiveMCMCSchemeSetting2021,
  title = {An {{Adaptive-MCMC Scheme}} for {{Setting Trajectory Lengths}} in {{Hamiltonian Monte Carlo}}},
  booktitle = {The 24th {{International Conference}} on {{Artificial Intelligence}} and {{Statistics}}},
  author = {Hoffman, Matthew D. and Radul, Alexey and Sountsov, Pavel},
  editor = {Banerjee, Arindam and Fukumizu, Kenji},
  year = {2021},
  series = {Proceedings of {{Machine Learning Research}}},
  volume = {130},
  pages = {3907--3915},
  publisher = {{PMLR}},
  keywords = {Automatic MCMC Tuning,HMC},
  file = {/Users/oraisa/Zotero/storage/SK95W5FX/HoffmanRS21-supp.pdf;/Users/oraisa/Zotero/storage/SQXEU2JR/HoffmanRS21.pdf}
}

@article{hoffmanNoUTurnSamplerAdaptively2014,
  title = {The {{No-U-Turn}} Sampler: Adaptively Setting Path Lengths in {{Hamiltonian Monte Carlo}}.},
  author = {Hoffman, Matthew D and Gelman, Andrew},
  year = {2014},
  journal = {Journal of Machine Learning Research},
  volume = {15},
  number = {1},
  pages = {1593--1623},
  keywords = {HMC,MCMC},
  file = {/Users/oraisa/Zotero/storage/ARLPVM5P/HoG14.pdf}
}

@inproceedings{jalkoDifferentiallyPrivateVariational2017,
  title = {Differentially {{Private Variational Inference}} for {{Non-conjugate Models}}},
  booktitle = {Proceedings of the {{Thirty-Third Conference}} on {{Uncertainty}} in {{Artificial Intelligence}}},
  author = {J{\"a}lk{\"o}, Joonas and Honkela, Antti and Dikmen, Onur},
  editor = {Elidan, Gal and Kersting, Kristian and Ihler, Alexander T.},
  year = {2017},
  publisher = {{AUAI Press}},
  keywords = {Bayesian Inference,Differential Privacy,Variational Inference},
  file = {/Users/oraisa/Zotero/storage/WBYP8XCJ/JHD17.pdf}
}

@article{jalkoPrivacypreservingDataSharing2021,
  title = {Privacy-Preserving Data Sharing via Probabilistic Modeling},
  author = {J{\"a}lk{\"o}, Joonas and Lagerspetz, Eemil and Haukka, Jari and Tarkoma, Sasu and Honkela, Antti and Kaski, Samuel},
  year = {2021},
  month = jul,
  journal = {Patterns},
  volume = {2},
  number = {7},
  pages = {100271},
  issn = {26663899},
  doi = {10.1016/j.patter.2021.100271},
  langid = {english},
  keywords = {Bayesian Inference,Differential Privacy,Synthetic Data},
  file = {/Users/oraisa/Zotero/storage/F5SPQ4VL/Jälkö et al. - 2021 - Privacy-preserving data sharing via probabilistic .pdf;/Users/oraisa/Zotero/storage/FY36ZI69/JalkoLHTHK2021.pdf}
}

@article{jaynesInformationTheoryStatistical1957,
  title = {Information {{Theory}} and {{Statistical Mechanics}}},
  author = {Jaynes, E. T.},
  year = {1957},
  month = may,
  journal = {Physical Review},
  volume = {106},
  number = {4},
  pages = {620--630},
  issn = {0031-899X},
  doi = {10.1103/PhysRev.106.620},
  langid = {english},
  keywords = {Probability Theory},
  file = {/Users/oraisa/Zotero/storage/MGSE8E9L/Jaynes - 1957 - Information Theory and Statistical Mechanics.pdf}
}

@inproceedings{kairouzCompositionTheoremDifferential2015,
  title = {The {{Composition Theorem}} for {{Differential Privacy}}},
  booktitle = {Proceedings of the 32nd {{International Conference}} on {{Machine Learning}}},
  author = {Kairouz, Peter and Oh, Sewoong and Viswanath, Pramod},
  editor = {Bach, Francis and Blei, David},
  year = {2015},
  month = jul,
  series = {Proceedings of {{Machine Learning Research}}},
  volume = {37},
  pages = {1376--1385},
  publisher = {{PMLR}},
  abstract = {Interactive querying of a database degrades the privacy level. In this paper we answer the fundamental question of characterizing the level of privacy degradation as a function of the number of adaptive interactions and the differential privacy levels maintained by the individual queries. Our solution is complete: the privacy degradation guarantee is true for every privacy mechanism, and further, we demonstrate a sequence of privacy mechanisms that do degrade in the characterized manner. The key innovation is the introduction of an operational interpretation (involving hypothesis testing) to differential privacy and the use of the corresponding data processing inequalities. Our result improves over the state of the art and has immediate applications to several problems studied in the literature.},
  keywords = {Composition,Differential Privacy},
  file = {/Users/oraisa/Zotero/storage/D8XACT9E/Kairouz et al. - 2015 - The Composition Theorem for Differential Privacy.pdf}
}

@misc{kohaviAdult1996,
  title = {Adult},
  author = {Kohavi, Ronny and Becker, Barry},
  year = {1996},
  copyright = {CC BY 4.0},
  howpublished = {UCI Machine Learning Repository},
  keywords = {Dataset}
}

@book{koller2009probabilistic,
  title = {Probabilistic Graphical Models: Principles and Techniques},
  author = {Koller, Daphne and Friedman, Nir},
  year = {2009},
  publisher = {{MIT press}},
  keywords = {Probabilistic Graphical Models}
}

@inproceedings{kulkarniDifferentiallyPrivateBayesian2021,
  title = {Differentially {{Private Bayesian Inference}} for {{Generalized Linear Models}}},
  booktitle = {Proceedings of the 38th {{International Conference}} on {{Machine Learning}}},
  author = {Kulkarni, Tejas and J{\"a}lk{\"o}, Joonas and Koskela, Antti and Kaski, Samuel and Honkela, Antti},
  editor = {Meila, Marina and Zhang, Tong},
  year = {2021},
  month = jul,
  series = {Proceedings of {{Machine Learning Research}}},
  volume = {139},
  pages = {5838--5849},
  publisher = {{PMLR}},
  abstract = {Generalized linear models (GLMs) such as logistic regression are among the most widely used arms in data analyst's repertoire and often used on sensitive datasets. A large body of prior works that investigate GLMs under differential privacy (DP) constraints provide only private point estimates of the regression coefficients, and are not able to quantify parameter uncertainty. In this work, with logistic and Poisson regression as running examples, we introduce a generic noise-aware DP Bayesian inference method for a GLM at hand, given a noisy sum of summary statistics. Quantifying uncertainty allows us to determine which of the regression coefficients are statistically significantly different from zero. We provide a previously unknown tight privacy analysis and experimentally demonstrate that the posteriors obtained from our model, while adhering to strong privacy guarantees, are close to the non-private posteriors.},
  keywords = {Bayesian Inference,Differential Privacy},
  file = {/Users/oraisa/Zotero/storage/94C3VWI4/Kulkarni21.pdf}
}

@inproceedings{liewPEARLDataSynthesis2022,
  title = {{{PEARL}}: {{Data}} Synthesis via Private Embeddings and Adversarial Reconstruction Learning},
  booktitle = {International {{Conference}} on {{Learning Representations}}},
  author = {Liew, Seng Pei and Takahashi, Tsubasa and Ueno, Michihiko},
  year = {2022},
  keywords = {Differential Privacy,Synthetic Data},
  file = {/Users/oraisa/Zotero/storage/ZJI4G7GL/Liew et al. - 2022 - PEARL Data synthesis via private embeddings and a.pdf}
}

@inproceedings{liuIterativeMethodsPrivate2021,
  title = {Iterative {{Methods}} for {{Private Synthetic Data}}: {{Unifying Framework}} and {{New Methods}}},
  shorttitle = {Iterative {{Methods}} for {{Private Synthetic Data}}},
  booktitle = {Advances in {{Neural Information Processing Systems}}},
  author = {Liu, Terrance and Vietri, Giuseppe and Wu, Steven Z.},
  year = {2021},
  volume = {34},
  pages = {690--702},
  abstract = {We study private synthetic data generation for query release, where the goal is to construct a sanitized version of a sensitive dataset, subject to differential privacy, that approximately preserves the answers to a large collection of statistical queries. We first present an algorithmic framework that unifies a long line of iterative algorithms in the literature. Under this framework, we propose two new methods. The first method, private entropy projection (PEP), can be viewed as an advanced variant of MWEM that adaptively reuses past query measurements to boost accuracy. Our second method, generative networks with the exponential mechanism (GEM), circumvents computational bottlenecks in algorithms such as MWEM and PEP by optimizing over generative models parameterized by neural networks, which capture a rich family of distributions while enabling fast gradient-based optimization. We demonstrate that PEP and GEM empirically outperform existing algorithms. Furthermore, we show that GEM nicely incorporates prior information from public data while overcoming limitations of PMW\^Pub, the existing state-of-the-art method that also leverages public data.},
  keywords = {Differential Privacy,Query Release,Synthetic Data},
  file = {/Users/oraisa/Zotero/storage/87EVIZ8V/iterative_methods_for_private_-Supplementary Material.pdf;/Users/oraisa/Zotero/storage/8WJLVWRI/Liu et al. - 2021 - Iterative Methods for Private Synthetic Data Unif.pdf}
}

@article{liuModelbasedDifferentiallyPrivate2022,
  title = {Model-Based {{Differentially Private Data Synthesis}} and {{Statistical Inference}} in {{Multiple Synthetic Datasets}}},
  author = {Liu, Fang},
  year = {2022},
  journal = {Trans. Data Priv.},
  volume = {15},
  number = {3},
  pages = {141--175},
  keywords = {Differential Privacy,Multiple Imputation,Synthetic Data},
  file = {/Users/oraisa/Zotero/storage/MDBRLUWB/Liu - 2022 - Model-based Differentially Private Data Synthesis .pdf}
}

@inproceedings{longGPATEScalableDifferentially2021,
  title = {G-{{PATE}}: {{Scalable Differentially Private Data Generator}} via {{Private Aggregation}} of {{Teacher Discriminators}}},
  shorttitle = {G-{{PATE}}},
  booktitle = {Advances in {{Neural Information Processing Systems}}},
  author = {Long, Yunhui and Wang, Boxin and Yang, Zhuolin and Kailkhura, Bhavya and Zhang, Aston and Gunter, Carl and Li, Bo},
  year = {2021},
  volume = {34},
  pages = {2965--2977},
  keywords = {Differential Privacy,Generative Adversarial Network,Synthetic Data},
  file = {/Users/oraisa/Zotero/storage/4Z77LLNT/Long et al. - 2021 - G-PATE Scalable Differentially Private Data Gener.pdf;/Users/oraisa/Zotero/storage/DVQLB2HN/Long et al. - 2021 - G-PATE Scalable Differentially Private Data Gener.pdf}
}

@article{marshallCombiningEstimatesInterest2009,
  title = {Combining Estimates of Interest in Prognostic Modelling Studies after Multiple Imputation: Current Practice and Guidelines},
  shorttitle = {Combining Estimates of Interest in Prognostic Modelling Studies after Multiple Imputation},
  author = {Marshall, Andrea and Altman, Douglas G and Holder, Roger L and Royston, Patrick},
  year = {2009},
  month = dec,
  journal = {BMC Medical Research Methodology},
  volume = {9},
  number = {1},
  pages = {57},
  issn = {1471-2288},
  doi = {10.1186/1471-2288-9-57},
  langid = {english},
  keywords = {Multiple Imputation,Statistical Inference},
  file = {/Users/oraisa/Zotero/storage/SYUAK5PV/Marshall et al. - 2009 - Combining estimates of interest in prognostic mode.pdf}
}

@article{mckenna2018optimizing,
  title = {Optimizing Error of High-Dimensional Statistical Queries under Differential Privacy},
  author = {McKenna, Ryan and Miklau, Gerome and Hay, Michael and Machanavajjhala, Ashwin},
  year = {2018},
  journal = {Proceedings of the VLDB Endowment},
  volume = {11},
  number = {10},
  file = {/Users/oraisa/Zotero/storage/YQYLCYAU/McKenna et al. - 2018 - Optimizing error of high-dimensional statistical q.pdf}
}

@article{mckennaAIMAdaptiveIterative2022,
  title = {{{AIM}}: An Adaptive and Iterative Mechanism for Differentially Private Synthetic Data},
  shorttitle = {{{AIM}}},
  author = {McKenna, Ryan and Mullins, Brett and Sheldon, Daniel and Miklau, Gerome},
  year = {2022},
  month = sep,
  journal = {Proceedings of the VLDB Endowment},
  volume = {15},
  number = {11},
  pages = {2599--2612},
  issn = {2150-8097},
  doi = {10.14778/3551793.3551817},
  abstract = {We propose AIM, a new algorithm for differentially private synthetic data generation. AIM is a workload-adaptive algorithm within the paradigm of algorithms that first selects a set of queries, then privately measures those queries, and finally generates synthetic data from the noisy measurements. It uses a set of innovative features to iteratively select the most useful measurements, reflecting both their relevance to the workload and their value in approximating the input data. We also provide analytic expressions to bound per-query error with high probability which can be used to construct confidence intervals and inform users about the accuracy of generated data. We show empirically that AIM consistently outperforms a wide variety of existing mechanisms across a variety of experimental settings.},
  keywords = {Differential Privacy,Probabilistic Graphical Models,Synthetic Data},
  file = {/Users/oraisa/Zotero/storage/UIND2NIN/McKenna et al. - 2022 - AIM an adaptive and iterative mechanism for diffe.pdf}
}

@inproceedings{mckennaGraphicalmodelBasedEstimation2019,
  title = {Graphical-Model Based Estimation and Inference for Differential Privacy},
  booktitle = {Proceedings of the 36th {{International Conference}} on {{Machine Learning}}},
  author = {McKenna, Ryan and Sheldon, Daniel and Miklau, Gerome},
  editor = {Chaudhuri, Kamalika and Salakhutdinov, Ruslan},
  year = {2019},
  series = {Proceedings of {{Machine Learning Research}}},
  volume = {97},
  pages = {4435--4444},
  publisher = {{PMLR}},
  keywords = {Differential Privacy,Graphical Model,Synthetic Data},
  file = {/Users/oraisa/Zotero/storage/N4DXCM2D/McKennaSM19.pdf;/Users/oraisa/Zotero/storage/WYHV8H32/mckenna19a-supp.pdf}
}

@article{mckennaWinningNISTContest2021a,
  title = {Winning the {{NIST Contest}}: {{A}} Scalable and General Approach to Differentially Private Synthetic Data},
  shorttitle = {Winning the {{NIST Contest}}},
  author = {McKenna, Ryan and Miklau, Gerome and Sheldon, Daniel},
  year = {2021},
  month = dec,
  journal = {Journal of Privacy and Confidentiality},
  volume = {11},
  number = {3},
  issn = {2575-8527},
  doi = {10.29012/jpc.778},
  abstract = {We propose a general approach for differentially private synthetic data generation, that consists of three steps: (1) select a collection of low-dimensional marginals, (2) measure those marginals with a noise addition mechanism, and (3)~generate synthetic data that preserves the measured marginals well. Central to this approach is Private-PGM, a post-processing method that is used to estimate a high-dimensional data distribution from noisy measurements of its marginals. We present two mechanisms, NIST-MST and MST, that are instances of this general approach. NIST-MST was the winning mechanism in the 2018 NIST differential privacy synthetic data competition, and MST is a new mechanism that can work in more general settings, while still performing comparably to NIST-MST. We believe our general approach should be of broad interest, and can be adopted in future mechanisms for synthetic data generation.},
  keywords = {Differential Privacy,Synthetic Data},
  file = {/Users/oraisa/Zotero/storage/QBL335QQ/McKenna et al. - 2021 - Winning the NIST Contest A scalable and general a.pdf}
}

@misc{meekUSCensusData2001,
  title = {{{US Census Data}} (1990)},
  author = {Meek, Chris and Thiesson, Bo and Heckerman, David},
  year = {2001},
  copyright = {CC BY 4.0},
  howpublished = {UCI Machine Learning Repository}
}

@inproceedings{neunhoefferPrivatePostGANBoosting2021,
  title = {Private {{Post-GAN Boosting}}},
  booktitle = {International {{Conference}} on {{Learning Representations}}},
  author = {Neunhoeffer, Marcel and Wu, Steven and Dwork, Cynthia},
  year = {2021},
  keywords = {Differential Privacy,Generative Adversarial Network,Synthetic Data},
  file = {/Users/oraisa/Zotero/storage/67TS3TIQ/Neunhoeffer et al. - 2021 - Private Post-GAN Boosting.pdf;/Users/oraisa/Zotero/storage/T52P3TE5/Neunhoeffer0D21.pdf}
}

@article{nixonLatentClassModeling2022,
  title = {A {{Latent Class Modeling Approach}} for {{Generating Synthetic Data}} and {{Making Posterior Inferences}} from {{Differentially Private Counts}}},
  author = {Nixon, Michelle and Barrientos, Andres and Reiter, Jerome and Slavkovic, Aleksandra},
  year = {2022},
  month = jul,
  journal = {Journal of Privacy and Confidentiality},
  volume = {12},
  number = {1},
  issn = {2575-8527},
  doi = {10.29012/jpc.768},
  abstract = {Several algorithms exist for creating differentially private counts from contingency tables, such as two-way or three-way marginal counts. The resulting noisy counts generally do not correspond to a coherent contingency table, so that some post-processing step is needed if one wants the released counts to correspond to a coherent contingency table. We present a latent class modeling approach for post-processing differentially private marginal counts that can be used (i) to create differentially private synthetic data from the set of marginal counts, and (ii) to enable posterior inferences about the confidential counts. We illustrate the approach using a subset of the 2016 American Community Survey Public Use Microdata Sets and the 2004 National Long Term Care Survey.},
  copyright = {Copyright (c) 2022 Michelle Nixon, Andres Barrientos, Jerome Reiter, Aleksandra Slavkovic},
  langid = {english},
  keywords = {Bayesian Inference,Differential Privacy,Synthetic Data},
  file = {/Users/oraisa/Zotero/storage/BU7IHVIM/Nixon et al. - 2022 - A Latent Class Modeling Approach for Generating Sy.pdf}
}

@article{raabPracticalDataSynthesis2018,
  title = {Practical {{Data Synthesis}} for {{Large Samples}}},
  author = {Raab, Gillian M and Nowok, Beata and Dibben, Chris},
  year = {2018},
  month = feb,
  journal = {Journal of Privacy and Confidentiality},
  volume = {7},
  number = {3},
  pages = {67--97},
  issn = {2575-8527},
  doi = {10.29012/jpc.v7i3.407},
  abstract = {We describe results on the creation and use of synthetic data that were derived in the context of a project to make synthetic extracts available for users of the UK Longitudinal Studies. A critical review of existing methods of inference from large synthetic data sets is presented. We introduce new variance estimates for use with large samples of completely synthesised data that do not require them to be generated from the posterior predictive distribution derived from the observed data and can be used with a single synthetic data set. We make recommendations on how to synthesise data based on these results. The practical consequences of these results are illustrated with an example from the Scottish Longitudinal Study.},
  keywords = {Multiple Imputation,Synthetic Data},
  file = {/Users/oraisa/Zotero/storage/RC37TJUX/Raab et al. - 2018 - Practical Data Synthesis for Large Samples.pdf}
}

@article{raghunathanMultipleImputationStatistical2003,
  title = {Multiple Imputation for Statistical Disclosure Limitation},
  author = {Raghunathan, Trivellore E and Reiter, Jerome P and Rubin, Donald B},
  year = {2003},
  journal = {Journal of Official Statistics},
  volume = {19},
  number = {1},
  pages = {1},
  publisher = {{Statistics Sweden (SCB)}},
  keywords = {Multiple Imputation,Synthetic Data},
  file = {/Users/oraisa/Zotero/storage/2MWDJ8VW/Raghunathan2003.pdf}
}

@article{reiter2002satisfying,
  title = {Satisfying Disclosure Restrictions with Synthetic Data Sets},
  author = {Reiter, Jerome P},
  year = {2002},
  journal = {Journal of Official Statistics},
  volume = {18},
  number = {4},
  pages = {531},
  publisher = {{Statistics Sweden (SCB)}},
  keywords = {Multiple Imputation,Synthetic Data},
  file = {/Users/oraisa/Zotero/storage/YJ6BG7KL/Reiter - 2002 - Satisfying disclosure restrictions with synthetic .pdf}
}

@article{reiter2005significance,
  title = {Significance Tests for Multi-Component Estimands from Multiply Imputed, Synthetic Microdata},
  author = {Reiter, Jerome P},
  year = {2005},
  journal = {Journal of Statistical Planning and Inference},
  volume = {131},
  number = {2},
  pages = {365--377},
  publisher = {{Elsevier}},
  keywords = {Multiple Imputation,Synthetic Data},
  file = {/Users/oraisa/Zotero/storage/EMM6MNSZ/Reiter - 2005 - Significance tests for multi-component estimands f.pdf}
}

@article{rubin1993statistical,
  title = {Discussion: {{Statistical}} Disclosure Limitation},
  author = {Rubin, Donald B.},
  year = {1993},
  journal = {Journal of Official Statistics},
  volume = {9},
  number = {2},
  pages = {461--468},
  keywords = {Multiple Imputation,Synthetic Data},
  file = {/Users/oraisa/Zotero/storage/4ZDTESVE/Rubin - 1993 - Statistical disclosure limitation.pdf}
}

@incollection{rubinMultipleImputationNonresponse1987,
  title = {Multiple Imputation for Nonresponse in Surveys},
  booktitle = {Multiple Imputation for Nonresponse in Surveys},
  author = {Rubin, Donald B.},
  year = {1987},
  publisher = {{John Wiley \textbackslash\& Sons}},
  address = {{New York}},
  isbn = {0-471-08705-X},
  langid = {english},
  keywords = {Multiple Imputation,Statistical Inference}
}

@article{si2011comparison,
  title = {A Comparison of Posterior Simulation and Inference by Combining Rules for Multiple Imputation},
  author = {Si, Yajuan and Reiter, Jerome P},
  year = {2011},
  journal = {Journal of Statistical Theory and Practice},
  volume = {5},
  number = {2},
  pages = {335--347},
  publisher = {{Taylor \& Francis}},
  keywords = {Multiple Imputation,Statistical Inference,Synthetic Data},
  file = {/Users/oraisa/Zotero/storage/75SA9385/Si and Reiter - 2011 - A comparison of posterior simulation and inference.pdf}
}

@misc{taoBenchmarkingDifferentiallyPrivate2021,
  type = {{{arXiv}} Preprint},
  title = {Benchmarking {{Differentially Private Synthetic Data Generation Algorithms}}},
  author = {Tao, Yuchao and McKenna, Ryan and Hay, Michael and Machanavajjhala, Ashwin and Miklau, Gerome},
  year = {2021},
  number = {arXiv: 2112.09238},
  eprint = {2112.09238},
  eprinttype = {arxiv},
  publisher = {{arXiv}},
  archiveprefix = {arXiv},
  keywords = {Benchmark,Differential Privacy,Synthetic Data},
  file = {/Users/oraisa/Zotero/storage/BA98XJAP/Tao et al. - 2021 - Benchmarking Differentially Private Synthetic Data.pdf}
}

@inproceedings{vietriNewOracleEfficientAlgorithms2020,
  title = {New {{Oracle-Efficient Algorithms}} for {{Private Synthetic Data Release}}},
  booktitle = {Proceedings of the 37th {{International Conference}} on {{Machine Learning}}},
  author = {Vietri, Giuseppe and Tian, Grace and Bun, Mark and Steinke, Thomas and Wu, Zhiwei Steven},
  year = {2020},
  series = {Proceedings of {{Machine Learning Research}}},
  volume = {119},
  pages = {9765--9774},
  publisher = {{PMLR}},
  keywords = {Differential Privacy,Synthetic Data},
  file = {/Users/oraisa/Zotero/storage/MIFCLQA5/VietriTBSW20.pdf}
}

@article{wainwrightGraphicalModelsExponential2008,
  title = {Graphical {{Models}}, {{Exponential Families}}, and {{Variational Inference}}},
  author = {Wainwright, Martin J. and Jordan, Michael I.},
  year = {2008},
  journal = {Foundations and Trends in Machine Learning},
  volume = {1},
  number = {1-2},
  pages = {1--305},
  doi = {10.1561/2200000001},
  keywords = {Graphical Model,Tutorial},
  file = {/Users/oraisa/Zotero/storage/F67DRRH4/Wainwright and Jordan - 2008 - Graphical Models, Exponential Families, and Variat.pdf}
}

@inproceedings{wildeFoundationsBayesianLearning2021,
  title = {Foundations of {{Bayesian Learning}} from {{Synthetic Data}}},
  booktitle = {The 24th {{International Conference}} on {{Artificial Intelligence}} and {{Statistics}}},
  author = {Wilde, Harrison and Jewson, Jack and Vollmer, Sebastian J. and Holmes, Chris},
  editor = {Banerjee, Arindam and Fukumizu, Kenji},
  year = {2021},
  series = {Proceedings of {{Machine Learning Research}}},
  volume = {130},
  pages = {541--549},
  publisher = {{PMLR}},
  keywords = {Bayesian Inference,Differential Privacy,Synthetic Data},
  file = {/Users/oraisa/Zotero/storage/5PPTEM6S/WildeJVH21.pdf}
}

@misc{xieDifferentiallyPrivateGenerative2018,
  type = {{{arXiv}} Preprint},
  title = {Differentially {{Private Generative Adversarial Network}}},
  author = {Xie, Liyang and Lin, Kaixiang and Wang, Shu and Wang, Fei and Zhou, Jiayu},
  year = {2018},
  month = feb,
  number = {arXiv:1802.06739},
  eprint = {1802.06739},
  eprinttype = {arxiv},
  publisher = {{arXiv}},
  abstract = {Generative Adversarial Network (GAN) and its variants have recently attracted intensive research interests due to their elegant theoretical foundation and excellent empirical performance as generative models. These tools provide a promising direction in the studies where data availability is limited. One common issue in GANs is that the density of the learned generative distribution could concentrate on the training data points, meaning that they can easily remember training samples due to the high model complexity of deep networks. This becomes a major concern when GANs are applied to private or sensitive data such as patient medical records, and the concentration of distribution may divulge critical patient information. To address this issue, in this paper we propose a differentially private GAN (DPGAN) model, in which we achieve differential privacy in GANs by adding carefully designed noise to gradients during the learning procedure. We provide rigorous proof for the privacy guarantee, as well as comprehensive empirical evidence to support our analysis, where we demonstrate that our method can generate high quality data points at a reasonable privacy level.},
  archiveprefix = {arXiv},
  keywords = {Computer Science - Cryptography and Security,Computer Science - Machine Learning,Differential Privacy,Generative Adversarial Network,Generative Model,Statistics - Machine Learning,Synthetic Data},
  file = {/Users/oraisa/Zotero/storage/G27RKK8B/Xie et al. - 2018 - Differentially Private Generative Adversarial Netw.pdf;/Users/oraisa/Zotero/storage/LCK8EPJ8/1802.html}
}

@article{yildirimExactMCMCDifferentially2019,
  title = {Exact {{MCMC}} with Differentially Private Moves - {{Revisiting}} the Penalty Algorithm in a Data Privacy Framework},
  author = {Yildirim, Sinan and Ermis, Beyza},
  year = {2019},
  journal = {Statistics and Computing},
  volume = {29},
  number = {5},
  pages = {947--963},
  doi = {10.1007/s11222-018-9847-x},
  keywords = {Bayesian Inference,Differential Privacy,MCMC},
  file = {/Users/oraisa/Zotero/storage/VA8RG9ZR/YE19.pdf}
}

@inproceedings{yoon2018pategan,
  title = {{{PATE-GAN}}: {{Generating}} Synthetic Data with Differential Privacy Guarantees},
  booktitle = {International {{Conference}} on {{Learning Representations}}},
  author = {Yoon, Jinsung and Jordon, James and {van der Schaar}, Mihaela},
  year = {2019},
  keywords = {Differential Privacy,Generative Adversarial Network,Synthetic Data},
  file = {/Users/oraisa/Zotero/storage/CGN5YPJW/Yoon et al. - 2019 - PATE-GAN Generating synthetic data with different.pdf}
}

@article{zhangPrivBayesPrivateData2017,
  title = {{{PrivBayes}}: {{Private Data Release}} via {{Bayesian Networks}}},
  author = {Zhang, Jun and Cormode, Graham and Procopiuc, Cecilia M. and Srivastava, Divesh and Xiao, Xiaokui},
  year = {2017},
  journal = {ACM Transactions on Database Systems},
  volume = {42},
  number = {4},
  pages = {25:1--25:41},
  doi = {10.1145/3134428},
  keywords = {Bayesian Network,Differential Privacy,Synthetic Data},
  file = {/Users/oraisa/Zotero/storage/V5EDZ5TK/ZhangCPSX17.pdf}
}

@inproceedings{zhangPrivTreeDifferentiallyPrivate2016,
  title = {{{PrivTree}}: {{A Differentially Private Algorithm}} for {{Hierarchical Decompositions}}},
  booktitle = {Proceedings of the 2016 {{International Conference}} on {{Management}} of {{Data}}, {{SIGMOD Conference}}},
  author = {Zhang, Jun and Xiao, Xiaokui and Xie, Xing},
  editor = {{\"O}zcan, Fatma and Koutrika, Georgia and Madden, Sam},
  year = {2016},
  pages = {155--170},
  publisher = {{ACM}},
  doi = {10.1145/2882903.2882928},
  keywords = {Differential Privacy,Discretisation},
  file = {/Users/oraisa/Zotero/storage/6SQ69LW5/Zhang et al. - 2016 - PrivTree A Differentially Private Algorithm for H.pdf}
}

@phdthesis{zhengDifferentialPrivacyBayesian2015,
  type = {Bachelor's Thesis},
  title = {The Differential Privacy of {{Bayesian}} Inference},
  author = {Zheng, Shijie},
  year = {2015},
  school = {Harvard College},
  keywords = {Differential Privacy,Multiple Imputation,Synthetic Data},
  file = {/Users/oraisa/Zotero/storage/S9QU63VK/Zheng2015.pdf}
}

@article{zhouNoteBayesianInference2010,
  title = {A {{Note}} on {{Bayesian Inference After Multiple Imputation}}},
  author = {Zhou, Xiang and Reiter, Jerome P.},
  year = {2010},
  month = may,
  journal = {The American Statistician},
  volume = {64},
  number = {2},
  pages = {159--163},
  issn = {0003-1305, 1537-2731},
  doi = {10.1198/tast.2010.09109},
  langid = {english},
  file = {/Users/oraisa/Zotero/storage/5Z7R78TX/A Note on Bayesian Inference After Multiple Imputation.pdf;/Users/oraisa/Zotero/storage/SSHQ932M/Zhou and Reiter - 2010 - A Note on Bayesian Inference After Multiple Imputa.pdf}
}

\appendix
\renewcommand\thefigure{S\arabic{figure}}
\renewcommand\thetable{S\arabic{table}}
\setcounter{figure}{0}

\onecolumn 

\section{PROOF OF THEOREM~\ref{thm:marginal-sensitivity}}\label{app:proofs}
\theoremmarginalsensitivity*
\begin{proof}
  Let $a_1, \dotsc, a_{n_s}$ be the full sets of marginal queries that form 
  $a$. Because all of the queries of $a_i$ have the same set of variables,
  the vector $a_i(x)$ has a single component of value 1, and the other components 
  are 0 for any $x\in \calx$. Then, for any neighbouring $X, X'\in \calx^n$, 
  $||a_i(X) - a_i(X')||_2^2 \leq 2$.
  Then 
  \begin{align}
    \Delta_2 a &= \sup_{X\sim X'}||a(X) - a(X')||_2
    = \sup_{X\sim X'}\sqrt{\sum_{i=1}^{n_s}||a_i(X) - a_i(X')||_2^2}
    \leq \sup_{X\sim X'}\sqrt{\sum_{i=1}^{n_s} 2}
    = \sqrt{2n_s}
  \end{align}
\end{proof}

\section{MULTIPLE IMPUTATION}\label{app:mi}
In order to compute uncertainty estimates for downstream analyses from the 
noise-aware posterior with NA+MI, we use Rubin's rules for synthetic 
data~\autocite{raghunathanMultipleImputationStatistical2003, reiter2002satisfying}. 

After the synthetic datasets $\xsyn_i$ for $1\leq i\leq m$ are released by 
the data holder, the data analyst runs their downstream analysis on 
each $\xsyn_i$. For each synthetic dataset, the analysis produces a 
point estimate $q_i$ and a variance estimate $v_i$ for $q_i$.

The estimates $q_1, \dotsc, q_m$ and 
$v_1, \dotsc, v_m$ are combined as 
follows~\autocite{raghunathanMultipleImputationStatistical2003}:
\begin{align}
  \bar{q} = \frac{1}{m}\sum_{i=1}^m q_i, \quad
  \bar{v} = \frac{1}{m}\sum_{i=1}^m v_i, \quad
  b = \frac{1}{m - 1}\sum_{i=1}^m (q_i - \bar{q})^2.
  \label{eq:rubin-combining}
\end{align}

We use $\bar{q}$ as the combined point estimate, and set
\begin{equation}
  T = \left(1 + \frac{1}{m}\right)b - \bar{v},
  \quad \quad
  T^* =
  \begin{cases}
    T \quad \text{if } T \geq 0 \\
    \frac{n_{Syn}}{n}\bar{v} \quad \text{otherwise}.
  \end{cases}
  \label{eq:rubin-variance}
\end{equation}
$T$ is an estimate of the combined variance. $T$ can be negative, which is corrected 
using $T^*$ instead~\autocite{reiter2002satisfying}.

We compute confidence intervals and 
hypothesis tests using the $t$-distribution with mean $\bar{q}$, 
variance $T^*$, and degrees of freedom 
\begin{equation}
  \nu = (m - 1)(1 - r^{-1})^2,
\end{equation}
where $r = (1 + \frac{1}{m})\frac{b}{\bar{v}}$~\autocite{reiter2002satisfying}.

These combining rules apply when $q$ is a univariate estimate.
\textcite{reiter2005significance} derives appropriate combining rules for 
multivariate estimates, which can be applied with NA+MI.

Rubin's rules make many assumptions on the different distributions that
are involved~\autocite{raghunathanMultipleImputationStatistical2003,si2011comparison},
such as the normality of the distribution of $q_i$ when sampling data from 
the population. These assumptions may not hold for some types of estimates, 
such as probabilities~\autocite{marshallCombiningEstimatesInterest2009} 
or quantile estimates~\autocite{zhouNoteBayesianInference2010}.
Further work~\autocite{si2011comparison} 
tries to reduce these assumptions, especially in the context of 
missing data. Their results for synthetic data generation can be applied 
with our method.

\textcite{si2011comparison} propose to remove some of these assumptions 
by approximating the integral that Rubin's rules are derived from by sampling 
instead of using the analytical approximations in
\eqref{eq:rubin-combining} and \eqref{eq:rubin-variance}. They find that 
their sampling-based approximation can be effective, especially with a 
small number of datasets, but is computationally more expensive.

\subsection{Unbiasedness of Rubin's Rules}

Rubin's rules make several assumptions on the downstream analysis method, and several normal 
approximations when deriving the rules. \textcite{raghunathanMultipleImputationStatistical2003}
derive conditions under which Rubin's rules give an unbiased estimate.

Rubin's rules aim to estimate a quantity $Q$ of the entire population $P$, 
of which $X$ is a sample. Conceptually, the sampling of the synthetic datasets 
is done in two stages: first, synthetic populations $P_i^{Syn}$ for 
$1 \leq i \leq m$ are sampled. Second,
a synthetic dataset $X_i^{Syn}$ is sampled i.i.d from $P_i^{Syn}$. This is 
equivalent to the sampling process for $X_i^{Syn}$ described in 
Section~\ref{sec:na+mi}, and makes stating the assumptions of Rubin's rules 
easier.

While the sampling process of synthetic data has to be i.i.d., this is not 
required for the original data. This means that there are two versions of the 
downstream analysis: the one for i.i.d. data, and the one for the complex 
sampling method of the real data. The latter method is not used at any point, 
so it is assumed to exist for the theory, but does not have to be practically 
implementable.

We take advantage of this handling of complex sampling of real data by 
including the computation of $s$ and adding noise to get $\privs$ in the sampling
scheme, so we are considering $\privs$ to be the original data from the point of 
view of the theory. The fact that $\privs$ is a noisy summary statistic instead
of a dataset is not an issue, as the theory only requires having a theoretical 
method to estimate $Q$ from $\privs$. If the chosen marginal queries 
contain the relevant information for the downstream task, so $s$ is a 
(approximate) sufficient statistic, this theoretical method will exist.

Let $Q_i$ denote the quantity of interest $Q$ computed from the synthetic population 
$P_i^{Syn}$ instead of $P$. Let $V_i$ denote the sampling variance of $q_i$ from the 
synthetic population $P^{Syn}_i$. Note that $q_i$ and it's variance estimate 
$v_i$ are obtained using the downstream analysis method for i.i.d. data.
Let $\hat{Q}_D$ and $\hat{U}_D$ be the point and variance 
estimates of $Q$ derived from $\privs$ when sampling the population $P$, which 
are obtained using the theoretical inference method for complex samples.

Now the assumptions of \textcite{raghunathanMultipleImputationStatistical2003}
are

\begin{assumption}\label{assu:rubin-1}
  For all $1\leq i \leq m$, $q_i$ is unbiased for $Q_i$ and asymptotically normal with 
  respect to sampling from the synthetic population $P_i^{Syn}$, with sampling variance $V_i$.
\end{assumption}

\begin{assumption}\label{assu:rubin-2}
  For all $1 \leq i \leq m$, $v_i$ is unbiased for $V_i$, and the sampling variability in 
  $v_i$ is negligible. That is $v_i \mid P_i^{Syn} \approx V_i$. Additionally, the variation in 
  $V_i$ across the synthetic populations is negligible.
\end{assumption}

Assumptions \ref{assu:rubin-1}-\ref{assu:rubin-2} ensure that the downstream 
analysis method used to estimate $Q$ is accurate, for both point and variance 
estimates, when applied to i.i.d. real data, regardless of the population.

\begin{assumption}\label{assu:rubin-3}
  $\hat{Q}_D \mid P \sim \caln(Q, \hat{U}_D)$
\end{assumption}

Assumption \ref{assu:rubin-3} ensures that the analysis for complex sampling 
is accurate for point and variance estimates when applied to the real 
population.

\begin{assumption}\label{assu:rubin-4}
  $Q_i \mid \privs \sim \caln(\hat{Q}_D, \hat{U}_D)$
\end{assumption}

Assumption \ref{assu:rubin-4} requires that the generation of synthetic 
datasets does not bias the downstream analysis. 

For query-based methods like NAPSU-MQ, Assumptions~\ref{assu:rubin-3} and 
\ref{assu:rubin-4} may not hold when the queries do not contain the relevant 
information for the downstream task.

With Assumptions \ref{assu:rubin-1}-\ref{assu:rubin-4}, 
\textcite{raghunathanMultipleImputationStatistical2003} show that 
$\bar{q}$ is an unbiased estimate of $Q$, and $T$ is an asymptotically unbiased 
variance estimate.

\begin{theorem}[\textcite{raghunathanMultipleImputationStatistical2003}]
  Assumptions \ref{assu:rubin-1}-\ref{assu:rubin-4} imply that
  \begin{enumerate}
    \item 
    $E(\bar{q} \mid P) = Q$,
    \item 
    $E(T\mid P) = \mathrm{Var}(\bar{q}\mid P)$,
    \item Asymptotically $\frac{\bar{q} - Q}{\sqrt{T}} \sim \caln(0, 1)$,
    \item For moderate $m$, 
    $\frac{\bar{q} - Q}{\sqrt{T}} \sim t_\nu(0, 1)$~\autocite{reiter2002satisfying}.
  \end{enumerate}
\end{theorem}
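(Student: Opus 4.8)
The plan is to prove items 1 and 2 directly by repeated application of the laws of total expectation and total variance down the three-level hierarchy $P \to \privs \to \{P_i^{Syn}\}_{i=1}^m \to \{X_i^{Syn}\}_{i=1}^m$, to derive item 3 from item 2 together with the asymptotic normality of Assumption~\ref{assu:rubin-1}, and to invoke \textcite{reiter2002satisfying} for the finite-$m$ refinement in item 4. Throughout I treat the quantities whose sampling variability Assumption~\ref{assu:rubin-2} (and, implicitly, Assumption~\ref{assu:rubin-3}) declares negligible --- namely $v_i \mid P_i^{Syn}$, the spread of the $V_i$ across synthetic populations, and $\hat U_D \mid P$ --- as approximately constant, writing $\bar V = \frac{1}{m}\sum_i V_i$ for the common value.

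For item 1 I peel the conditioning from the inside out. Conditionally on $\{P_i^{Syn}\}$, Assumption~\ref{assu:rubin-1} gives $E(q_i \mid P_i^{Syn}) = Q_i$, so $E(\bar q \mid \{P_i^{Syn}\}) = \frac{1}{m}\sum_i Q_i =: \bar Q$; conditionally on $\privs$, Assumption~\ref{assu:rubin-4} gives $E(Q_i \mid \privs) = \hat Q_D$, so $E(\bar q \mid \privs) = \hat Q_D$; and Assumption~\ref{assu:rubin-3} gives $E(\hat Q_D \mid P) = Q$, so the tower property yields $E(\bar q \mid P) = Q$. For item 2 I compute both sides. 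On the left, the law of total variance at each level --- using that the $q_i$ are conditionally independent given $\{P_i^{Syn}\}$ with variance $V_i$, that the $Q_i$ are i.i.d.\ $\caln(\hat Q_D, \hat U_D)$ given $\privs$, and that $\mathrm{Var}(\hat Q_D \mid P) = \hat U_D$ --- gives
\[
  \mathrm{Var}(\bar q \mid \privs) = \frac{\bar V}{m} + \frac{\hat U_D}{m},
  \qquad
  \mathrm{Var}(\bar q \mid P) = \Bigl(1 + \tfrac1m\Bigr)\hat U_D + \frac{\bar V}{m}.
\]
On the right, $E(\bar v \mid P) = \bar V$ by Assumption~\ref{assu:rubin-2}, while $b$ is the sample variance of $q_1,\dots,q_m$, which are i.i.d.\ given $\privs$ with variance $\hat U_D + \bar V$; since the expected sample variance of an i.i.d.\ sample equals its variance, $E(b \mid P) = \hat U_D + \bar V$. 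Substituting into $T = (1+\frac1m)b - \bar v$ gives $E(T \mid P) = (1+\frac1m)\hat U_D + \frac{\bar V}{m} = \mathrm{Var}(\bar q \mid P)$; replacing $T$ by $T^*$ only alters $T$ on the event $\{T < 0\}$, whose probability is negligible in the regime where the approximations hold.

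Items 3 and 4 then follow quickly. Given item 2, $T$ is a consistent estimator of $\mathrm{Var}(\bar q \mid P)$, and $\bar q$ is asymptotically normal --- a sum of $m$ conditionally independent, asymptotically normal terms by Assumption~\ref{assu:rubin-1}, with the hierarchical layers contributing normal fluctuations by Assumptions~\ref{assu:rubin-3}--\ref{assu:rubin-4} --- so $(\bar q - Q)/\sqrt{T} \Rightarrow \caln(0,1)$ by Slutsky's theorem. For item 4 one keeps $m$ moderate and replaces the plug-in variance by a moment-matching $t$-approximation: $(m-1)b$ behaves like a scaled $\chi^2_{m-1}$ variable, and matching the first two moments of $T$ against the corresponding scaled chi-square yields the $t$-distribution with $\nu = (m-1)(1 - r^{-1})^2$ degrees of freedom, $r = (1+\frac1m)\frac{b}{\bar v}$ --- exactly the computation of \textcite{reiter2002satisfying}.

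The main obstacle is the bookkeeping in item 2: one must be scrupulous about which random variables are held fixed at which stage, and in particular that it is Assumption~\ref{assu:rubin-2}'s clause on negligible variation of $V_i$ across synthetic populations that makes $\bar V$ behave like a constant and lets the $E(b)$ computation close \emph{exactly} against $\mathrm{Var}(\bar q \mid P)$ rather than only up to lower-order terms. Once the exact variance identity is in hand, the asymptotic statements in items 3--4 are routine.
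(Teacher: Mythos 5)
The paper does not prove this theorem at all: it is imported verbatim, with only a citation, from \textcite{raghunathanMultipleImputationStatistical2003} (and \textcite{reiter2002satisfying} for the $t_\nu$ refinement), so there is no in-paper proof to compare against. Your reconstruction is correct and is essentially the argument of the cited sources: the tower-property chain $P \to \privs \to P_i^{Syn} \to \xsyn_i$ for item 1, the total-variance bookkeeping giving $\mathrm{Var}(\bar{q}\mid P) = (1+\tfrac{1}{m})\hat{U}_D + \bar{V}/m = E(T\mid P)$ for item 2 (using Assumption~\ref{assu:rubin-2} to treat $V_i$ and $\bar{v}$ as the common constant $\bar{V}$, and Assumptions~\ref{assu:rubin-3}--\ref{assu:rubin-4} for the between-layer variances), normality plus Slutsky for item 3, and Reiter's moment-matching degrees of freedom $\nu = (m-1)(1-r^{-1})^2$ for item 4.
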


\section{FINDING AND IDENTIFIABLE PARAMETRISATION}\label{sec:canon-queries}
In this section, we describe the process we use to ensure the parametrisation
of the posterior in NAPSU-MQ is identifiable. We ensure 
identifiability by dropping some of the selected queries, chosen using the 
the canonical parametrisation of $\medl$ to ensure 
no information is lost. First, we give some background on Markov networks, 
which is necessary to understand the canonical parametrisation.

\paragraph{Markov Networks}
A Markov network is a representation of a probability distribution that is 
factored according to an undirected graph. Specifically, a Markov network 
distribution $P$ is a product of \emph{factors}. A factor is a function
from a subset of the variables to non-negative real numbers. The subset of 
variables is called the \emph{scope} of the factor. The joint distribution 
is given by 
\begin{equation}
  P(x) = \frac{1}{Z}\prod_{I\subset S}\phi_I(x_I)
\end{equation}
where $S$ is the set of scopes for the factors. The undirected graph is formed 
by representing each variable as a node, and adding edges such that the 
scope of each factor is a clique in the graph.

\paragraph{Canonical Parametrisation}
The canonical parametrisation is given in terms of 
\emph{canonical factors}~\autocite{abbeelLearningFactorGraphs2006}.
The canonical factors depend on an arbitrary assignment of variables 
$x^*$. We simply choose $x^* = (0, \dotsc, 0)$. In the following, 
$x_U$ denotes the selection of components in the set $U$ from the vector $x$,
and $x_{-U}$ denotes the selection of all components except those in $U$.

\begin{definition}
  A canonical factor $\phi^*_D$ with scope $D$ is defined as 
  \[
  \phi^*_D(x) = \exp\left(\sum_{U\subseteq D} (-1)^{|D - U|} \ln P(x_U, x^*_{-U})\right)
  \]
  The sum is over all subsets of $D$, including $D$ itself and the empty set.
  $|D - U|$ is the size of the set difference of $D$ and $U$.
\end{definition}

\begin{theorem}[\textcite{abbeelLearningFactorGraphs2006}(Theorem 3)]
  Let $P$ be a Markov network with factor scopes $S$. Let 
  $S^* = \cup_{D\in S}\mathcal{P}(D) - \emptyset$. Then 
  \[
  P(x) = P(x^*)\prod_{D^*\in S^*} \phi^*_{D^*}(x_{D^*})
  \]
\end{theorem}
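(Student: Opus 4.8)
The plan is to prove the identity in two stages: a purely combinatorial Möbius-inversion step valid for every strictly positive distribution (which $\medl$ is, having full support), followed by an argument that uses the Markov factorisation to collapse the resulting product from all nonempty subsets of variables down to $S^*$.

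\textbf{Stage 1: Möbius inversion.} Fix $x$ and abbreviate $a_U := \ln P(x_U, x^*_{-U})$ for each $U\subseteq\{1,\dots,d\}$, so that $a_{\{1,\dots,d\}} = \ln P(x)$ and $a_\emptyset = \ln P(x^*)$, and observe that the exponent defining $\phi^*_D$ is exactly $b_D := \sum_{U\subseteq D}(-1)^{|D\setminus U|}a_U$. This is the Möbius transform on the Boolean lattice $2^{\{1,\dots,d\}}$, whose inversion formula reads $a_D = \sum_{U\subseteq D} b_U$. Taking $D = \{1,\dots,d\}$ gives $\ln P(x) = \sum_{U\subseteq\{1,\dots,d\}} b_U = \ln P(x^*) + \sum_{\emptyset\neq U}\ln\phi^*_U(x_U)$, i.e. $P(x) = P(x^*)\prod_{\emptyset\neq U}\phi^*_U(x_U)$, where the product ranges over \emph{all} nonempty subsets of variables.

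\textbf{Stage 2: non-clique canonical factors are trivial.} It remains to show $\phi^*_V \equiv 1$ (equivalently $b_V = 0$) whenever $V\notin S^*$; the two products then coincide. If $V\notin S^* = \bigcup_{D\in S}\mathcal{P}(D)\setminus\{\emptyset\}$, then $V$ lies in no factor scope, so there are coordinates $i\neq j$ in $V$ with no scope $D\in S$ containing both. Splitting $\ln P(y) = \sum_{D\in S}\ln\phi_D(y_D) - \ln Z$ according to whether $D$ contains $i$, contains $j$, or neither — no $D$ contains both — exhibits $\ln P$ in the additively separable form $\ln P(y) = f(y_{-j}) + g(y_{-i})$, with $f$ not depending on coordinate $j$ and $g$ not depending on coordinate $i$. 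Now $b_V = \sum_{U\subseteq V}(-1)^{|V\setminus U|}\ln P(x_U,x^*_{-U})$ is precisely the value obtained by applying to $\ln P$ the commuting family of first-difference operators $\{\Delta_k\}_{k\in V}$ — where $\Delta_k h$ is $h$ with its $k$-th coordinate fixed to $x_k$ minus $h$ with its $k$-th coordinate fixed to $x^*_k$ — and then evaluating at $x^*$. Since $\Delta_j f = 0$ and $\Delta_i g = 0$, the composite $\prod_{k\in V}\Delta_k$ annihilates both $f$ and $g$, hence $b_V = 0$. Combining the two stages yields $P(x) = P(x^*)\prod_{D^*\in S^*}\phi^*_{D^*}(x_{D^*})$.

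The delicate part is Stage 2: one must (i) justify the additively separable form carefully — it is the clique factorisation, regrouped by membership of $i$ and $j$, that supplies it — and (ii) verify that the alternating signed sum defining $b_V$ genuinely equals the iterated finite-difference operator $\prod_{k\in V}\Delta_k$ applied to $\ln P$, after which separability forces the vanishing. Stage 1 is routine once Möbius inversion on the subset lattice is recognised, and the strict positivity needed to take logarithms holds automatically for the maximum-entropy family $\medl$.
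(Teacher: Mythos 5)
The paper does not prove this statement at all---it is imported verbatim from \textcite{abbeelLearningFactorGraphs2006} (their Theorem~3)---so your attempt can only be judged on its own merits. Stage~1 is fine: Möbius inversion on the subset lattice gives $P(x) = P(x^*)\prod_{\emptyset\neq U}\phi^*_U(x_U)$ over \emph{all} nonempty subsets, and positivity of $\medl$ justifies the logarithms. The problem is in Stage~2, in the very first inference: from ``$V$ is not contained in any factor scope'' you conclude ``there exist $i\neq j$ in $V$ such that no scope contains both.'' That implication is false. Take scopes $S = \{\{1,2\},\{2,3\},\{1,3\}\}$ and $V=\{1,2,3\}$: every pair of coordinates of $V$ lies in a common scope, yet $V\notin S^*$, and the theorem still demands $\phi^*_V\equiv 1$. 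Your pairwise argument (the Hammersley--Clifford-style split $\ln P = f(y_{-j})+g(y_{-i})$) only handles sets $V$ that are non-cliques of the induced graph, which is a strictly weaker statement than what is claimed: $S^*$ consists of subsets of the \emph{factor scopes}, which can exclude cliques of the graph, exactly as in the triangle example. (It also silently fails for singletons $V=\{i\}\notin S^*$, where no pair $i\neq j$ exists.)

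The gap is repairable, and the repair actually simplifies the argument: work factor by factor rather than pair by pair. Write $\ln P(y) = \sum_{D\in S}\ln\phi_D(y_D) - \ln Z$ and recall (as you correctly observe) that $b_V$ is the iterated difference operator $\prod_{k\in V}\Delta_k$ applied to $\ln P$ and evaluated at $x^*$, and that the $\Delta_k$ commute and are linear. For each $D\in S$, the assumption $V\not\subseteq D$ gives some $k_D\in V\setminus D$; since $\ln\phi_D$ does not depend on coordinate $k_D$, $\Delta_{k_D}$ annihilates it, hence so does the full product $\prod_{k\in V}\Delta_k$. The constant $-\ln Z$ is likewise annihilated because $V\neq\emptyset$. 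Summing over $D$ gives $b_V=0$ for every nonempty $V\notin S^*$, which is the statement you actually need to collapse the Stage-1 product to $S^*$. With that substitution your proof is complete and is essentially the canonical-parametrisation argument of the cited source.
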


There are more canonical factors than original factors, so it might seem 
that there are more parameters in the canonical parametrisation than in the 
original parametrisation. However, many values in the canonical factors 
turn out to be ones. We can select the queries corresponding to non-one 
canonical factor values to obtain a set of queries with the same information 
as the original queries, but without linear 
dependencies~\autocite{koller2009probabilistic}. We call this set of queries 
the \emph{canonical queries}.

Many of the canonical factor scopes are subsets of the original factor 
scopes, so using the canonical queries as is would introduce new 
marginal query sets and potentially increase the sensitivity of the queries. As all 
of the new queries are sums of existing queries, we can replace each new query 
with the old queries that sum to the new query, and use the same $\theta$
value for all of the added queries to preserve identifiability.
After the replacements, the queries are a subset of the original non-canonical 
queries, so their sensitivity is at most the original sensitivity.
If one of the added queries was already included, it does not need to be 
added again, because two instances of a single query can be collapsed into 
a single instance with it's own parameter value. Because of this, we did not 
need to fix the $\theta$ values of any queries to the same value in 
the settings we studied.

\section{NAPSU-MQ VS. PGM}\label{app:napsu-mq-vs-pgm}

The PGM algorithm~\autocite{mckennaGraphicalmodelBasedEstimation2019} generates 
synthetic data based on the same marginal queries $a$ and noise addition as 
NAPSU-MQ.  PGM also models the original data using the $\medl$ distribution. 
Unlike NAPSU-MQ, PGM finds the parameters $\theta$ by minimising the $l_2$-distance 
$||\privs - n\mu(\theta)||_2$ between the observed noisy query values 
$\privs$ and the expected query values $n\mu(\theta) = nE_{x\sim \medl}(a(x))$
In the following, we'll replace the 
query values $s$ and $\privs$ that are summed over datapoints with 
$u = \frac{s}{n}$ and $\privu = \frac{\privs}{n}$ that represent mean query 
values over datapoints. Then the PGM objective is equivalent to 
$||\privu - \mu(\theta)||_2$.

We can view the PGM minimisation problem as a maximum likelihood estimation 
in the NAPSU-MQ probabilistic model
\begin{align}
  \xreal \sim \medl^n, \quad
  s = a(\xreal), \quad
  \privs \sim \caln(s, \sigmadp^2I), \label{eq:napsu-mq-model-appendix}
\end{align}
where we replace normal approximation that NAPSU-MQ uses with a 
law of large numbers approximation. Specifically, first replace 
$s$ with $u$ in \eqref{eq:napsu-mq-model-appendix}:
\begin{align}
  \xreal \sim \medl^n, \quad
  u = \frac{a(\xreal)}{n}, \quad
  \privu \sim \caln(u, \sigmadp^2I / n^2).\label{eq:pgm-model-appendix}
\end{align}
Because $u$ is a mean of sufficient statistics for individual datapoints,
by the law of large numbers, asymptotically $u \sim \delta_{\mu(\theta)}$.
With this approximation, the probabilistic model is 
\begin{align}
  u \sim \delta_{\mu(\theta)}, \quad
  \privu \sim \caln(u, \sigmadp^2I / n^2).
\end{align}
$u$ can be marginalised from the likelihood of this model:
\begin{align}
  p(\privu | \theta) &= \int p(\privu, u | \theta)\dx u
  \\&= \int p(\privu | u)p(u | \theta) \dx u
  \\&= \int \caln(\privu | u, \sigmadp^2I/n^2)\delta_{\mu(\theta)}(u) \dx u
  \\&= \caln(\privu | \mu(\theta), \sigmadp^2I / n^2)
\end{align}
The marginalised log-likelihood is then 
\begin{equation}
  \ln p(\privu | \theta) = -\frac{n^2}{\sigmadp^2}||\privu - \mu(\theta)||_2^2 + \text{constant},
\end{equation}
so maximising the log-likelihood is equivalent to minimising the PGM objective.

If we made a normal approximation instead of the law of large numbers 
approximation in \eqref{eq:pgm-model-appendix}, we would get
\begin{align}
  \privu \sim \caln(\mul, \Sigmal/n + \sigmadp^2I/n^2),
\end{align}
so maximising the likelihood is still possible. Unlike PGM, this maximum likelihood 
objective includes the covariance $\Sigma(\theta)$. We leave any comparisons 
between maximising this objective and PGM to future work.

\section{HYPERPARAMETERS}\label{app:hyperparams}

\paragraph{NAPSU-MQ}
The hyperparameters of NAPSU-MQ are the choice of prior, choice of inference algorithm,
and the parameters of that algorithm. For the toy data experiment, we used 
the Laplace approximation for inference, which approximates the posterior with 
a Gaussian centered at the maximum aposteriori estimate (MAP).
We find the MAP
for the Laplace approximation with the LBFGS optimisation algorithm, which we 
run until the loss improves by less than $10^{-5}$ in an iteration, up to 
a maximum of 500 iterations. Sometimes LBFGS failed to converge, which we 
detect by checking if the 
loss increased by over 1000 in one iteration, and fix by restarting optimisation 
from a different starting point. We also restarted if the maximum number of 
iterations was reached without convergence. For the vast majority of runs, no 
restarts were needed, and at most 3 were needed.

For the Adult experiment, we used NUTS~\autocite{hoffmanNoUTurnSamplerAdaptively2014}.
We ran 4 chains of 800 warmup samples and 2000 kept samples. We set the 
maximum tree depth of NUTS to 12. We normalised the posterior using the 
mean and covariance from the Laplace approximation.
For the Laplace approximation, we used the same hyperparameters as with the toy data 
set, except we set the maximum number of iterations to 6000.

For the prior, we used a Gaussian distribution with mean 0 and standard deviation
10 for all components, without dependencies between components, for both 
experiments.

\paragraph{PGM and Repeated PGM}
PGM finds the $\medl$ parameters $\theta$ that minimise the $L_2$-error 
between the expected query values and the noisy query values. The PGM 
implementation offers several algorithms for this optimisation problem, 
but we found that the default algorithm (mirror descent) and number of 
iterations works well for both experiments.

\paragraph{RAP}
RAP minimises the error on the selected queries of a continuous relaxation 
of the discrete synthetic dataset. After the optimal relaxed synthetic dataset 
is found, a discrete synthetic dataset is constructed by sampling. This gives 
two hyperparameters that control the size of the synthetic data: the size of 
the continuous dataset, and the number of samples for each datapoint in the 
continuous relaxation. We set the size of the continuous dataset to 1000 
for both experiments, as recommended by the 
paper~\autocite{aydoreDifferentiallyPrivateQuery2021}.
For the Adult data experiment, we set the number of samples per datapoint to 
46, so that the total size of the synthetic dataset is close to the 
size of the original dataset. For the toy data experiment, we set the 
number of samples per datapoint to 50. The RAP 
paper~\autocite{aydoreDifferentiallyPrivateQuery2021} finds that much smaller 
values are sufficient, but higher values should only increase accuracy.

In both cases, we weight the 
synthetic datapoints by $\frac{n}{n_{Syn}}$ before the downstream logistic 
regression to ensure that the logistic regression does not over- or underestimate 
variances because of a different sample size from the original data.

RAP also has two other hyperparameters that are relevant in our experiments: 
the number of iterations and the learning rate for the query error minimisation.
After preliminary runs, we set the learning rate at 0.1 for both experiments,
and set the number of iterations to 5000 for the toy data experiment, and 
10000 for the Adult data experiment.

\paragraph{PEP}
PEP has two hyperparameters: the number of iterations used to find 
a distribution with maximum entropy that has approximately correct query  
values, and the allowed bound on the difference of the query values.
The PEP implementation hardcodes the allowed difference to 0.
We set the number of iterations to 1000 after preliminary runs for both 
experiments.

\paragraph{PrivLCM}
PrivLCM samples the posterior of a Bayesian latent class model, where the 
number of classes in limited to make inference tractable. The model has 
hyperparameters for the prior, and the number of latent classes. We leave the 
prior hyperparameters to their defaults, and set the number of latent classes 
to 10, which the PrivLCM authors used in a 5-dimensional binary data 
experiment~\autocite{nixonLatentClassModeling2022}. The remaining hyperparameter 
of PrivLCM is the number of posterior samples that are obtained. To keep 
the runtime of PrivLCM manageable, we set the number of samples to 500 after 
ensuring that the lower number of samples did not degrade the accuracy of 
the estimated probabilities for the joint distribution compared to using the 
default of 5000 samples.

\paragraph{PrivBayes}
As we focus on synthetic data generation, we did not use the query selection 
features of PrivBayes in the toy data experiment, and instead set the 
queries according to the Bayesian network of the data generating process.
Denoting the three components of the data as $X = (X_1, X_2, X_3)$,
the queries are the full set of 1-way marginals on $X_1$, the full set of 
2-way marginals on $(X_1, X_2)$, and the full set of 3-way marginals on 
all three variables.

\section{ADULT EXPERIMENT DETAILS AND EXTRA RESULTS}\label{app:adult-details}

We include the columns Age, Workclass, Education, Marital Status, Race, Gender,
Capital gain, Capital loss, Hours per week and Income of the Adult dataset, 
and discard the rest to remove redundant columns and keep computation times 
manageable. We discretise Age and Hours per week to 5 buckets, and discretise 
Capital gain and Capital loss to binary values indicating whether their value 
is positive. The Income column is binary from the start, and indicates 
whether a person has an income $>\$50\,000$. 

In the downstream logistic regression, we use income as the dependent variable,
and Age, Race and Gender as independent variables. Age is 
transformed back to a continuous value for the logistic regression by 
picking the middle value of each discretisation bucket. We did not use all variables 
for the downstream task, as a smaller set of variables allows including the 
relevant marginals for synthetic data generation. 
The regularisation for the logistic regression is $l_2$ with a very small 
multiplier of $10^{-5}$. When the regularisation is used, variances are 
estimated with bootstrapping using 50 bootstrap samples. 

All of the Adult experiment figures, except 
Figures~\ref{fig:adult-calibration-widths-nonregularised} and 
\ref{fig:dropped-datasets} use the small regularisation term. 
Figure~\ref{fig:adult-calibration-widths-nonregularised} shows the results with the 
trick of removing large variance estimates $(\geq 10^3)$, and 
Figure~\ref{fig:dropped-datasets} shows how many estimates were removed.

For the input queries, we include the 
2-way marginals with Hours per week and each of the independent variables 
Age, Race and Gender 
and income, as well as the 2-way marginal between Race and Gender.
The rest of the queries were selected with the MST 
algorithm~\autocite{mckennaWinningNISTContest2021a}. For MST, we used 
$\epsilon = 0.5$ and $\delta = \frac{1}{n^2} \adultdelta$, 
but we do not include this in our figures, as we focus on 
the synthetic data generation, not query selection.
The selected queries are shown in Figure~\ref{fig:queries}. The selection is 
very stable: in 100 repeats of query selection, these queries were selected 
99 times.

We chose the number of generated synthetic datasets for NAPSU-MQ and 
the number of repeats for repeated PGM by comparing the results of the 
Adult experiment for different choices. The results are shown in 
Figure~\ref{fig:max-ent-m-comparison} for NAPSU-MQ and 
Figure~\ref{fig:pgm-repeats-m-comparison} for repeated PGM. We chose 
$m = 100$ for NAPSU-MQ because it produces the narrowest confidence 
intervals, and $m = 5$ repeats for repeated PGM because it had the best 
calibration overall.

\begin{figure}
  \centering
  \includegraphics[width=\linewidth]{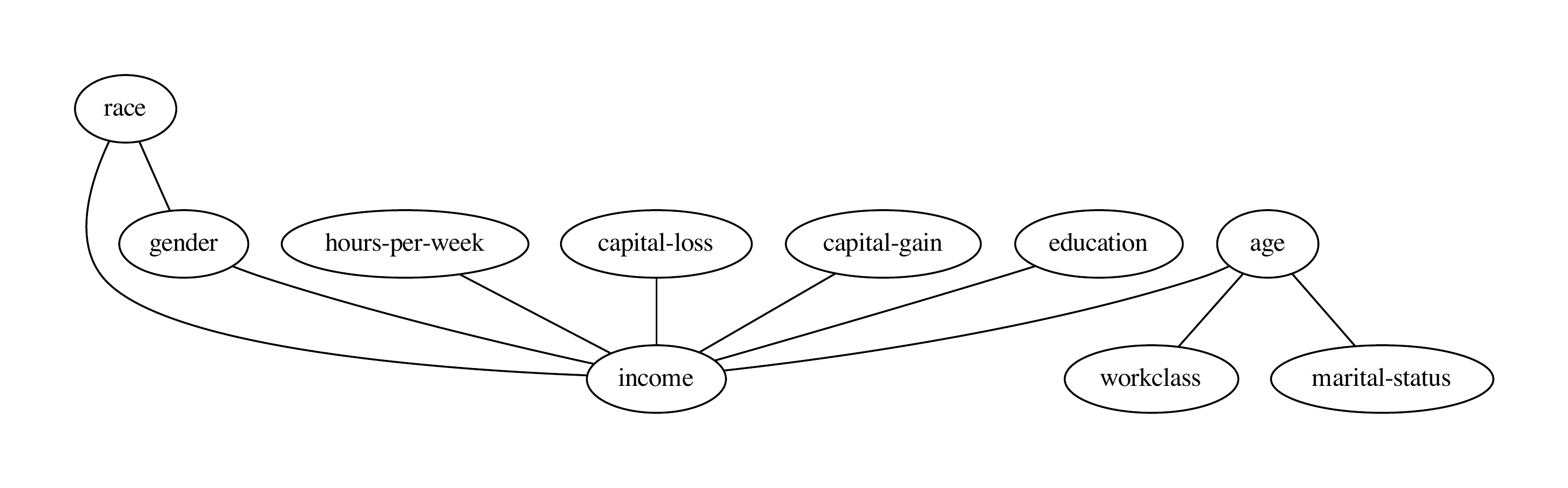}
  \caption{
    Markov network of selected queries for the Adult experiment. Each edge 
    in the graph represents a selected 2-way marginal.
  }
  \label{fig:queries}
\end{figure}

\begin{figure}[h]
  \center
  \includegraphics[width=\linewidth]{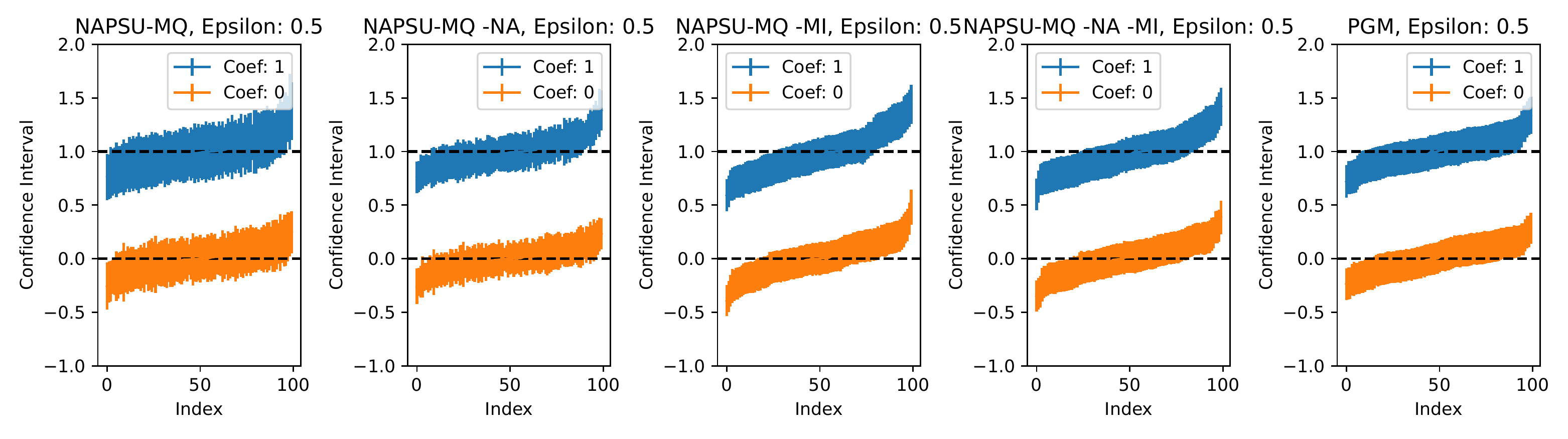}
  \caption{Ablation Study Confidence Intervals.}
  \label{fig:ablation-conf-ints}
\end{figure}

\begin{figure}[h]
  \centering
  \includegraphics[width=\linewidth]{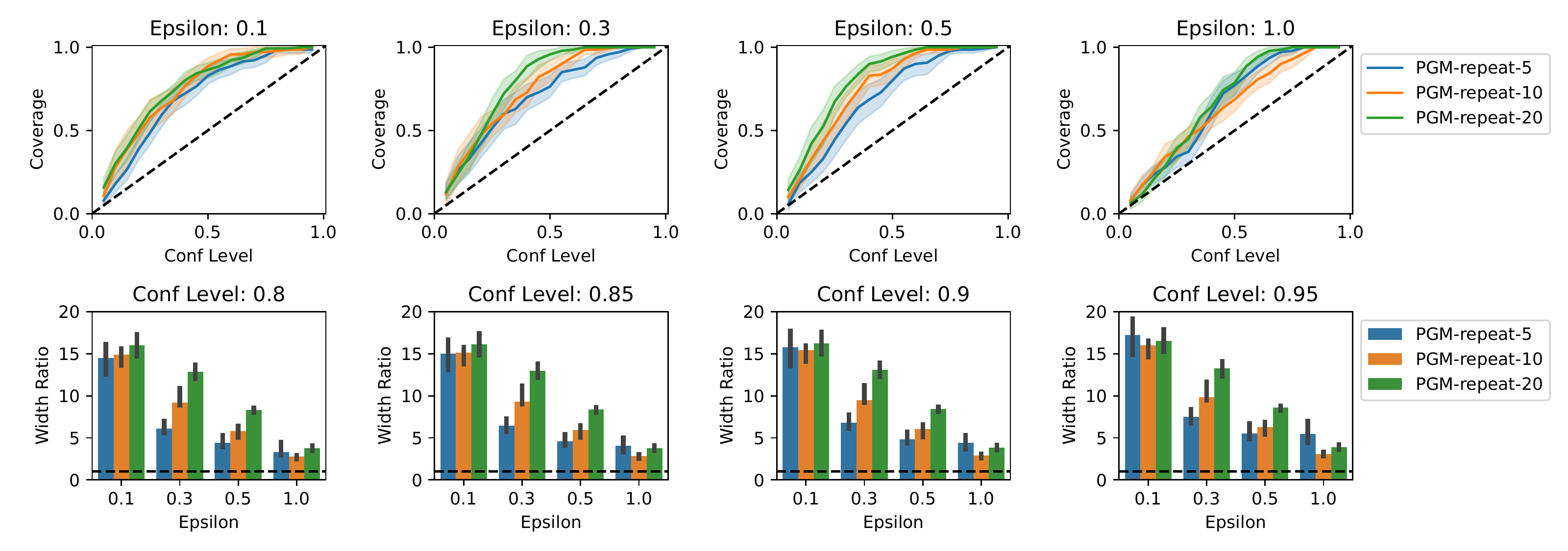}
  \caption{
    Comparison of different numbers of repetitions for repeated PGM on the Adult 
    dataset with regularisation.
    We chose $m = 5$ repeats to represent repeated PGM in the main experiment, 
    although the differences between the numbers of repeats are small.
  }
  \label{fig:pgm-repeats-m-comparison}
\end{figure}

\begin{figure}
  \centering
  \includegraphics[width=\linewidth]{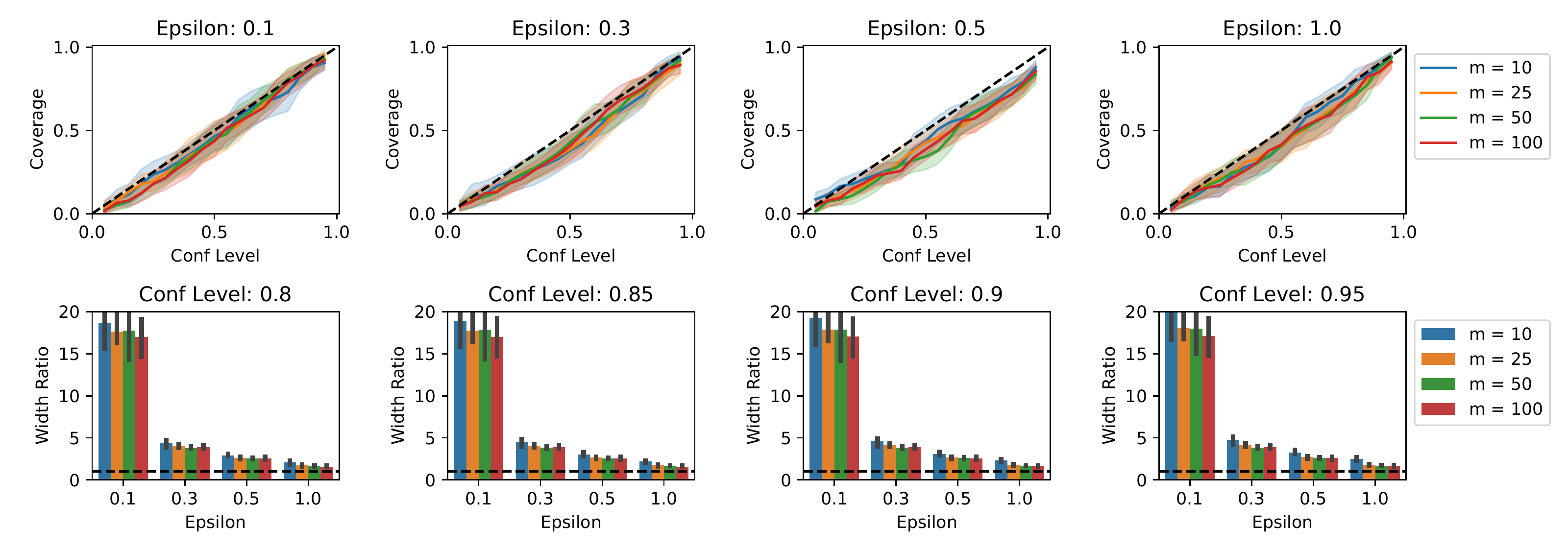}
  \caption{
    Comparison of different numbers of generated synthetic datasets for 
    NAPSU-MQ on the Adult dataset with regularisation. The differences are 
    small, but $m = 100$ synthetic datasets
    produces the narrowest intervals, so we chose it for the main experiment.
  }
  \label{fig:max-ent-m-comparison}
\end{figure}

\begin{figure}
  \centering
  \includegraphics[width=\textwidth]{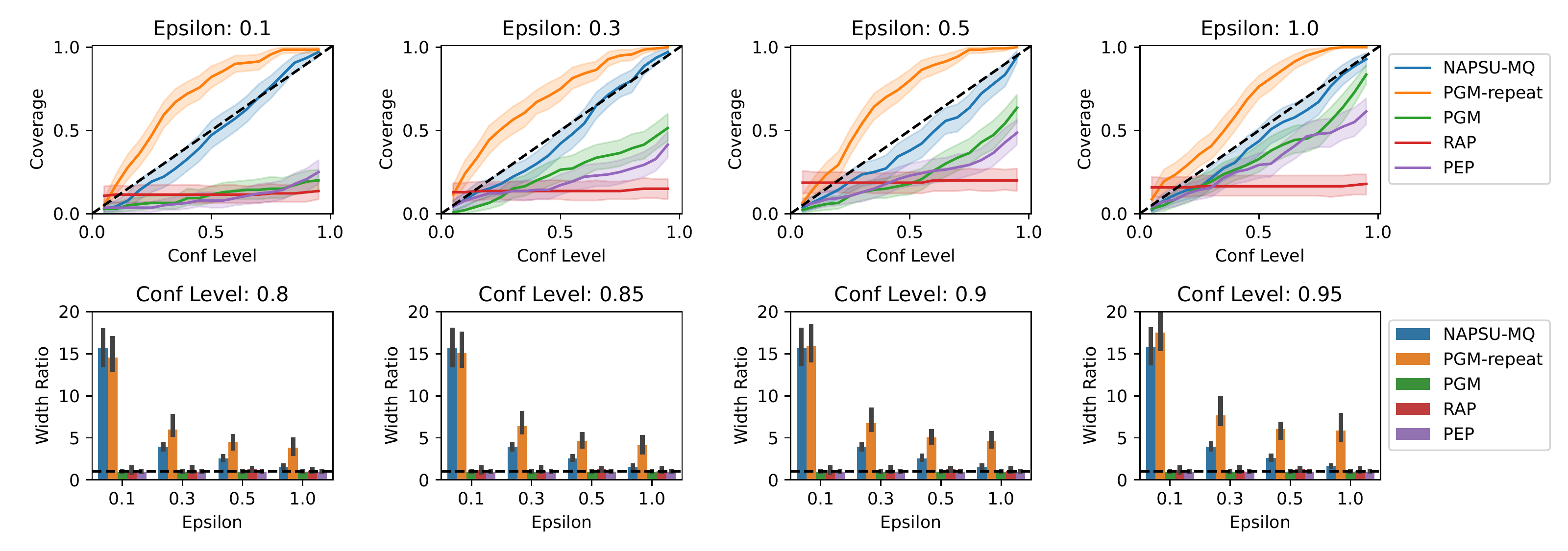}
  \caption{
    Results from the Adult data experiment with the trick of dropping large 
    variances in the logistic regression 
    instead of adding a small regularisation term. The results are almost 
    identical to Figure~\ref{fig:adult-calibration-widths}, except for RAP, 
    which suffers from the regularisation. 
  }
  \label{fig:adult-calibration-widths-nonregularised}
\end{figure}

\begin{figure}
  \centering
  \begin{subfigure}{0.48\textwidth}
    \centering
    \includegraphics[width=\textwidth]{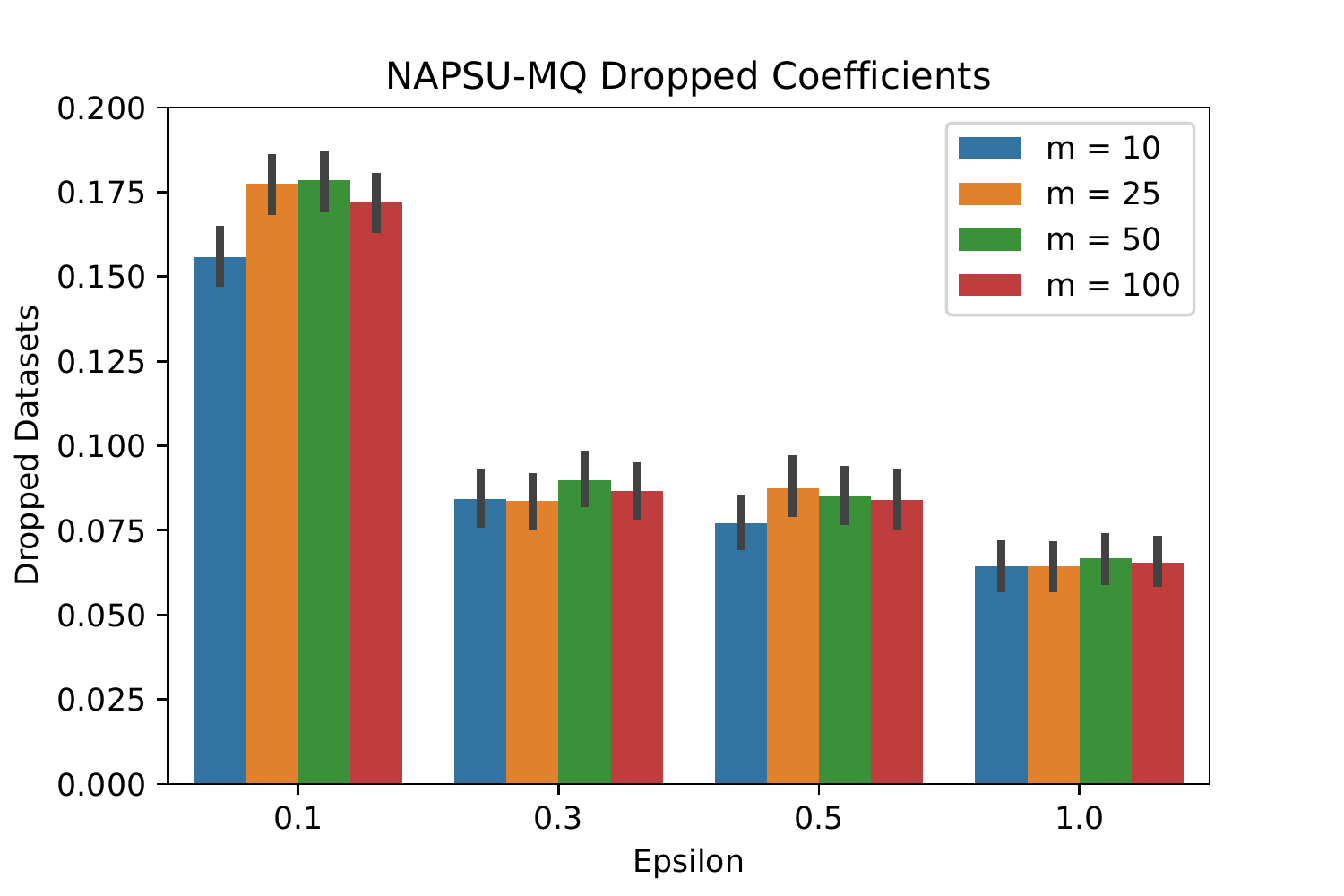}
    \caption{}
    \label{fig:napsu-mq-dropped-datasets}
  \end{subfigure}
  \begin{subfigure}{0.48\textwidth}
    \centering
    \includegraphics[width=\textwidth]{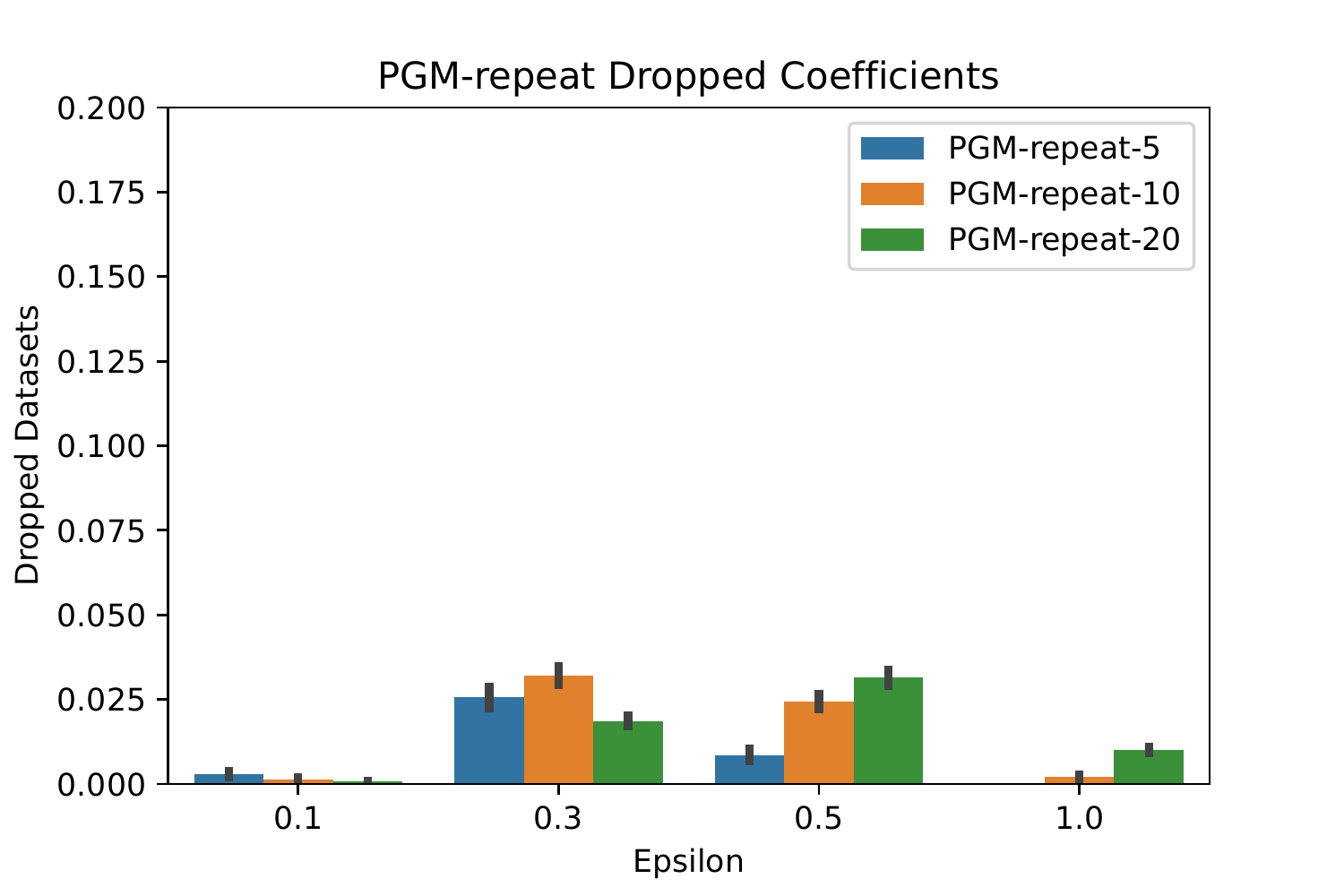}
    \caption{}
    \label{fig:pgm-dropped-datasets}
  \end{subfigure}
  \caption{
    The fraction of coefficients dropped before Rubin's rules because of very 
    high estimated variances from the downstream logistic regression in the 
    Adult data experiment for NAPSU-MQ in (a) and PGM-repeat in (b).
  }
  \label{fig:dropped-datasets}
\end{figure}

\begin{figure}
  \centering
  \includegraphics[width=\textwidth]{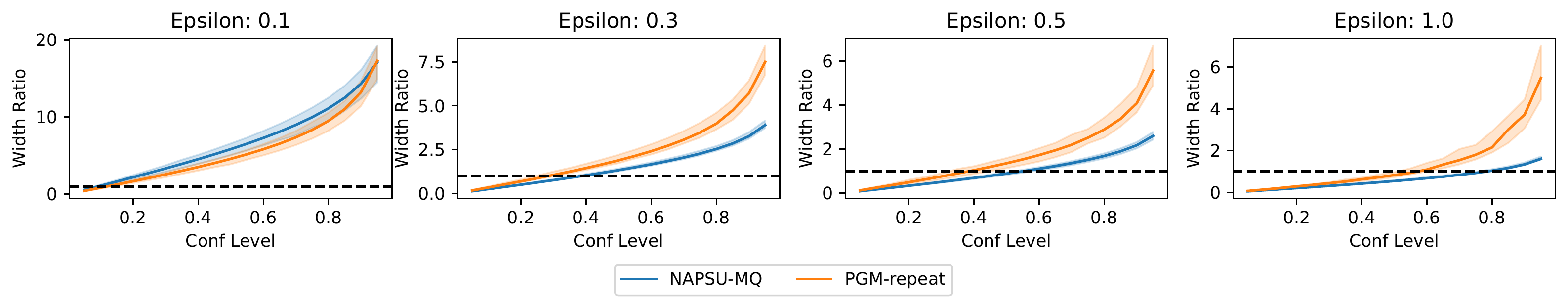}
  \caption{
    The tradeoff between the confidence level for DP confidence intervals and 
    the width of the intervals on the Adult dataset with regularisation. 
    The width ratio on the y-axis is with regards to 
    the original 95\% confidence interval, for all confidence levels, so the 
    plot shows how much must the confidence level drop to obtain the same 
    width from a DP confidence interval as a non-DP one. The horizontal line
    at $y = 1$ shows this point. For $\epsilon = 1$, the confidence 
    level for NAPSU-MQ must be dropped to about 75\%, and for PGM-repeat, 
    it must be dropped to about 50\%.
  }
  \label{fig:conf_level_width_95_tradeoff}
\end{figure}

\begin{figure}
  \centering 
  \includegraphics[width=\textwidth]{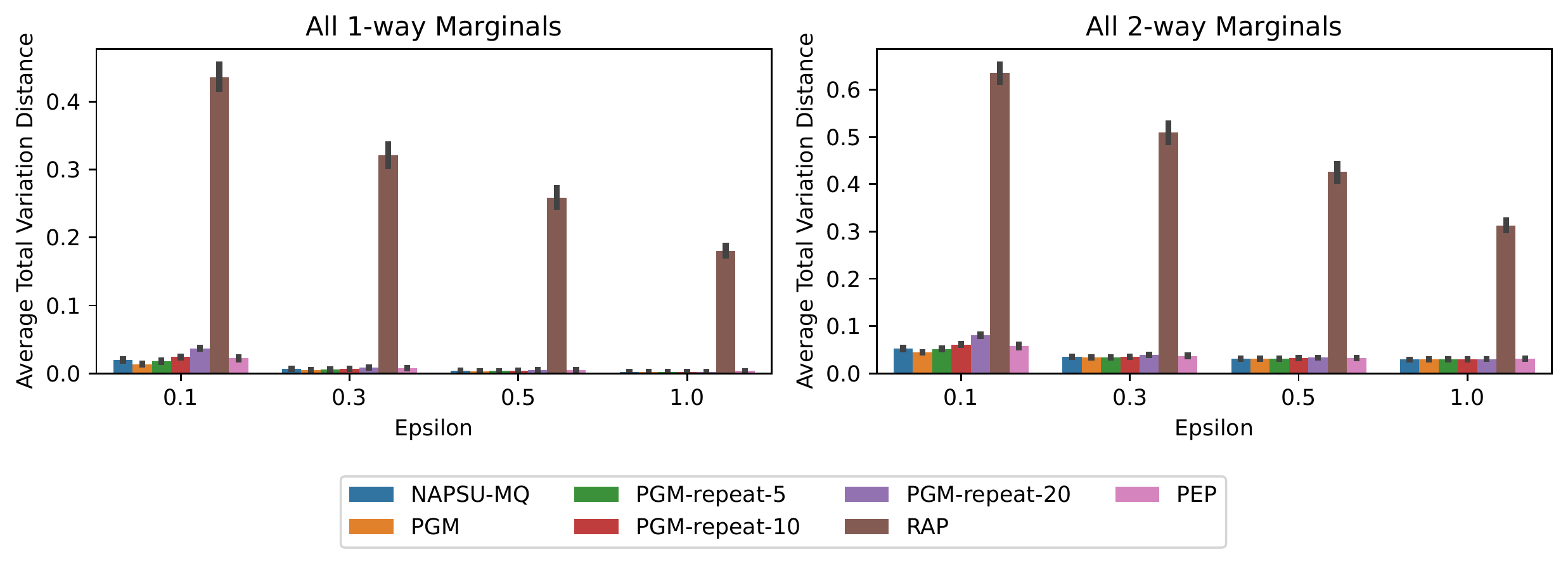}
  \caption{
    Comparison of marginal query accuracy for Adult data. 
    NAPSU-MQ is almost as accurate as PGM for all values of $\epsilon$, and 
    is on par with PGM-repeat. The panels 
    show the average total variation distance of all 1-way marginal 
    distributions (left) or all 2-way marginal distributions (right) between 
    the original discretised data and synthetic data, averaged over 20 repeats.
    For NAPSU-MQ and PGM-repeat-$m$, the synthetic marginal distributions were 
    estimated by averaging over $m$ synthetic datasets, with $m = 100$ for 
    NAPSU-MQ.
  }
  \label{fig:adult_marginal_accuracy}
\end{figure}

\begin{table}
\centering
\caption{
Runtimes of each inference run for the Adult experiment. Does not include the time 
taken to generate synthetic data, or run any downstream analysis. The LA rows record the 
runtime for obtaining the Laplace approximation for NAPSU-MQ that is used in the NUTS inference, so the 
total runtime for a NAPSU-MQ run with NUTS is the sum of the LA and NUTS rows.
Experiments were run on 4 CPU cores of a cluster.
}
\label{tab:runtime}
\begin{tabular}{llll}
\toprule
    &     &             Mean & Standard Deviation \\
Algorithm & Epsilon &                  &                    \\
\midrule
\multirow{4}{*}{LA} & 0.1 &       2 min 53 s &             18.5 s \\
    & 0.3 &       3 min 53 s &             29.4 s \\
    & 0.5 &       3 min 38 s &             35.0 s \\
    & 1.0 &       3 min 25 s &             25.5 s \\
\cline{1-4}
\multirow{4}{*}{NUTS} & 0.1 &   9 h 59 min 6 s &             6506 s \\
    & 0.3 &  7 h 33 min 28 s &             2701 s \\
    & 0.5 &  4 h 57 min 40 s &             3185 s \\
    & 1.0 &  3 h 51 min 34 s &             1274 s \\
\cline{1-4}
\multirow{4}{*}{PEP} & 0.1 &       6 min 50 s &             25.4 s \\
    & 0.3 &       7 min 18 s &             31.2 s \\
    & 0.5 &        7 min 0 s &             33.1 s \\
    & 1.0 &        7 min 7 s &             33.7 s \\
\cline{1-4}
\multirow{4}{*}{PGM} & 0.1 &             15 s &              0.5 s \\
    & 0.3 &             17 s &              1.5 s \\
    & 0.5 &             15 s &              0.4 s \\
    & 1.0 &             15 s &              0.6 s \\
\cline{1-4}
\multirow{4}{*}{PGM-repeat-10} & 0.1 &       2 min 35 s &              3.3 s \\
    & 0.3 &       2 min 53 s &             13.0 s \\
    & 0.5 &       2 min 37 s &              5.0 s \\
    & 1.0 &       2 min 36 s &              4.4 s \\
\cline{1-4}
\multirow{4}{*}{PGM-repeat-20} & 0.1 &       5 min 15 s &             10.9 s \\
    & 0.3 &       5 min 58 s &             28.4 s \\
    & 0.5 &       5 min 10 s &             10.2 s \\
    & 1.0 &       5 min 13 s &             12.6 s \\
\cline{1-4}
\multirow{4}{*}{PGM-repeat-5} & 0.1 &       1 min 17 s &              2.7 s \\
    & 0.3 &       1 min 28 s &              6.7 s \\
    & 0.5 &       1 min 18 s &              2.6 s \\
    & 1.0 &       1 min 18 s &              1.9 s \\
\cline{1-4}
\multirow{4}{*}{RAP} & 0.1 &             32 s &              2.4 s \\
    & 0.3 &             34 s &              2.2 s \\
    & 0.5 &             32 s &              2.1 s \\
    & 1.0 &             31 s &              2.1 s \\
\bottomrule
\end{tabular}
\end{table}

\section{US CENSUS DATA EXPERIMENT}\label{app:us-census-experiment}

We conducted an additional experiment on US Census data from the 
UCI repository~\autocite{meekUSCensusData2001}. We limited the data to individuals 
who have served in the US Military, and picked 9 columns\footnote{
  The columns are dYrsserv, iSex, iVietnam, iKorean, iMilitary, dPoverty, 
  iMobillim, iEnglish and iMarital.
}, most relating to military service. Even this subset of the data 
is large, with $n = 320\,754$. All columns are discrete, and have $10\,800$
possible values, much fewer than the Adult experiment. 

As the downstream task,
we use logistic regression with dPoverty as the dependent variable and 
iSex, iKorean, iVietnam and iMilitary as the independent variables. dPoverty 
has three categories, so we combine the two categories denoting people below 
the powerty line to make the dependent variable binary for the logistic 
regression, but not synthetic data generation.

As our queries we use 4 three-way marginals covering the independent and 
dependent variables, and 3 two-way marginals that include the other variables 
that are synthesised, but not included in the regression. As the published 
implementation of RAP~\autocite{aydoreDifferentiallyPrivateQuery2021} does not 
support a mix of two- and three-way marginals, we replace the two-way marginals 
with three-way marginals for RAP. As in the Adult experiment, we set 
$\delta = n^{-2} \uscensusdelta$, and vary $\epsilon$.

As in the adult experiment, we use $n_{Syn} = n$ for all algorithms except RAP.
For PGM-repeat and NAPSU-MQ, we choose $m$ with a preliminary experiment.
For NAPSU-MQ, we set $m = 100$, although the differences between the choices 
are not large. For PGM-repeat, we set $m = 10$. We set the other hyperparameters 
for all algorithms after testing runs to the same values used in the Adult 
experiment, except we increased the number of optimisation iterations 
for PGM to 10\,000 from the default of 1000, the number of iterations for PEP to 
10\,000 from 1000, and increased the number of 
kept samples in NUTS to 4000 for NAPSU-MQ.
We did not use the trick of dropping estimates with very high variances, 
or using very small regularisation in the logistic regression with the US 
Census data.

The results are shown in Figure~\ref{fig:us-census-coverage-lengths}. 
While PGM is calibrated with $\epsilon \geq 0.3$, it is severely overconfident with 
$\epsilon = 0.1$. This is likely caused by the large size of the dataset: at 
larger values of $\epsilon$, there is little noise compared to the large 
sample size, while at $\epsilon = 0.1$, the noise has a clear effect.

NAPSU-MQ and PGM-repeat are able to produce calibrated results at 
$\epsilon = 0.1$. Of these, NAPSU-MQ produces clearly narrower confidence 
intervals for all values of $\epsilon$.

Figure~\ref{fig:us-census-marginal-accuracy} shows the accuracies of the 
produced synthetic datasets on all 1-way and 2-way marginal queries
for the algorithms. As with the Adult dataset, shown in 
Figure~\ref{fig:adult_marginal_accuracy}, NAPSU-MQ is almost as accurate as 
PGM, and is equally accurate as PGM-repeat. For $\epsilon \geq 0.5$, all of the 
aforementioned algorithms are almost as accurate. RAP and PEP are nowhere close 
to these algorithms in accuracy, having errors that are several times larger 
than the other algorithms.

For some reason, PEP fails completely with this dataset. We are not sure what 
causes this, as the algorithm should work in this setting as well as it did 
with the Adult dataset, and the size of the dataset should not be an issue.

The runtimes for each algorithm are shown in Table~\ref{tab:runtime-us-census}.
The difference between PGM-repeat and NAPSU-MQ is much smaller than in the 
Adult data experiment, but is still large. 

\begin{figure}
  \centering
  \includegraphics[width=\textwidth]{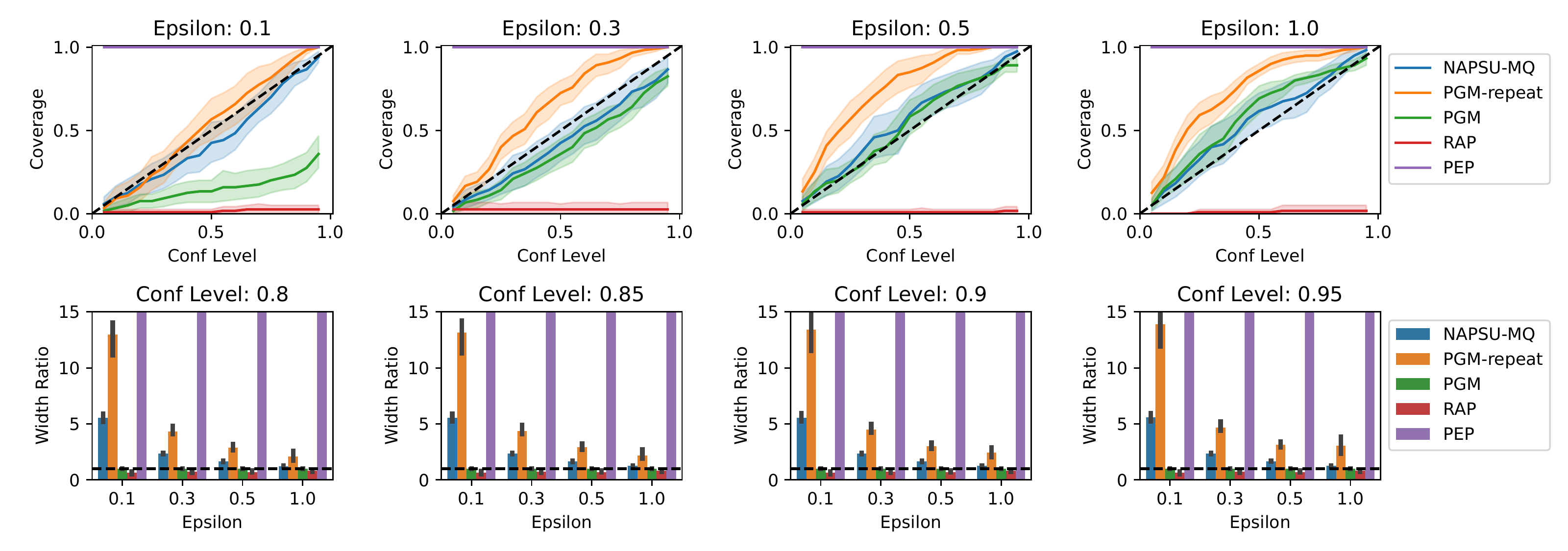}
  \caption{
    Results for the US Census experiment, showing that only NAPSU-MQ and 
    PGM-repeat are calibrated for all values of $\epsilon$, and NAPSU-MQ 
    produces significantly narrower confidence intervals than 
    PGM-repeat. Like Figure~\ref{fig:adult-calibration-widths}, the top 
    row shows the mean coverage over all coefficients and 20 runs for 
    different confidence levels. The bottom row shows median confidence interval 
    widths divided by real data confidence interval widths.
    $\delta \uscensusdelta$ in all panels.}
  \label{fig:us-census-coverage-lengths}
\end{figure}

\begin{figure}
  \centering
  \includegraphics[width=\textwidth]{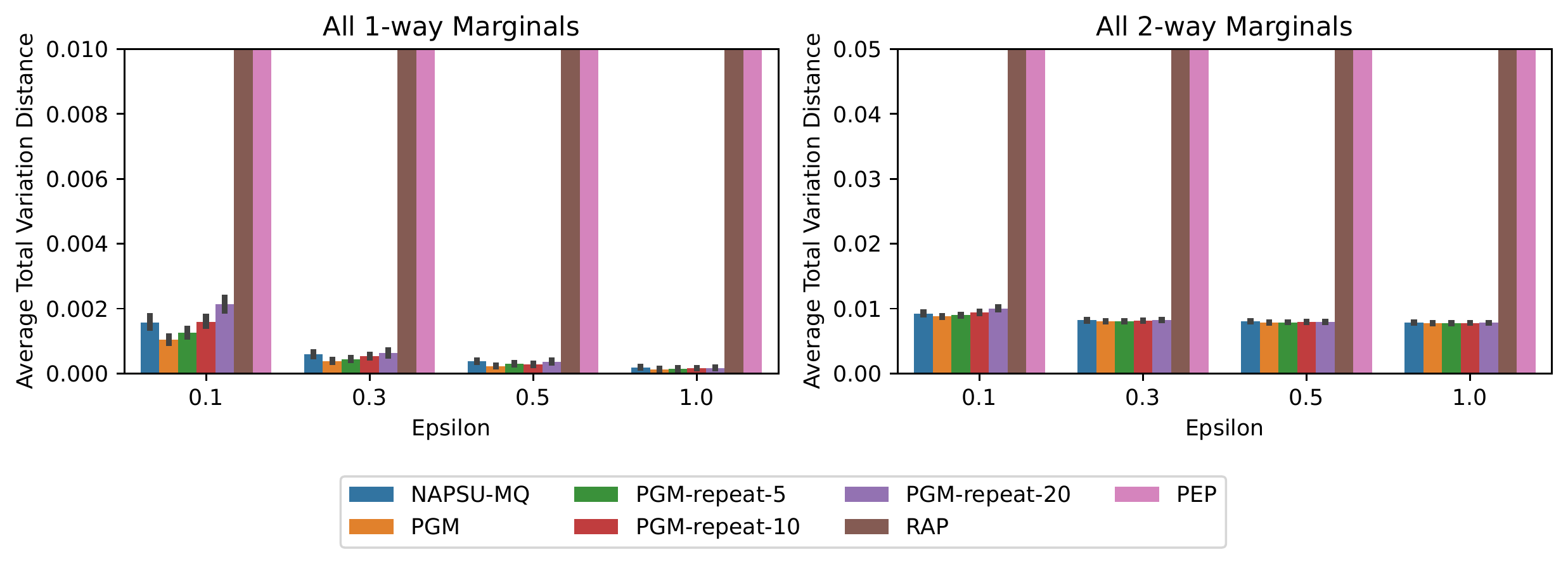}
  \caption{
    Comparison of marginal query accuracy for US Census data. 
    NAPSU-MQ is almost as accurate as PGM for all values of $\epsilon$, and 
    is on par with PGM-repeat, as with the Adult data in 
    Figure~\ref{fig:adult_marginal_accuracy}. The panels 
    show the average total variation distance of all 1-way marginal 
    distributions (left) or all 2-way marginal distributions (right) between 
    the original discretised data and synthetic data, averaged over 20 repeats.
    For NAPSU-MQ and PGM-repeat-$m$, the synthetic marginal distributions were 
    estimated by averaging over $m$ synthetic datasets, with $m = 100$ for 
    NAPSU-MQ. RAP and PEP have average total variation distances over 0.1 for 
    both 1-way and 2-way marginals for all values of $\epsilon$.
  }
  \label{fig:us-census-marginal-accuracy}
\end{figure}

\begin{table}
\centering
\caption{
Runtimes of each inference run for the US Census experiment. Does not include the time 
taken to generate synthetic data, or run any downstream analysis. The LA rows record the 
runtime for obtaining the Laplace approximation for NAPSU-MQ that is used in the NUTS inference, so the 
total runtime for a NAPSU-MQ run with NUTS is the sum of the LA and NUTS rows.
Experiments were run on 4 CPU cores of a cluster.
}
\label{tab:runtime-us-census}
\begin{tabular}{llll}
\toprule
    &     &             Mean & Standard Deviation \\
Algorithm & Epsilon &                  &                    \\
\midrule
\multirow{4}{*}{LA} & 0.1 &        2 min 2 s &             76.8 s \\
    & 0.3 &       1 min 35 s &             23.8 s \\
    & 0.5 &       1 min 51 s &             44.0 s \\
    & 1.0 &       1 min 51 s &             44.8 s \\
\cline{1-4}
\multirow{4}{*}{NUTS} & 0.1 &  3 h 32 min 25 s &             2836 s \\
    & 0.3 &  1 h 57 min 45 s &              989 s \\
    & 0.5 &  1 h 31 min 15 s &              951 s \\
    & 1.0 &    1 h 8 min 6 s &              477 s \\
\cline{1-4}
\multirow{4}{*}{PEP} & 0.1 &             17 s &              0.6 s \\
    & 0.3 &             18 s &              1.0 s \\
    & 0.5 &             17 s &              0.4 s \\
    & 1.0 &             17 s &              0.5 s \\
\cline{1-4}
\multirow{4}{*}{PGM} & 0.1 &       1 min 57 s &              2.8 s \\
    & 0.3 &       1 min 58 s &              3.2 s \\
    & 0.5 &       1 min 57 s &              4.2 s \\
    & 1.0 &       1 min 57 s &              3.2 s \\
\cline{1-4}
\multirow{4}{*}{PGM-repeat-10} & 0.1 &      19 min 22 s &             28.0 s \\
    & 0.3 &      19 min 18 s &             21.9 s \\
    & 0.5 &      19 min 36 s &             33.9 s \\
    & 1.0 &      19 min 25 s &             22.0 s \\
\cline{1-4}
\multirow{4}{*}{PGM-repeat-20} & 0.1 &      38 min 38 s &             50.2 s \\
    & 0.3 &      38 min 59 s &             37.5 s \\
    & 0.5 &      38 min 57 s &             40.5 s \\
    & 1.0 &      38 min 39 s &             74.7 s \\
\cline{1-4}
\multirow{4}{*}{PGM-repeat-5} & 0.1 &       9 min 45 s &             17.6 s \\
    & 0.3 &       9 min 38 s &              8.9 s \\
    & 0.5 &       9 min 45 s &             12.8 s \\
    & 1.0 &       9 min 43 s &              9.8 s \\
\cline{1-4}
\multirow{4}{*}{RAP} & 0.1 &             28 s &              2.4 s \\
    & 0.3 &             28 s &              2.0 s \\
    & 0.5 &             27 s &              1.2 s \\
    & 1.0 &             27 s &              3.5 s \\
\bottomrule
\end{tabular}
\end{table}

\end{document}